\documentclass{article}

\PassOptionsToPackage{numbers,sort&compress}{natbib}

\usepackage[preprint]{neurips_2026}
\makeatletter
\renewcommand{\@noticestring}{Preprint. \today.}
\makeatother

\usepackage[utf8]{inputenc}
\usepackage[T1]{fontenc}
\usepackage{amsmath,amssymb,amsthm}
\usepackage{booktabs}
\usepackage{graphicx}
\graphicspath{{./}{../}}
\usepackage{hyperref}
\hypersetup{hidelinks}
\usepackage{url}
\usepackage{microtype}
\usepackage{xcolor}
\usepackage{multirow}
\usepackage{subcaption}
\usepackage{algorithm}
\usepackage{algpseudocode}

\newtheorem{proposition}{Proposition}
\newtheorem{corollary}{Corollary}
\newtheorem{lemma}{Lemma}
\newtheorem{definition}{Definition}
\newtheorem{remark}{Remark}

\newcommand{\Hl}{\mathcal{H}}
\newcommand{\Pl}{\mathcal{P}}
\newcommand{\Fl}{\mathcal{F}}
\newcommand{\rhat}{\hat{\mathbf{r}}}
\newcommand{\EL}{E_L}

\title{Diagnosing Spectral Ceilings in Equivariant Neural Force Fields}

\author{%
  Hyunmog Kim\thanks{Independent Researcher. Correspondence to: Hyunmog Kim \texttt{<hyunmog2020@kaist.ac.kr>}.}
}

\begin{document}

\maketitle

\begin{abstract}
We introduce a spectral-injection diagnostic for measuring which
angular frequencies a trained equivariant force-field backbone
preserves: inject a controlled angular-frequency perturbation into a
molecular force field, attach a lightweight Spectral Prediction
Network (SPN) to the frozen backbone, and read off which frequencies
are recoverable.  On aspirin, a quadratic SPN attached to an
$L\!=\!2$ NequIP backbone recovers the boundary signal at
$\ell\!=\!4$ but collapses at $\ell\!=\!5$: a $11.7\!\times$ cliff at
the predicted $d_r L$ boundary, with $\rho$ dropping from $0.913$
to $0.078$.  The same boundary-vs-above contrast persists across
$n\!=\!4$ independently trained backbones (raw-gain $\Delta$
contrast, hierarchical cluster bootstrap) and is corroborated by a
denominator-free injected-residual metric
($R^2_{\mathrm{inj}}(4)\!=\!0.374$ versus
$R^2_{\mathrm{inj}}(5)\!=\!0.006$).  A finite-degree span theorem
calibrates the diagnostic: for a single marked direction, degree-$d$
polynomials of degree-$L$ spherical-harmonic features span exactly
$\Hl_{\le dL}$ with multiplicity-one saturation at the boundary
(scoped to single-direction degree-bounded probes, not a
function-class upper bound on multi-atom MPNNs).  A synthetic $C_5$
calibration plus capacity, activation, and cross-architecture
controls rule out parameter count alone as the explanation.
\end{abstract}

\section{Introduction}
\label{sec:intro}

Picking the angular resolution $L$ of an equivariant backbone is
currently a practitioner's gamble.  Raising $L$ costs $O(L^3)$
(asymptotically $O(L^6)$) additional tensor-product work per
layer~\cite{passaro2023reducing}; lowering it may silently wall off
the angular frequencies the target property actually requires.
State-of-the-art SO(3)-equivariant networks---tensor field
networks~\cite{thomas2018tensor}, NequIP~\cite{batzner2022nequip},
MACE~\cite{batatia2022mace}, Allegro~\cite{musaelian2023allegro},
and Equiformer~\cite{liao2023equiformer}---represent atomic
environments in a spherical-harmonic (SH) basis truncated at
degree~$L$ (typically $L\!\in\!\{1,2,3\}$) and process them through
tensor-product (TP) layers.  Today the choice of $L$ is made by
empirical grid search.

The question beneath that search is: \emph{what angular frequencies
can a given architecture actually resolve, and what determines this
limit?}  For trained molecular force-field backbones, this
relationship has not been turned into a finite, falsifiable
diagnostic that separates readout degree, backbone fidelity, and
recoverable angular content.  We show that it \emph{can be}: the
boundary is sharp, falsifiable,
and visible as a cliff in a designed diagnostic (on aspirin,
recovery fraction collapses by $\mathbf{11.7\!\times}$ across a
single $\ell$-step, from $\rho(4)\!=\!0.913$ to $\rho(5)\!=\!0.078$,
95\% CIs well-separated, $n\!=\!5$).

The question has practical urgency beyond force fields: recent
protein-structure systems have shifted away from strict SO(3)
equivariance, making the question of which equivariant capacity a
task actually \emph{requires} increasingly empirical
(Appendix~\ref{app:proteins}).  Our diagnostic provides a principled
framework for measuring where such limitations arise before
committing to a costly architectural scaling pass.

\textbf{Contributions.}  The algebraic fact that degree-$d$
polynomials of degree-$L$ features reach angular content up to
$dL$ is implicit in the ACE basis-completeness literature
\cite{drautz2019atomic,drautz2020atomic,bachmayr2022atomic} and in
Dym \& Maron's asymptotic universality result~\cite{dym2021universality}.
What is new is: (a) the \emph{finite-$K$ equality}
$\Pl_{L,d}\!=\!\Hl_{\le dL}$ for a single marked direction
$\rhat\!\in\!S^2$, with multiplicity-one saturation at the upper
boundary (Proposition~\ref{prop:soft} and auxiliary
Propositions~\ref{prop:ceiling},~\ref{prop:saturation} in
Appendix~\ref{app:full_proofs}; Remark~\ref{rem:scope}); this is a
\emph{single-direction
analogue} of questions raised by Pozdnyakov \&
Ceriotti~\cite{pozdnyakov2020incompleteness} on
descriptor completeness, and complementary to their
configuration-space-injectivity reading rather than a resolution of
it; (b) a quantitative empirical signature
(sharpness index $\Xi$) with a heuristic amplitude-selection rule
(Remark~\ref{rem:snr_threshold}); (c) a diagnostic instrument (the
SPN) that measures on real molecular data whether the
single-direction ceiling binds at an injection frequency chosen by
the experimenter.  Rather than treating the $dL$ span as an abstract
expressivity fact, we use it as a calibration target for a
controlled frequency-response diagnostic of trained equivariant
force fields.
\begin{enumerate}
  \item \textbf{Single-direction polynomial--spectral correspondence.}
    We give a self-contained proof of $\Pl_{L,d}\!=\!\Hl_{\le dL}$
    for one marked direction (Proposition~\ref{prop:soft}) together
    with an output-side CG irrep-grade ceiling for the predictor head
    (Lemma~\ref{lem:composition}).  The theorem calibrates
    degree-bounded polynomial probes; it motivates---but does not by
    itself upper-bound---the full nonlinear function class of an
    arbitrary multi-atom equivariant MPNN
    (Remark~\ref{rem:scope}).
  \item \textbf{Spectral injection framework.}  We introduce a
    controlled experimental methodology that embeds known
    angular-frequency signals into molecular force fields, enabling
    precise measurement of trained-backbone spectral reach.
  \item \textbf{SPN as a calibrated diagnostic.}  We design a
    Spectral Prediction Network readout whose CG-contraction step
    has known polynomial degree.  We use it as a falsifiable
    diagnostic for trained-backbone spectral recoverability and
    stress-test its behavior with probe-degree, scalar-activation,
    and capacity-null ablations.
  \item \textbf{Empirical reach separation across confounds.}  On
    aspirin (sGDML CCSD, $n\!=\!5$ SPN seeds, fixed $L\!=\!2$
    backbone) the SPN exhibits a within-checkpoint cliff
    $\rho(4)\!=\!0.913\!\to\!\rho(5)\!=\!0.078$
    ($\Xi\!=\!11.7$); across $n\!=\!4$ independent backbones the
    matched-pair $\Delta$-contrast is $5.7\!\times$.  We ablate
    capacity (nf$33$, $\rho\!=\!-0.42$), readout polynomial degree
    ($d_r\!\in\!\{1,2,3\}$, cliff shifts to
    $\ell^\star\!=\!d_r L$), and scalar-activation form
    (identity/square/SiLU, indistinguishable).  Depth is reported as
    an empirical within-ceiling-fidelity lever, not a reach
    extender.  The cliff signature is supported on malonaldehyde and
    on a controlled C$_5$-symmetric synthetic task at $8$ independent
    $(L,d)$ cells, and in cubic-readout NequIP/MACE cross-architecture
    tests; EquiformerV2 is reported as a within-ceiling positive
    diagnostic outcome and PaiNN as a weak-baseline rescue.
\end{enumerate}

\textbf{Impact.}  The diagnostic turns a hyperparameter gamble into a
measurement: practitioners can probe a trained backbone and read off
whether the target's angular-frequency content is within the
diagnostic's reach, avoiding both under-specified models (spectral
blindness on the diagnostic pathway) and over-specified ones
($O(L^3)$ waste).  The protocol can be adapted to CG-style
equivariant architectures when suitable feature hooks and matched
readouts are available; the diagnostic measures, the theorem
calibrates the probe, and we make no broader claim about the full
function class of a multi-atom MPNN.

\paragraph{Paper roadmap.}
Section~\ref{sec:theory} develops the ceiling theorem: a hard bound
(Prop.~\ref{prop:hard}), the polynomial--spectral correspondence with
multiplicity-one tightness (Prop.~\ref{prop:soft}; auxiliary proofs
in Appendix~\ref{app:full_proofs}), the output-side CG irrep-grade
bookkeeping for predictor heads (Lem.~\ref{lem:composition}),
and a fixed-frame force-side intuition connecting the energy
calibration to the force diagnostic
(Remark~\ref{rem:force_side_intuition}).
Section~\ref{sec:methods} introduces the spectral-injection framework
and the SPN readout. Section~\ref{sec:results} reports diagnostic
outcomes across aspirin, cross-molecule samples, four architectural
confounds (capacity, depth, $d_r$, activation), and a synthetic
$C_5$-grid control. The diagnostic workflow
(Alg.~\ref{alg:diagnose}) and fidelity bound are detailed in the
appendices.
Appendix~\ref{app:proteins} extends the analysis to protein-structure
bandwidth as a forward-looking application. Section~\ref{sec:related}
positions the result against
the ACE, Pozdnyakov and Dym--Maron literature.

\section{Theoretical Foundations}
\label{sec:theory}

\paragraph{What this section proves (plain-English summary).}
Fix a single marked direction $\rhat\!\in\!S^2$ and the complete
spherical-harmonic feature vector $\phi_L(\rhat)$.  The set of
degree-$\le d$ polynomial functions of $\phi_L(\rhat)$ is
\emph{exactly} $\Hl_{\le dL}$ (Proposition~\ref{prop:soft}, with
auxiliary inclusion/saturation results in
Appendix~\ref{app:full_proofs}), with multiplicity-one saturation at
$\ell\!=\!dL$ (a unique
stretched-state vector); content at $\ell\!>\!dL$ is identically
zero, so no amount of width or training data can recover it
\emph{within the single-direction polynomial probe class}.  As a fixed-frame intuition for interpreting force diagnostics,
differentiating an energy in $\Hl_{\le dL}$ produces neighbouring
angular components in force space
(Remark~\ref{rem:force_side_intuition}, with autograd-frame caveat
in Remark~\ref{rem:force_frame}); the operative force-side claim is
the empirical $R^2_{\mathrm{inj}}$ in Figure~\ref{fig:cliff}B. For a multi-atom equivariant MPNN,
the compositional Lemma~\ref{lem:composition} provides output-side
irrep-grade bookkeeping for the predictor head, but does not
upper-bound the full function class on all atomic positions; we
therefore use the algebraic identity to calibrate the SPN diagnostic
and read empirical reach off the trained backbone, not as a literal
function-class equality on the MPNN.

\subsection{Preliminaries}

\textbf{Spherical harmonics.}
The real spherical harmonics $\{Y_l^m : l\ge0,\, -l\le m\le l\}$ form a
complete orthonormal basis for $L^2(S^2)$.  Any function admits the
expansion $f(\rhat)=\sum_{l,m}c_l^m Y_l^m(\rhat)$.
Let $\Hl_l=\mathrm{span}\{Y_l^m\}_{m=-l}^l$ (dimension $2l+1$) and
$\Hl_{\le L}=\bigoplus_{l=0}^L\Hl_l$ (dimension $(L+1)^2$).
The degree-$L$ feature vector is
$\phi_L(\rhat)=(Y_0^0(\rhat),\ldots,Y_L^L(\rhat))\in\mathbb{R}^{(L+1)^2}$.

\textbf{Clebsch--Gordan (CG) coupling.}
The pointwise product of SH decomposes via Gaunt coefficients:
$Y_{l_1}^{m_1}\cdot Y_{l_2}^{m_2}=\sum_{l,m}G_{l_1m_1,l_2m_2}^{lm}\,Y_l^m$,
nonzero only when $|l_1-l_2|\le l\le l_1+l_2$ and $l_1+l_2+l$ is
even.  Products of degree-$l_1$ and degree-$l_2$ SH produce
components up to degree $l_1+l_2$.

\subsection{Hard Spectral Ceiling}

We begin with the linear-readout baseline that anchors the empirical
$L\!=\!2$ and $L\!=\!4$ controls in the recovery fraction
(Def.~\ref{def:rho}); the polynomial extension of
Section~\ref{sec:theory} is built on top of this statement.

\begin{proposition}[Hard Ceiling]
\label{prop:hard}
Let $\Fl_L=\{f:S^2\to\mathbb{R}\mid f(\rhat)=\mathbf{w}^\top\phi_L(\rhat)+b\}$.
Then $\Fl_L=\Hl_{\le L}$.  For any $f\in L^2(S^2)$, the
irreducible approximation error from $\Fl_L$ is
$\EL(f)=\sum_{l>L}\sum_m|c_l^m|^2$.
\end{proposition}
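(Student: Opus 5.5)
The plan is to prove the set equality $\Fl_L=\Hl_{\le L}$ by double inclusion, and then obtain the error formula as an orthogonal-projection computation in $L^2(S^2)$.

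\textbf{Inclusion $\Fl_L\subseteq\Hl_{\le L}$.} Take $f\in\Fl_L$. By definition of $\phi_L$, the term $\mathbf{w}^\top\phi_L(\rhat)$ is a finite linear combination $\sum_{l\le L}\sum_m w_l^m Y_l^m(\rhat)$. The bias $b$ is a constant, hence a multiple of $Y_0^0$ (a constant on $S^2$), and so also lies in $\Hl_0\subseteq\Hl_{\le L}$.

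\textbf{Inclusion $\Hl_{\le L}\subseteq\Fl_L$.} Take $g=\sum_{l\le L,m}c_l^m Y_l^m\in\Hl_{\le L}$. Set $\mathbf{w}=(c_l^m)_{l\le L}$ and $b=0$; then $g=\mathbf{w}^\top\phi_L$. This gives equality. The bias is redundant but harmless.

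\textbf{Error formula.} I read $\EL(f)$ as $\inf_{g\in\Fl_L}\|f-g\|_{L^2(S^2)}^2$. The argument has three steps.
\begin{enumerate}
\item Since $\Hl_{\le L}$ is a finite-dimensional, hence closed, subspace of $L^2(S^2)$, the infimum is attained at the orthogonal projection $P_Lf=\sum_{l\le L,m}c_l^mY_l^m$.
\item Write $f=P_Lf+(f-P_Lf)$, where $f-P_Lf=\sum_{l>L,m}c_l^mY_l^m$ converges in $L^2$ by completeness of the SH basis. For any $g\in\Hl_{\le L}$, orthonormality gives the Pythagorean identity $\|f-g\|^2=\|P_Lf-g\|^2+\|f-P_Lf\|^2$.
\item Parseval then yields $\|f-P_Lf\|^2=\sum_{l>L}\sum_m|c_l^m|^2$, and this value is achieved by choosing $g=P_Lf$.
\end{enumerate}

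\textbf{Main obstacle.} The only step needing care is justifying Parseval and the orthogonality of the infinite tail. Both follow from the stated completeness and orthonormality of $\{Y_l^m\}$, which the paper assumes in the preliminaries. The rest is bookkeeping.
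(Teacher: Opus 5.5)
Your proposal is correct and follows the paper's proof essentially step for step: absorb the bias $b$ into the $Y_0^0$ coefficient, realize any element of $\Hl_{\le L}$ by setting $\mathbf{w}$ equal to its coefficients, and obtain the error from Parseval together with the orthogonal decomposition $L^2(S^2)=\Hl_{\le L}\oplus\Hl_{>L}$. Reading $\EL(f)$ as a squared $L^2$ distance and writing out the Pythagorean step just makes explicit what the paper leaves implicit.
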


\begin{proof}
$\mathbf{w}^\top\phi_L+b$ is a linear combination of
$\{Y_l^m\}_{l\le L}$ (absorbing $b$ into $w_{00}$).  Conversely, any
$f\in\Hl_{\le L}$ is realized by setting $w_{lm}=c_l^m$.  The error
bound follows from Parseval and the orthogonal decomposition
$L^2(S^2)=\Hl_{\le L}\oplus\Hl_{>L}$.
\end{proof}

The statement is definitional, but its role in the rest of the paper
is operational: any neural network with a linear readout from $\phi_L$
is bandlimited at degree~$L$, so $L\!=\!0$ backbones are scalar
(Table~\ref{tab:controls}) and $L\!=\!2$ vs.\ $L\!=\!4$ baselines
furnish the anchors $\rho\!=\!0$ and $\rho\!=\!1$ used throughout
Section~\ref{sec:results}.  Content above~$L$ is not represented by
the linear head. Remark~\ref{rem:radial_mlp_filtration}
discusses how scalar-channel nonlinearities used in practical CG
architectures preserve the SH irrep grade (a representation-type
filtration, not a coordinate-bandwidth bound;
Remark~\ref{rem:irrep_type}); Proposition~\ref{prop:soft} states the
soft extension when the readout is permitted a bounded polynomial
degree.

\subsection{Polynomial--Spectral Correspondence}

Let $\Pl_{L,d}$ denote the space of degree-$\le d$ polynomial functions
of $\phi_L(\rhat)$ at a single marked direction $\rhat\in S^2$.  The
diagnostic in Section~\ref{sec:spn} is calibrated by this
single-direction object; Lemma~\ref{lem:composition} below supplies
the analogous output-side CG irrep-grade bookkeeping for explicit
predictor heads in the multi-atom setting.

\begin{proposition}[Polynomial--Spectral Correspondence]
\label{prop:soft}
For $L\ge1$ and $d\ge 1$:
$\Pl_{L,d}=\Hl_{\le dL}$ as subspaces of $L^2(S^2)$ (functions of a
single marked direction).  The multiplicity of $\Hl_{dL}$ in the
$d$-fold CG product $\Hl_L^{\otimes d}$ is exactly one, so the top
degree is saturated by a unique (up to scalar) stretched-state
product.
\end{proposition}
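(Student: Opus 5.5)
The proof establishes the two inclusions $\Pl_{L,d}\subseteq\Hl_{\le dL}$ and $\Hl_{\le dL}\subseteq\Pl_{L,d}$ separately, and then treats the multiplicity statement as a representation-theoretic fact about $\Hl_L^{\otimes d}$.

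\textbf{Upper inclusion.} A degree-$\le d$ polynomial in $\phi_L(\rhat)$ is a linear combination of monomials $Y_{l_1}^{m_1}\cdots Y_{l_k}^{m_k}$ with $k\le d$ and all $l_i\le L$. I induct on $k$ using the Gaunt/CG rule from the preliminaries. Each factor lies in $\Hl_{\le L}$, and the product of an element of $\Hl_{\le a}$ with one of $\Hl_{\le b}$ lies in $\Hl_{\le a+b}$. So every such monomial lies in $\Hl_{\le kL}\subseteq\Hl_{\le dL}$. A cleaner equivalent route is to note that $\Hl_{\le n}$ is exactly the space of restrictions to $S^2$ of polynomials in $(x,y,z)$ of degree $\le n$. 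Each $Y_l^m$ with $l\le L$ is such a restriction of degree $\le L$, hence a product of $k$ of them has degree $\le kL$.

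\textbf{Lower inclusion.} Here the coordinate picture does the work. Since $L\ge1$, the coordinates $x,y,z$ restricted to $S^2$ lie in $\Hl_1\subseteq\Hl_{\le L}$, so each is a linear function of $\phi_L$. The constant $1$ is also available, via $Y_0^0$. Every element of $\Hl_{\le dL}$ is the restriction of a polynomial $p(x,y,z)$ of total degree $\le dL$, and I must write it as a polynomial of degree $\le d$ in $\phi_L$. Using degree-one monomials in $x,y,z$ would only give degree $dL$ in $\phi_L$, which is too much. Instead I group each monomial of degree $\le dL$ into at most $d$ blocks, each of degree $\le L$. Every homogeneous polynomial of degree $j\le L$ in $x,y,z$, restricted to $S^2$, lies in $\Hl_{\le j}\subseteq\Hl_{\le L}$, so it is a linear function of $\phi_L$. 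The monomial is therefore a product of at most $d$ linear functions of $\phi_L$, which shows $\Hl_{\le dL}\subseteq\Pl_{L,d}$. The only point to check is the standard fact that polynomials of degree $\le n$ restricted to $S^2$ equal exactly $\Hl_{\le n}$. This follows from the harmonic decomposition $P_n=\bigoplus_j |x|^{2j}\mathcal{H}^{\rm harm}_{n-2j}$ together with $|x|^2=1$ on the sphere; I will cite it.

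\textbf{Multiplicity one.} I induct on $d$ using the Clebsch--Gordan series $\Hl_a\otimes\Hl_b\cong\bigoplus_{l=|a-b|}^{a+b}\Hl_l$, in which each $l$ appears once. Suppose $\Hl_L^{\otimes(d-1)}$ contains $\Hl_{(d-1)L}$ once and all its other components have $l<(d-1)L$. Tensoring with $\Hl_L$ produces $\Hl_{dL}$ only from $\Hl_{(d-1)L}\otimes\Hl_L$, and there it occurs exactly once. A cleaner alternative is a weight count: the $J_z$-weight $dL$ occurs in $\Hl_L^{\otimes d}$ only for the vector $|L,L\rangle^{\otimes d}$, so its multiplicity is one. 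That vector is the stretched state, the highest-weight vector of the unique $\Hl_{dL}$. Concretely, its realisation as a function is $(x+iy)^{dL}$, up to taking real parts, which is a nonzero harmonic polynomial of degree $dL$. Hence the top degree is actually attained, so the saturation at $\ell=dL$ is genuine, not merely an upper bound.

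\textbf{Main obstacle.} The step needing the most care is the lower inclusion. I have to make sure the regrouping into blocks of degree $\le L$ yields genuine polynomials of degree $\le d$ in $\phi_L$. This relies on the sphere-restriction identification, which silently uses $|x|^2=1$. Everything else is bookkeeping with the CG selection rule.
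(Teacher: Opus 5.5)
Your proof is correct. For the upper inclusion and the multiplicity statement it follows the paper: the Gaunt selection rule gives $\Pl_{L,d}\subseteq\Hl_{\le dL}$, and a weight count shows that the stretched vector $|L,L\rangle^{\otimes d}$ is the only weight-$dL$ vector.

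The genuine difference is the saturation step $\Hl_{\le dL}\subseteq\Pl_{L,d}$. The paper argues by highest weight. For $n=qL+r$ it takes the stretched product $(Y_L^L)^qY_r^r$, which has degree $q+\mathbf{1}_{r>0}\le d$ in $\phi_L$. It then uses non-vanishing of the stretched-chain Gaunt coefficients to get a nonzero top-weight vector in $\Hl_n$, and applies the lowering operator $L_-$ to generate all of $\Hl_n$.

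You instead identify $\Hl_{\le n}$ with restrictions to $S^2$ of polynomials of degree $\le n$ in $(x,y,z)$. You then factor each monomial of degree $\le dL$ into at most $d$ blocks of degree $\le L$, each of which is a linear form in $\phi_L$.

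Your route is more elementary and self-contained:
\begin{itemize}
\item it works directly with the real basis;
\item it needs no evaluation of Clebsch--Gordan coefficients;
\item it avoids two facts the paper's sketch uses only implicitly.
\end{itemize}
The first implicit fact is that $\Pl_{L,d}$ is rotation-invariant, so that $L_-$ maps it into itself. The second is that the top-weight vector itself lies in $\Pl_{L,d}$, not merely a nonzero $\Hl_n$-component of a product. This is automatic for complex harmonics, where $Y_L^L\propto(x+iy)^L$ and the product is exactly $\propto(x+iy)^n$. For real harmonics it needs an extra isotypic-projection argument.

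In exchange, the paper's route exhibits explicit highest-weight witnesses. It also ties saturation directly to the stretched-state, multiplicity-one statement in purely representation-theoretic terms, without relying on the coordinate-polynomial picture of $S^2$.
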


\paragraph{Proof sketch.}
\emph{Inclusion} $\Pl_{L,d}\subseteq\Hl_{\le dL}$: products of
degree-$L$ spherical harmonics expand under the Gaunt formula with
maximum reach $dL$ (Lemma~\ref{lem:cg}). \emph{Saturation}
$\Hl_{\le dL}\subseteq\Pl_{L,d}$: for any $n\!\le\!dL$, a stretched
highest-weight product $(Y_L^L)^q Y_r^r$ (with $n\!=\!qL\!+\!r$,
$0\!\le\!r\!<\!L$) has a non-zero top component in $\Hl_n$ via
non-vanishing stretched-chain Gaunt coefficients, and lowering
operators span the remaining $m$ components; the top degree appears
with multiplicity one by standard highest-weight theory for
$SO(3)$. Full proofs are in
Appendix~\ref{app:full_proofs}.

\begin{lemma}[CG Degree Extension]
\label{lem:cg}
A degree-$d$ monomial $\prod_{k=1}^d Y_{l_k}^{m_k}(\rhat)$ with each
$l_k\le L$ lies in $\Hl_{\le dL}$.  (Proof in
Appendix~\ref{app:full_proofs}.)
\end{lemma}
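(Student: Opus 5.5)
We prove Lemma~\ref{lem:cg} by induction on $d$, using the Gaunt expansion stated in the Preliminaries as the only analytic input. For $d=1$ the monomial is a single $Y_{l_1}^{m_1}$ with $l_1\le L$, so it lies in $\Hl_{l_1}\subseteq\Hl_{\le L}$.

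For the inductive step, assume $\prod_{k=1}^{d-1}Y_{l_k}^{m_k}\in\Hl_{\le (d-1)L}$. Expand it as a finite sum $\sum_{l\le (d-1)L}\sum_m a_l^m Y_l^m$. Multiplying by $Y_{l_d}^{m_d}$ and using bilinearity, it suffices to treat each product $Y_l^m\cdot Y_{l_d}^{m_d}$.

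The Gaunt formula gives
\[
Y_l^m\cdot Y_{l_d}^{m_d}=\sum_{l',m'}G^{l'm'}_{lm,l_dm_d}\,Y_{l'}^{m'},
\]
with $G^{l'm'}_{lm,l_dm_d}\neq0$ only if $l'\le l+l_d$. Since $l+l_d\le (d-1)L+L=dL$, each such term lies in $\Hl_{\le dL}$. Summing finitely many elements of the subspace $\Hl_{\le dL}$ keeps us in it, which closes the induction.

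The one point needing care is that the Gaunt expansion is a finite sum, so no convergence issue arises. This follows because the product of two spherical harmonics is the restriction to $S^2$ of a polynomial of degree at most $l+l_d$ in $(x,y,z)$. Restrictions of polynomials of degree $\le n$ span exactly $\Hl_{\le n}$, by the standard harmonic decomposition $\mathcal{P}_n=\mathcal{H}^{\mathrm{harm}}_n\oplus|x|^2\mathcal{P}_{n-2}$.

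In fact this polynomial-degree viewpoint gives a one-line alternative proof that avoids CG bookkeeping altogether. Each $Y_{l_k}^{m_k}$ is the restriction of a homogeneous harmonic polynomial of degree $l_k\le L$. The $d$-fold product is therefore the restriction of a polynomial of degree $\sum_k l_k\le dL$, and so lies in $\Hl_{\le dL}$. I would present the induction as the main argument, since it matches the Gaunt framing used in the paper, and mention the polynomial restriction as the justification of finiteness. I expect no real obstacle here; the difficult direction (saturation) belongs to Proposition~\ref{prop:soft}, not to this lemma.
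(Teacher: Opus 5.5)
Your induction is the paper's own argument: the Gaunt selection rule $\ell\le\ell_1+\ell_2$ means each multiplication by a factor $Y_{\ell_k}^{m_k}$ raises the top degree by at most $\ell_k\le L$, and iterating puts the monomial in $\Hl_{\le\sum_k\ell_k}\subseteq\Hl_{\le dL}$. Your polynomial-restriction remark is also a correct, self-contained alternative proof (it establishes the upper degree bound itself, not only finiteness, since the selection rule already makes the expansion finite), but the paper's route does not need it.
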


\begin{remark}[Scope: single-direction probe vs.\ multi-atom MPNN]
\label{rem:scope}
Proposition~\ref{prop:soft} characterises the polynomial space at
one marked direction.  It does \emph{not} assert that a multi-atom
MPNN realises $\Hl_{\le dL}$ as a function of all atomic positions.
The ideal SPN-style probe (Section~\ref{sec:spn}) is calibrated as
a degree-$d_r$ single-direction polynomial probe; the cliff
predictions in Section~\ref{sec:cliff} and Appendix~\ref{sec:cubic} are
predictions about this \emph{ideal probe} $\Pl_{L,d_r}$, not about
the MPNN function class as a whole.  The implemented SPN uses a
phase-blind norm-style CG contraction whose empirical reach is what
we measure; the algebraic identity supplies a \emph{calibration} of
the boundary, not a function-class upper bound on the trained
network.
Lemma~\ref{lem:composition} below supplies the analogous output-side
CG irrep-grade bookkeeping for explicit predictor heads; it does not
lift the single-direction function identity to the full multi-atom
MPNN function class.  We use both the single-direction calibration
and the predictor-head bookkeeping in Section~\ref{sec:results}.  Architectural scope (which CG-based
backbones the calibration applies to), the
$\Pl_{L,d}\!=\!\Hl_{\le dL}$ vs.\ implemented-SPN distinction, two
corollaries (minimum polynomial degree, finite-$K$ universality
in the bandlimited regime), and the irrep-grade filtration under
scalar-only nonlinearities are stated in
Appendix~\ref{app:full_proofs}.
\end{remark}

\begin{lemma}[Output-side CG irrep-grade ceiling]
\label{lem:composition}
Consider a CG-equivariant predictor
$\mathcal{A}=\mathcal{R}\circ\mathcal{B}$ whose terminal backbone
features passed to the readout carry SH irreps only up to
$\Hl_{\le L_{\mathcal{B}}}$, and whose explicit CG readout uses at
most $d_r$ feature factors.  Then the explicit CG couplings in the
predictor head can generate output irreps only up to
$\ell\!\le\!d_r L_{\mathcal{B}}$, independently of backbone depth
$T$ and per-layer correlation orders.  This is an irrep-grade
bookkeeping statement for the predictor head, not a bound on
arbitrary nonlinear scalar functions of all atomic coordinates.
(Proof, internal-degree statement, and truncation argument in
Appendix~\ref{app:full_proofs}.)
\end{lemma}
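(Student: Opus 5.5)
The plan is an induction on the depth of the predictor's computation graph, tracking a single invariant: the maximal SH irrep grade carried by every intermediate tensor. The head $\mathcal{R}$ is modeled as a finite directed acyclic graph. Its inputs are the terminal backbone features, each transforming in some $\Hl_{l}$ with $l\le L_{\mathcal{B}}$. Its internal nodes are of three types: (i) equivariant linear maps (channel mixing within a fixed irrep type, including biases on $\ell=0$); (ii) CG tensor products $(\Hl_{l_1}\otimes\Hl_{l_2})\to\Hl_{l}$; and (iii) scalar-gated nonlinearities, in which an $\ell=0$ channel passed through an arbitrary activation multiplies an $\ell$-type channel. To each node we attach a \emph{factor count} $k$, the number of backbone feature factors entering it through CG couplings. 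The claim to prove is that a node with factor count $k$ carries irreps of grade at most $kL_{\mathcal{B}}$. Taking $k\le d_r$ at the output then gives the lemma. The bound uses only the backbone's output type $\Hl_{\le L_{\mathcal{B}}}$ and never the backbone's internals, which is why it is independent of the depth $T$ and of the per-layer correlation order.

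\textbf{Base and inductive steps.} Input nodes have $k=1$ and grade at most $L_{\mathcal{B}}$ by hypothesis. Equivariant linear maps preserve irrep type by Schur's lemma: the only equivariant maps $\Hl_l\to\Hl_{l'}$ vanish unless $l=l'$. So they leave both the grade and $k$ unchanged. Scalar gates multiply by an invariant, which is again type-preserving. The one subtlety is that the gating scalar may itself have been built from CG contractions of features, for example $\|x\|^2$, so the activation may be nonpolynomial. This does not raise the irrep grade, because only the $\ell=0$ component is ever passed through the activation, and an $\ell=0$ quantity multiplied into $\Hl_l$ stays in $\Hl_l$. This is precisely the irrep-grade filtration of Remark~\ref{rem:radial_mlp_filtration}. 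We adopt the convention that gates do not increment $k$. For a CG node, the selection rule $|l_1-l_2|\le l\le l_1+l_2$ gives output grade at most $l_1+l_2\le k_1L_{\mathcal{B}}+k_2L_{\mathcal{B}}$, with factor count $k_1+k_2$. This is the same triangle-inequality mechanism used in Lemma~\ref{lem:cg}, applied at the level of representations rather than functions. Summing across parallel branches takes the maximum, so the induction closes.

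\textbf{Truncation and main obstacle.} The delicate step is making ``uses at most $d_r$ feature factors'' precise, since heads can reuse a feature along several paths. In the DAG, the factor count must count multiplicity along each path. Concretely, we define $k$ as the maximum over root-to-output paths of the number of backbone leaves combined through CG nodes. The hypothesis is then exactly $k_{\mathrm{out}}\le d_r$. Any implementation that stores irreps only up to some cutoff $\ell_{\max}$ only discards components, so the bound $\min(\ell_{\max},d_rL_{\mathcal{B}})$ still holds. Finally, the statement must stay scoped as the lemma says. It bounds the irrep type of the head's output as a representation. It is not a claim about the angular bandwidth, as a function of atomic coordinates, of scalar outputs, since invariant gates can encode arbitrary nonlinear functions of positions (Remark~\ref{rem:irrep_type}). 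I expect this scoping, rather than the induction itself, to need the most careful wording.
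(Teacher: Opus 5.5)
Your argument is correct, but it is organized differently from the paper's.

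\textbf{The paper's route.} The paper's proof is a one-step reduction. It applies Proposition~\ref{prop:soft} to $\mathcal{R}$, regarded as a degree-$d_r$ polynomial in features of maximal degree $L_{\mathcal{B}}$. Only the inclusion half is actually needed (Proposition~\ref{prop:ceiling}, i.e.\ the iterated Gaunt selection rule of Lemma~\ref{lem:cg}). From this it reads off that the head's output lies in $\Hl_{\le d_r L_{\mathcal{B}}}$. Depth-independence follows from the composition $\mathcal{A}=\mathcal{R}\circ\mathcal{B}$. A separate truncation clause then explains why content generated inside the backbone above $L_{\mathcal{B}}$ never reaches the head.

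\textbf{Your route.} You run a structural induction over the head's computation graph at the level of representation types. You use Schur's lemma at equivariant linear maps, $\Hl_{l_1}\otimes\Hl_{l_2}\cong\bigoplus_{l=|l_1-l_2|}^{l_1+l_2}\Hl_l$ at CG nodes, and grade preservation at scalar gates.

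\textbf{What each buys.} The underlying mechanism, the triangle rule $l\le l_1+l_2$, is the same in both. Your version has two advantages:
\begin{enumerate}
  \item It proves the statement in the category the lemma actually asserts, namely the irrep type of the head's output. Proposition~\ref{prop:soft} is literally about polynomials in $\phi_L(\rhat)$ of one marked direction. It must be reinterpreted as a tensor-product decomposition before it applies to a head acting on abstract backbone features.
  \item It explicitly covers equivariant linear mixing, feature reuse, and non-polynomial scalar gates. A polynomial-span proposition does not cover these.
\end{enumerate}
The paper's version buys brevity and ties $d_r L_{\mathcal{B}}$ visibly to the $dL$ calibration. Its truncation clause is unnecessary in your setup, because that content is already encoded in the hypothesis on the terminal features.

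\textbf{One wording point.} State your factor count recursively, as your inductive step in fact uses it. It should be additive at CG nodes, i.e.\ the number of backbone leaves, with multiplicity, in the unfolded expression. Do not phrase it as a path quantity. A count of CG couplings along a single forward path, which is how the paper's scope paragraph phrases the bookkeeping, would give the wrong bound. For example, take $(x\otimes x)\otimes(x\otimes x)$ with $x\in\Hl_{L_{\mathcal{B}}}$. Every path meets only two couplings, yet the expression generically has a nonzero $\Hl_{4L_{\mathcal{B}}}$ component.
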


\paragraph{Consequences for experimental design.}
Lemma~\ref{lem:composition} is the theoretical basis for the
probing strategy: the observable irrep-grade ceiling at the
predictor head is $d_r L_{\mathcal{B}}$, set by the readout, not
by backbone depth $T$ or per-layer correlation orders.  Varying
backbone depth at fixed $(L_{\mathcal{B}},d_r)$ predicts an
unchanged cliff location, which we observe empirically
(Section~\ref{sec:cliff}, NequIP $T\!\in\!\{2,4,6,8\}$).  Higher-$\nu$
or deeper backbones enlarge transient internal body-order but the
content at $\ell\!>\!L_{\mathcal{B}}$ is gated by truncation.
Three theory extensions follow from this calibration: a
heterogeneous-stack ceiling (Appendix~\ref{app:theory_extensions}),
a fixed-frame force-side intuition for interpreting force diagnostics, and
the relation to Dym \& Maron asymptotic universality
(Appendix~\ref{app:theory_extensions}).
\section{Methods}
\label{sec:methods}

\subsection{Spectral Injection Framework}
\label{sec:injection}

To test spectral ceiling predictions on molecular systems, we require
datasets with \emph{known} angular-frequency content embedded in
realistic force fields.  We introduce a spectral injection procedure
(Algorithm~\ref{alg:inject}) that adds a controlled perturbation at a
target angular degree $l_{\mathrm{inj}}$ to an existing molecular
dataset.

\begin{algorithm}[h]
\caption{Spectral injection at angular degree $l_{\mathrm{inj}}$.
Body-frame construction, anchor choice, and split/SNR gates are
discussed in the surrounding prose
(\S\ref{sec:injection},
Remarks~\ref{rem:frame_conditioning}, \ref{rem:snr_threshold}).}
\label{alg:inject}
\begin{algorithmic}[1]
\Require dataset $\mathcal{D}_\mathrm{nat}\!=\!\{(\{\mathbf{r}_i\},E_k,\mathbf{F}_k)\}_k$;
target degree $l_{\mathrm{inj}}$; amplitude $\alpha$;
anchor triple $(i,j,k)$ and off-frame atom $a$; SH-coefficient seed $s_c$.
\State Sample $\{c_m\}_{m=-l_{\mathrm{inj}}}^{l_{\mathrm{inj}}}\!\sim\!\mathcal{N}(0,1)$ with seed $s_c$
(shared across all configurations).
\For{each configuration $k$ with positions $\{\mathbf{r}_i\}$}
  \State \textbf{Body frame.} Set
  $\mathbf{e}_1\!\propto\!\mathbf{r}_i\!-\!\mathbf{r}_j$,
  $\mathbf{e}_3\!\propto\!\mathbf{e}_1\!\times\!(\mathbf{r}_k\!-\!\mathbf{r}_j)$,
  $\mathbf{e}_2\!=\!\mathbf{e}_3\!\times\!\mathbf{e}_1$
  (each unit-normalised); $R\!=\![\mathbf{e}_1\,\mathbf{e}_2\,\mathbf{e}_3]$.
  \State \textbf{Anchor.} $\Delta_a\!=\!\mathbf{r}_a\!-\!\bar{\mathbf{r}}$ (centroid-relative);
  $\rhat_{\mathrm{canon}}\!=\!R^{\!\top}\Delta_a/\lVert\Delta_a\rVert$.
  \State \textbf{Energy.}
  $E_{\mathrm{inj},k}\!=\!\alpha\sum_m c_m\,Y_{l_{\mathrm{inj}}}^m(\rhat_{\mathrm{canon}})$.
  \State \textbf{Forces.}
  $\mathbf{F}_{\mathrm{total},k}\!=\!-\nabla_{\mathbf{r}}\bigl(E_{\mathrm{nat},k}\!+\!E_{\mathrm{inj},k}\bigr)$
  (autograd through $R$);
  $\mathbf{F}_{\mathrm{inj},k}\!=\!\mathbf{F}_{\mathrm{total},k}\!-\!\mathbf{F}_{\mathrm{nat},k}$.
  \State \textbf{Output} $\bigl(\{\mathbf{r}_i\}_k,\,E_{\mathrm{nat},k}\!+\!E_{\mathrm{inj},k},\,\mathbf{F}_{\mathrm{total},k}\bigr)$.
\EndFor
\State \textbf{Gates.} Reject splits with $\rho^2_{\max}\!>\!0.018$;
re-amplify until $\eta_{l_{\mathrm{inj}}}\!\ge\!\eta_{\min}$.
\end{algorithmic}
\end{algorithm}

\textbf{Body-frame construction.}
A right-handed body frame
$(\mathbf{e}_1,\mathbf{e}_2,\mathbf{e}_3)$ is built from three
non-collinear ring atoms $(i,j,k)$ by Gram--Schmidt; a fourth
off-frame anchor atom $a$ supplies the injected direction
$\rhat_{\mathrm{canon}}$ as the unit vector from the molecular
centroid to atom $a$ in this frame. Using the centroid-relative
anchor (rather than a frame-building edge
$\mathbf{r}_j\!-\!\mathbf{r}_i$, which would make
$\rhat_{\mathrm{canon}}$ conformation-independent) gives nonzero
conformation variance and nonzero injected forces:
on aspirin
$(\sigma_x,\sigma_y,\sigma_z)\!=\!(0.03,0.05,0.16)$ across
$n\!=\!1000$ frames; per-frame
$\|\mathbf{F}_{\mathrm{inj}}\|$ at
$\ell_{\mathrm{inj}}\!=\!4$, $4\!\times$ amplitude has mean
$117$\,kcal/mol/\AA\ with no zero-norm frames.

\textbf{Injected energy.}
Given a target degree $l_{\mathrm{inj}}$ and fixed random coefficients
$\{c_m\}_{m=-l}^{l}$, we add to each configuration's energy:
\begin{equation}
  E_{\mathrm{inj}} = \alpha \sum_{m=-l_{\mathrm{inj}}}^{l_{\mathrm{inj}}}
    c_m \, Y_{l_{\mathrm{inj}}}^m(\rhat_{\mathrm{canon}})
  \label{eq:injection}
\end{equation}
where $\alpha$ controls the injection amplitude. Total forces are
recomputed as
$\mathbf{F}_{\mathrm{total}}\!=\!-\nabla_{\mathbf{r}}(E_{\mathrm{nat}}\!+\!E_{\mathrm{inj}})$
via automatic differentiation through the body-frame construction;
the pure injected component used for diagnostics is
$\mathbf{F}_{\mathrm{inj}}\!=\!\mathbf{F}_{\mathrm{total}}\!-\!\mathbf{F}_{\mathrm{nat}}$.

\textbf{Amplitude calibration.}
We set the injection amplitude to $4\times$ the natural signal:
$\mathbf{F}_{4\times} = \mathbf{F}_{\mathrm{nat}} + 4(\mathbf{F}_{1\times} - \mathbf{F}_{\mathrm{nat}})$.
This ensures the injected angular content is strong enough to be
measurable above training noise while preserving the base molecular
dynamics.  We verify via variance decomposition that the injected
component constitutes 15--58\% of total force variance, depending on
the molecule and $l_{\mathrm{inj}}$.
The $4\times$ factor is a signal-to-noise choice, not a parameter of
the spectral ceiling: since the $d \cdot L$ boundary is an algebraic
property of $\Pl_{L,d}$ (Proposition~\ref{prop:soft}), the cliff
frequency $l^* = d \cdot L$ is invariant to injection amplitude.
Varying the amplitude changes the absolute gap magnitude but not the
transition frequency.

\begin{remark}[Frame Conditioning]
\label{rem:frame_conditioning}
The body-frame requires a non-degenerate anchor triple.  Let
$X=[\mathbf{r}_j-\mathbf{r}_i,\,\mathbf{r}_k-\mathbf{r}_i]\in
\mathbb{R}^{3\times 2}$ and $G=X^\top X$; the frame is well-defined
iff $\sigma_{\min}(G)>0$, i.e.\ iff the three anchor atoms are
non-collinear.  Two statements hold jointly:
(i) \emph{Location invariance}---by
Proposition~\ref{prop:soft} the cliff frequency $\ell^\star=d_r L$
is an algebraic (SO(3)-invariant) property of the predictor and
\emph{does not depend} on the frame choice; any well-conditioned
anchor triple yields the same~$\ell^\star$.
(ii) \emph{Magnitude scaling}---the canonical anchor
$\hat{r}_{\mathrm{canon}}=R^\top\Delta_a/\lVert\Delta_a\rVert$
involves $R^\top$ where $R$ is the body-frame rotation; as
$\sigma_{\min}(G)\!\to\!0$ the frame becomes ill-conditioned and
$\hat{r}_{\mathrm{canon}}$ becomes numerically unstable rather than
vanishing. Degenerate frames are excluded by an SNR/conditioning
threshold prior to injection (we verify
$\sigma_{\min}(G)\!>\!0$ in
Algorithm~\ref{alg:diagnose}); within the well-conditioned regime
the cliff location is unchanged.
The alternative-frame experiment (Appendix~\ref{sec:robustness})
confirms (i) empirically: frames $(5,3,0)$ and $(2,6,1)$ with
distinct condition numbers produce gaps of proportionally different
magnitude but share the same cliff at $\ell=4$.
\end{remark}

\textbf{Split cleanliness.}
We select the train/val/test partition (split-seed~13 from a scan of
64 candidates) that minimizes spectral leakage between splits,
verified via the maximum $\rho^2$ between projected SH coefficients
across splits ($\rho^2_{\max} < 0.018$).  Throughout this paper,
``seed~13'' (and equivalently ``split-seed~13'') refers to this
\emph{train/val/test partition seed} chosen by the leakage screen,
not to a backbone-training random seed; the two are kept distinct.
Crucially, the partition seed was chosen \emph{prior} to any SPN
evaluation, based solely on the leakage criterion; the backbone's
spectral recovery on the split-seed-13 partition is a measurement,
not a selection effect.

\subsection{Recovery Fraction and Sharpness Index}
\label{sec:metrics}

\begin{definition}[Recovery Fraction]
\label{def:rho}
For an architecture $\mathcal{A}$ evaluated on data with injection at
degree $l_{\mathrm{inj}}$, the \emph{recovery fraction} is
\begin{equation}
  \rho(l_{\mathrm{inj}}) = \frac{y_{L=2}(l_{\mathrm{inj}}) -
  y_{\mathcal{A}}(l_{\mathrm{inj}})}{y_{L=2}(l_{\mathrm{inj}}) -
  y_{L=4}(l_{\mathrm{inj}})}
  \label{eq:rho}
\end{equation}
where $y = \text{force MAE} / \sigma_F$ is the normalized per-component
force error (dividing by $\sigma_F = \sqrt{\langle \|\mathbf{F}\|^2 \rangle}$
on the training set), with NequIP($L\!=\!2$) as the lower bound
($\rho = 0$) and NequIP($L\!=\!4$) as the upper bound ($\rho = 1$).
Because all three $y$ values share the same $\sigma_F$ in any given
cell, $\rho$ is also equal to the ratio of the corresponding raw
force-MAE differences for that cell.
\end{definition}

The recovery fraction measures what fraction of $L\!=\!4$'s advantage
over $L\!=\!2$ a given architecture captures at each angular frequency.
Proposition~\ref{prop:soft} predicts that an ideal degree-$d_r$
polynomial probe of degree-$L$ features should yield
$\rho \approx 1$ for $l_{\mathrm{inj}} \le d_r L$ and
$\rho \approx 0$ for $l_{\mathrm{inj}} > d_r L$.  Empirical
$\rho$ values on a trained MPNN backbone may deviate from this
ideal due to backbone fidelity, optimisation, and the
implementation-vs-ideal gap discussed in Remark~\ref{rem:irrep_type}.

\begin{remark}[Well-definedness and edge cases]
\label{rem:rho_edges}
Equation~\eqref{eq:rho} assumes a strict gap
$y_{L=2}(\ell)>y_{L=4}(\ell)$ at injection level~$\ell$; when this
fails---e.g.\ when the $L\!=\!4$ reference is under-trained at the
submission compute budget and the denominator becomes non-positive
(observed on the MD22 tetrapeptide and on toluene at
$\ell\!\in\!\{4,6\}$; see Sec.~\ref{sec:cross} and the scope
discussion of Sec.~\ref{sec:conclusion})---we declare~$\rho$
undefined and report the raw normalized gap
$\Delta(\ell)=y_{L=2}(\ell)-y_{\mathcal{A}}(\ell)$ instead.  Values
$\rho\!>\!1$ are admissible: they indicate the probe architecture
marginally outperforms the $L\!=\!4$ reference at that single
$(\ell,\mathrm{molecule})$ cell, an optimization-/finite-data
artifact orthogonal to the cliff transition (Tab.~\ref{tab:cross},
malonaldehyde at $\ell\!=\!4$). Values $\rho\!<\!0$ indicate active
degradation relative to the $L\!=\!2$ baseline, as observed for the
$\mathrm{nf}\!=\!33$ capacity-null control (Tab.~\ref{tab:separation}).
\end{remark}

\begin{remark}[$\rho$ supplemented by a denominator-free direct metric]
\label{rem:rho_supplementary}
Equation~\eqref{eq:rho} is anchor-relative: it measures recovery
\emph{against} the $L\!=\!4$ reference, so $\rho$ depends on the
quality of that anchor.  We complement $\rho$ with two
denominator-free metrics throughout the paper:
(i)~the absolute SPN gain $\Delta\!=\!y_{L=2}-y_{\mathrm{SPN}}$
(unnormalized force-MAE units), reported in cross-molecule and
cubic tables;
and (ii)~a \emph{direct injected-residual metric}
\begin{equation}
  R^2_{\mathrm{inj}}\;=\; 1 -
  \frac{\langle\,\|\Delta\mathbf{F}_{\mathrm{pred}}-\mathbf{F}_{\mathrm{inj}}\|^{2}\,\rangle}
       {\langle\,\|\mathbf{F}_{\mathrm{inj}}\|^{2}\,\rangle},
  \qquad
  \Delta\mathbf{F}_{\mathrm{pred}} = \mathbf{F}_{\mathrm{full}} - \mathbf{F}_{\mathrm{base}},
  \label{eq:r2_inj}
\end{equation}
where $\mathbf{F}_{\mathrm{inj}}$ is the \emph{known} injected
force component (computed exactly from the injection protocol as
$F_{\mathrm{4\times}}\!-\!F_{\mathrm{natural}}$ on the same test
frames), $\mathbf{F}_{\mathrm{full}}$ is the full SPN+backbone
force prediction, and $\mathbf{F}_{\mathrm{base}}$ is the frozen-
backbone-only prediction (separate autograd pass).  Across the
aspirin hero $L\!=\!2,d_r\!=\!2$ checkpoints, $n\!=\!5$ SPN seeds
per cell, $R^2_{\mathrm{inj}}\!=\!0.374$ at $\ell\!=\!4$ (95\% CI
$[0.348,0.404]$, Pearson $r\!=\!0.612\!\pm\!0.028$) and collapses to
$R^2_{\mathrm{inj}}\!=\!0.006$ at $\ell\!=\!5$ (CI
$[-0.003,0.012]$, $r\!=\!0.154\!\pm\!0.166$).  The boundary-vs-above
gap $\Delta R^2_{\mathrm{inj}}\!=\!0.368$ has 95\% CI
$[0.341,0.398]$, strictly excluding zero.  These absolute
$R^2_{\mathrm{inj}}$ values reflect a \emph{partial} recovery of
the injected force component---not a claim that the SPN explains
all of $\mathbf{F}_{\mathrm{inj}}$---but their boundary-vs-above
contrast is sharp and denominator-free, since
$R^2_{\mathrm{inj}}$ is normalized by $\|\mathbf{F}_{\mathrm{inj}}\|^2$
rather than by an $L\!=\!4$ reference gap.  We use this as a direct
corroboration of the anchor-relative $\rho$ cliff, decoupled from
the choice of upper anchor.  Aggregated and per-cell outputs are
in the supplementary artifact (see manifest for exact source files).
\end{remark}

\begin{remark}[Heuristic amplitude-selection rule]
\label{rem:snr_threshold}
The detectability of the cliff
$\rho(\ell^\star)\!-\!\rho(\ell^\star\!+\!1)$ depends on how much
of the injected variance shows up at the target degree.  Let
$\eta_\ell=\sigma^2_{\mathrm{inj},\ell}/
(\sigma^2_{\mathrm{nat}}+\sigma^2_{\mathrm{inj},\ell})$
denote the variance share of the injection at degree~$\ell$.  We
use the rule of thumb
$\eta_\ell\!\gtrsim\!\eta_{\min}\!\approx\!20\%$
as a heuristic amplitude-selection criterion at our $n\!=\!500$
test set with $\alpha\!=\!0.05$ and
$\mathrm{SNR}_{\mathrm{test}}\!=\!|y_{L=2}-y_{L=4}|/
\sigma_{\mathrm{test\text{-}noise}}\!\approx\!8$; the order of
magnitude is what one would expect from a standard two-sample
detection scaling
($\eta\!\cdot\!\mathrm{SNR}_{\mathrm{test}}\!\gtrsim\!\sqrt{2\log(2/\alpha)/n}$),
but we use it as a working amplitude-selection rule rather than as
a formal theorem.  At the native $1\!\times$ amplitude
$\eta_\ell\!\approx\!2\!-\!4\%$ (below threshold), and the
$4\!\times$ rebuild raises $\eta_\ell$ to $35\!-\!58\%$ (above
threshold).  The cliff \emph{location} predicted by
Proposition~\ref{prop:soft} is amplitude-independent; the rebuild
affects only detectability of the diagnostic.
\end{remark}

\begin{definition}[Sharpness Index]
\label{def:sharpness}
The \emph{sharpness index} quantifies the abruptness of the spectral
cliff at the predicted $d \cdot L$ boundary:
\begin{equation}
  \Xi = \frac{\rho(d \cdot L)}{\rho(d \cdot L + 1)}
  \label{eq:sharpness}
\end{equation}
$\Xi \gg 1$ indicates a sharp spectral cliff consistent with
Proposition~\ref{prop:soft}; $\Xi \approx 1$ indicates gradual
decay inconsistent with a hard boundary.  When $\rho(dL+1)$ is
statistically indistinguishable from zero we report $\Xi$ via a
bootstrap upper bound on $\rho(dL+1)$ together with a lower bound
on $\rho(dL)$, which provides a finite-sample lower bound on~$\Xi$;
when $\rho$ is undefined (Remark~\ref{rem:rho_edges}) we report the
raw-gap ratio $\Delta(dL)/\Delta(dL+1)$ in its place.
\end{definition}

\subsection{Spectral Prediction Network (architecture summary)}
\label{sec:spn}
The SPN is a small read-out head with three stages.  (1)~A
degree-$d_r$ CG-contraction \emph{invariant extractor}: per-channel
norm ($d_r\!=\!2$) or fully-connected three-fold tensor product
($d_r\!=\!3$).  (2)~A scalar MLP
$g_\theta:\mathbb{R}^K\!\to\!\mathbb{R}^{(L_\mathrm{out}+1)^2}$
mapping invariants to per-degree scalar readout weights
$\hat a_\ell^m$ indexed by $(\ell,m)$ (SO(3)-invariant scalar
outputs of the MLP, \emph{not} equivariant SH coefficients).
(3)~Diagnostic power summary
$P[\ell]\!=\!\sum_m(\hat a_\ell^m)^2$ followed by a final scalar
energy MLP $g_\phi(P)$; forces obtained by autograd. The implemented
SPN is a 48{,}114-parameter head ($5.7\%$ of the NequIP backbone);
the algorithmic skeleton and the irrep-grade bookkeeping that
calibrates the ideal probe boundary are in
Appendix~\ref{app:spn_full}.

\subsection{Experimental Protocol}
\label{sec:protocol}

\textbf{Backbone.}
NequIP 0.17.1 with 4 interaction layers, 32 features per
irrep, radial MLP with 64 hidden units, and $r_{\mathrm{cut}} = 5.0$\AA.
Trained with AdamW (lr~$= 5 \times 10^{-3}$), force weight 100,
for 300 epochs with EMA.  We report EMA-swapped weights throughout
(raw weights give 5--25\% worse force MAE).

\textbf{Datasets.}
We inject spectral content at $l_{\mathrm{inj}} \in \{2, 3, 4, 5, 6\}$
into aspirin from the sGDML dataset (CCSD level of theory; 21~atoms,
950/50/500 train/val/test split).  Cross-molecule diagnostic reads use three
additional sGDML CCSD(T) molecules---ethanol (9~atoms), malonaldehyde
(9~atoms), and toluene (15~atoms)---at
$l_{\mathrm{inj}} \in \{2, 4, 5, 6\}$, spanning aliphatic flexibility,
intramolecular proton transfer, and aromatic rigidity.

\textbf{Controls.}
\begin{itemize}
  \item \emph{Capacity null} (nf33): NequIP($L\!=\!2$) with
    $\text{num\_features}=33$ instead of 32, adding $\sim$8\% more
    parameters without changing the polynomial structure of the readout.
  \item \emph{Depth sweep}: NequIP($L\!=\!2$) with 2, 6, and 8
    interaction layers (vs.\ default 4), varying within-ceiling
    backbone fidelity while holding the ideal readout boundary
    $d_r L$ fixed (depth does not relocate the cliff;
    Appendix~\ref{app:depth}).
  \item \emph{Alternative body frame}: Injection using atom triple
    $(2, 6, 1)$ instead of $(5, 3, 0)$ to test frame independence.
\end{itemize}

\section{Results}
\label{sec:results}

\subsection{The Spectral Cliff}
\label{sec:cliff}

The headline result is an empirical molecular diagnostic outcome
consistent with the single-direction polynomial-span calibration of
Proposition~\ref{prop:soft}.  Table~\ref{tab:cliff} shows the
recovery fraction across injected angular degrees for the SPN
readout ($d_r\!=\!2$, $L\!=\!2$, ideal probe boundary at
$d_r L\!=\!4$).

\begin{table}[h]
\centering
\caption{Recovery fraction $\rho$ for the SPN hybrid on aspirin
($4\times$ amplitude, $n\!=\!5$ SPN seeds; $y$ is normalized force
MAE per component, $y\!=\!\mathrm{MAE}/\sigma_F$).  The SPN shows
measurable recoverability up to the ideal $d_r L\!=\!4$ probe
boundary and near-collapse one degree above, with a sharpness index
$\Xi\!=\!11.7$ [95\% CI $8.0$--$19.2$].  $\rho$ values and CIs are
computed seed-wise: mean across $n\!=\!5$ SPN-head seeds with
$B\!=\!10{,}000$ bootstrap-mean 95\% CIs (RNG seed 42).  The displayed $y$ columns are rounded aggregate means; $\rho$ is
computed seed-wise from unrounded values before bootstrapping, and
the common $\sigma_F$ normalization cancels in the ratio.
Figure~\ref{fig:cliff}A uses the same statistic.}
\label{tab:cliff}
\small
\begin{tabular}{cccccc}
\toprule
$l_{\mathrm{inj}}$ & $y_{L=2}$ & $y_{\mathrm{SPN}}$ & $y_{L=4}$ & $\rho$ [95\% CI] & Interpretation \\
\midrule
3 & 0.159 & 0.136 & 0.129 & $0.73\,[0.70,0.77]$ & Inside $d \cdot L$ boundary \\
4 & 0.166 & 0.134 & 0.132 & $\mathbf{0.91\,[0.86,0.97]}$ & At $d \cdot L$ boundary \\
5 & 0.142 & 0.141 & 0.128 & $0.08\,[0.05,0.11]$ & Beyond boundary---cliff \\
\midrule
\multicolumn{5}{l}{Sharpness: $\Xi = \rho(4)/\rho(5) =$} & $\mathbf{11.7\times\,[8.0,19.2]}$ \\
\bottomrule
\end{tabular}
\end{table}

The SPN recovers 91\% of $L\!=\!4$'s advantage at $l\!=\!4$ (the
ideal probe boundary), then collapses to 8\% at $l\!=\!5$---a
$11.7\times$ sharpness ratio (Figure~\ref{fig:cliff}).  This sharp
transition is an empirical diagnostic signature consistent with the
single-direction polynomial-span calibration
(Proposition~\ref{prop:soft}): a degree-$d_r\!=\!2$ ideal probe of
the $L\!=\!2$ feature class is calibrated to reach $\ell\!\le\!4$,
and the implemented SPN exhibits measurable recoverability at the
boundary and near-collapse one degree above.  The implemented SPN
is a phase-blind norm-extracting head; what we measure is its
empirical recoverability profile, not a function-class span
statement (Remark~\ref{rem:scope}).

\paragraph{What the $n\!=\!5$ headline measures, and what it does not.}
The $\Xi\!=\!11.7$ figure is computed from $n\!=\!5$ SPN-head seeds
attached to one trained $L\!=\!2$ backbone on the pre-specified
low-leakage train/val/test split (split-seed~13; chosen by the
split-leakage screen of \S\ref{sec:protocol} using only split-level
SH-coefficient leakage scores, before any backbone or SPN
evaluation).  This is a
within-backbone reading: it characterizes the cliff conditional on
a backbone that preserved the high-$\ell$ residual.  Backbone-to-
backbone variation is a separate axis
(Appendix~\ref{sec:reliability}'s spectral-neglect phenomenon) and
we report it directly in Appendix~\ref{app:bbseed_cliff}: across
$n\!=\!4$ \emph{independent} $L\!=\!2$ backbones, the SPN gain
$\Delta$ shrinks by $5.7\!\times$ between $\ell\!=\!4$ and
$\ell\!=\!5$ (mean $\Delta(4)\!=\!0.142$ vs.\ $\Delta(5)\!=\!0.025$\,
kcal/mol/\AA), showing the \emph{boundary-vs-above contrast} at the
across-backbone scale, with point-estimate spread $[0.036,0.283]$ at
$\ell\!=\!4$ that explains why we report $\rho\!=\!0.913$ as a
single-backbone measurement rather than a hierarchical population
estimate.  A full hierarchical CI \emph{for the anchor-relative recovery
fraction $\rho$} would require matched $L\!=\!4$ retraining at every
backbone seed and is out of scope here; the denominator-free
across-backbone $\Delta$ contrast (Figure~\ref{fig:cliff}C, also
hierarchical cluster bootstrap on backbones) supplies the
population-scale corroboration instead.  Within-backbone $\Xi$ and
across-backbone $\Delta$ together pin down the cliff at both scales
we can currently afford.

\paragraph{Evidence hierarchy (read alongside Figure~\ref{fig:cliff}).}
\textit{Primary:} the within-checkpoint $\rho$-cliff
(Table~\ref{tab:cliff}, Figure~\ref{fig:cliff}A).
\textit{Independent corroboration:} the denominator-free
$R^2_{\mathrm{inj}}$ on autograd-differentiated injection forces
(Figure~\ref{fig:cliff}B) and the across-backbone $\Delta$-contrast
on $n\!=\!4$ independent $L\!=\!2$ backbones
(Figure~\ref{fig:cliff}C, Appendix~\ref{app:bbseed_cliff}).
\textit{Non-claims:} EquiformerV2 is a within-ceiling positive
diagnostic outcome (above-ceiling cells not tested);
PaiNN is a weak-baseline rescue case ($r_{\mathrm{self}}$ does not
separate within from above); the protein appendix is an input-side
one-hop bandwidth measurement, not a function-class statement about
end-to-end protein models.

\begin{figure}[h]
\centering
\includegraphics[width=\textwidth]{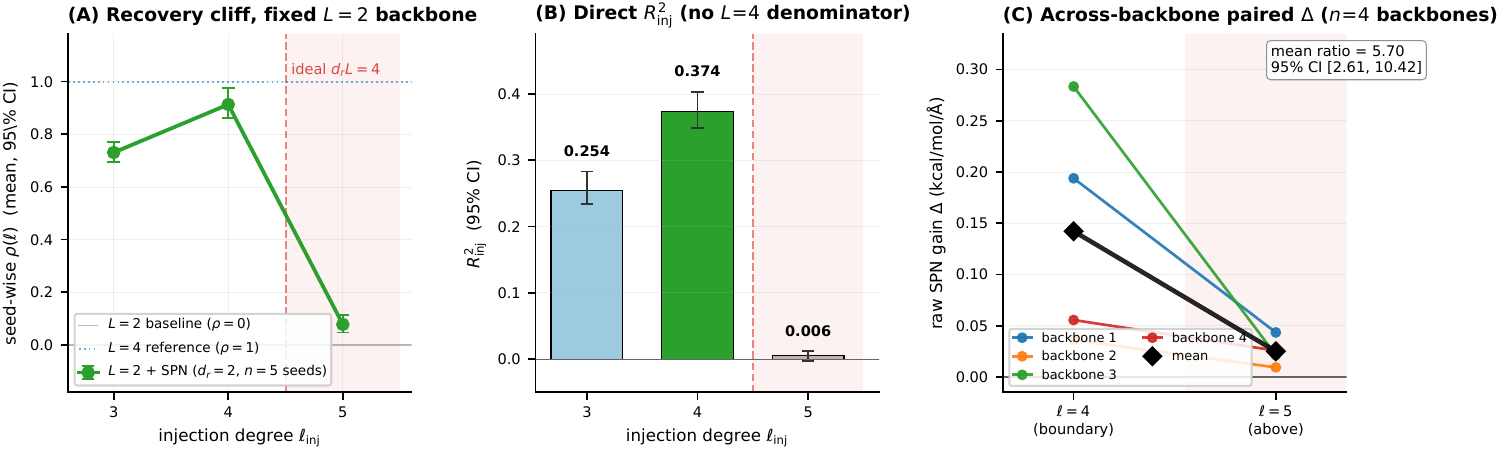}
\caption{Three complementary measurements of the aspirin spectral
cliff at the ideal $d_r L\!=\!4$ boundary.
\textbf{(A)} Anchor-relative recovery on a fixed $L\!=\!2$ NequIP
backbone drops from $\rho(4)\!=\!0.913$ [$0.861$, $0.975$] to
$\rho(5)\!=\!0.078$ [$0.049$, $0.113$].
\textbf{(B)} A denominator-free injected-residual metric shows the
same boundary-vs-above contrast:
$R^2_{\mathrm{inj}}(4)\!=\!0.374$ [$0.348$, $0.404$] versus
$R^2_{\mathrm{inj}}(5)\!=\!0.006$ [$-0.003$, $0.012$].
\textbf{(C)} Across $n\!=\!4$ independently trained $L\!=\!2$
backbones, the raw SPN gain $\Delta$ at $\ell\!=\!4$ exceeds that
at $\ell\!=\!5$ by a mean ratio of $5.70$ (95\% CI $[2.61, 10.42]$,
hierarchical cluster bootstrap).
Panels B and C provide complementary checks against
$L\!=\!4$-denominator dependence and single-checkpoint dependence:
all three panels separate the boundary cell ($\ell\!=\!4$) from the
above-ceiling cell ($\ell\!=\!5$) with disjoint 95\% CIs, although
they measure different quantities (anchor-relative recovery~$\rho$,
denominator-free injected-residual~$R^2_{\mathrm{inj}}$, and
across-backbone raw SPN gain~$\Delta$).}
\label{fig:cliff}
\end{figure}

\paragraph{Basis non-vacuity check.}
A body-frame decomposition of uninjected energies confirms that the
high-$\ell$ basis functions used by the diagnostic are not vacuous
on the molecular geometries we test (Appendix~\ref{app:natural_spectrum},
Table~\ref{tab:natural_spectrum}); body-frame angular power is a
basis-occupancy property, not a model-required-$L$ lower bound.

\subsection{Information vs.\ Geometric Capacity (summary)}
\label{sec:separation}
Width (an information-capacity lever) does not extend the
diagnostic's reach: at $\mathrm{nf}\!=\!33$ ($+8\%$ params,
no SPN), $\rho(4)\!=\!-0.42$.  The SPN ($d_r\!=\!2$,
$+5.7\%$ params, geometric capacity) recovers $\rho(4)\!=\!0.94$
with $n\!=\!16$ SPN seeds.  The width-vs-SPN-vs-depth comparison
(Table~\ref{tab:checkmate} in Appendix~\ref{app:separation_full})
shows that the SPN matches an 8-layer L=2 backbone at $42\!\times$
fewer additional parameters; depth at fixed $(L_{\mathcal{B}},d_r)$
improves within-ceiling fidelity but does not move the cliff
location (Lemma~\ref{lem:composition}).

\paragraph{Cross-molecule reads.}
On four sGDML CCSD/CCSD(T) molecules, aspirin and malonaldehyde
yield boundary contrast under sufficient injected signal; ethanol
and toluene fall below the SNR/denominator thresholds and are
reported as diagnostic limitations
(Appendix~\ref{app:crossmol_full}).
\label{sec:cross}

\paragraph{Cross-architecture matrix.}
Across four CG backbones (Table~\ref{tab:cubic_xarch_main}), only
NequIP and MACE have both within- and above-ceiling cells under a
matched protocol; $r_{\mathrm{self}}$ magnitudes are not comparable
across rows. Raw values: Appendix~\ref{app:cubic_full}.

\begin{table}[h]
\centering
\caption{Cross-architecture cubic-readout regime classification
(not a magnitude comparison). $r_{\mathrm{self}}$ uses each
architecture's raw frozen backbone as base (no trained linear head);
NequIP's $\rho_{\mathrm{anchor}}$ is matched against $L\!=\!4$.
Raw values: Appendix~\ref{app:cubic_full}.}
\label{tab:cubic_xarch_main}
\small
\begin{tabular}{lcccc}
\toprule
Backbone & $L_{\mathcal{B}}$ & $d_r$ & tested cells & interpretation \\
\midrule
NequIP        & 2 & 3 & within + above & cliff measurement \\
MACE          & 2 & 3 & within + above & cliff measurement \\
EquiformerV2  & 3 & 3 & within only    & not a cliff test \\
PaiNN         & 1 & 3 & weak baseline  & not a cliff test (rescue) \\
\bottomrule
\end{tabular}
\end{table}

\section{Related Work}
\label{sec:related}

\textbf{Spectral bias and equivariant ceilings.}
Spectral bias~\cite{rahaman2019spectral} is a learning-order
preference for low frequencies in MLPs; we characterise a different
object---an algebraic ceiling at $d_r L$ on degree-bounded
single-direction polynomial readouts of equivariant features.

\textbf{Body order and basis completeness.}
ACE~\cite{drautz2019atomic,drautz2020atomic,bachmayr2022atomic}
and recursive CG~\cite{nigam2020recursive} establish basis
completeness on $\nu$-body invariants up to $\nu L$;
Proposition~\ref{prop:soft} restates this with a multiplicity-one
witness (Proposition~\ref{prop:saturation},
Appendix~\ref{app:full_proofs}). Pozdnyakov \& Ceriotti's
\cite{pozdnyakov2020incompleteness} configuration-space resolving
question is orthogonal to our spectral-support
$\Pl_{L,d}\!=\!\Hl_{\le dL}$; beyond degree-bounded
probes~\cite{nigam2024completeness,bigi2024wigner} restore universal
approximation at polynomial cost.

\textbf{Equivariant architectures.}
TFN~\cite{thomas2018tensor,kondor2018clebsch},
Cormorant~\cite{anderson2019cormorant},
NequIP~\cite{batzner2022nequip}, MACE~\cite{batatia2022mace},
Allegro~\cite{musaelian2023allegro},
Equiformer~\cite{liao2023equiformer} truncate SH features at some
$L$. Our calibration applies when the predictor is read through
degree-bounded CG irrep-grade bookkeeping; each architecture still
needs an empirical diagnostic
(Section~\ref{sec:cliff}, Appendix~\ref{sec:cubic}).
PaiNN~\cite{schutt2021painn}, SchNet~\cite{schutt2017schnet}
illustrate low-$L$ consequences.

\textbf{Design space, expressiveness, frames.}
Design-space mappings~\cite{batatia2024designspace,xie2025price,xu2024pace}
report diminishing returns at high $L$ or $\nu$ (matched by
Corollary~\ref{cor:finiteK} when $dL\!\ge\!\ell_{\max}$).
Expressivity hierarchies~\cite{joshi2023expressive,dym2021universality,pacini2025universality}
and irrep-utilisation~\cite{lee2024deconstructing} are consistent
with our boundary; frame
methods~\cite{puny2022frame,duval2023faenet} break equivariance for
higher angular content (our body-frame injection is analytical).

\section{Conclusion}
\label{sec:conclusion}

A spectral-injection diagnostic calibrated by the single-direction
polynomial-span theorem (Proposition~\ref{prop:soft}) measures
trained-backbone angular recoverability---empirical, not a
function-class bound. The aspirin $d_r L\!=\!4$ boundary appears
under three complementary measurements (Figure~\ref{fig:cliff}) with
$C_5$ controls, separated from parameter count and limited by SNR,
frame, and backbone fidelity (Remark~\ref{rem:scope}; details in
Appendices~\ref{app:steth_workflow}, \ref{app:geom_info_capacity},
\ref{app:fidelity_bound_full}, \ref{app:design_principles},
\ref{app:reliability_horizon}).

\bibliographystyle{plain}

\appendix

\section{Cross-Molecule Detailed Analysis}
\label{app:crossmol_full}

We apply the diagnostic to four chemically diverse molecules from
the sGDML CCSD/CCSD(T) dataset: aspirin (21~atoms, aromatic +
multi-functional), ethanol (9~atoms, aliphatic flexibility),
malonaldehyde (9~atoms, intramolecular proton transfer), and toluene
(15~atoms, aromatic rigidity).  The cross-molecule reading is
\emph{amplitude- and SNR-dependent diagnostic outcomes}, not
uniform evidence of a universal molecular cliff
(Table~\ref{tab:crossmol_outcomes}).  The cleanest within-vs-above
contrast is malonaldehyde at $8\!\times$ amplitude
(Table~\ref{tab:malo_cliff}, sign reversal between $\ell\!=\!4$ and
$\ell\!=\!5$); ethanol and toluene are below-SNR or
denominator-fragile cases that we report as diagnostic limitations
rather than as cliff measurements.

\begin{table}[h]
\centering
\caption{Cross-molecule diagnostic outcomes.  ``Cliff measurement''
means a sharp within-to-above boundary contrast: large positive
recovery at or below the predicted boundary and near-zero or
negative SPN gain $\Delta$ above it.  In malonaldehyde this appears as
a sign reversal of raw $\Delta$; in aspirin as a near-collapse of
$\rho$.  ``Below SNR'' means the injected variance share is below
the heuristic amplitude-selection rule of
Remark~\ref{rem:snr_threshold} at the listed amplitude.
``Denominator-fragile'' means the $L\!=\!4$ reference overfits the
small training set, making $\rho$ ill-defined.}
\label{tab:crossmol_outcomes}
\small
\setlength{\tabcolsep}{4pt}
\resizebox{\textwidth}{!}{%
\begin{tabular}{lccl}
\toprule
Molecule & amplitude & regime at $\ell\!=\!4\!\to\!5$ & note \\
\midrule
aspirin          & $4\!\times$ & cliff measurement
                 & $\rho(4)\!=\!0.913\!\to\!\rho(5)\!=\!0.078$ (Table~\ref{tab:cliff}) \\
malonaldehyde    & $8\!\times$ & cliff measurement (sign reversal)
                 & $\Delta(4)\!>\!0$, $\Delta(5)\!<\!0$ (Table~\ref{tab:malo_cliff}) \\
ethanol, toluene & $4\!\times$ & below SNR / denom.\ fragile
                 & reported in appendix as diagnostic limitation \\
\bottomrule
\end{tabular}}
\end{table}

The full $L\!=\!2$/$L\!=\!4$ gap table, the malonaldehyde across-$\ell$
sweep, and the ethanol/toluene SNR analysis are below; the $\rho$
metric is reported when the $L\!=\!4$ reference admits a positive
denominator and we report raw $\Delta$ otherwise.

\begin{table}[h]
\centering
\caption{Raw $L\!=\!2$ vs.\ $L\!=\!4$ reference gap across four
molecules at $l_{\mathrm{inj}} = 4$ ($4\times$ amplitude).  These
gaps motivate molecule-specific diagnostic reads; they do not by
themselves establish a uniform molecular cliff.  Magnitude reflects
training-data scale and molecule-specific force scale, not the
algebraic ceiling.}
\label{tab:cross}
\small
\begin{tabular}{llcccc}
\toprule
Molecule & PES Character & $y_{L=2}$ & $y_{L=4}$ & Gap & $\sigma_F$ \\
\midrule
Aspirin       & Multi-functional   & 0.169 & 0.123 & 0.045 & 35.7 \\
Ethanol       & Aliphatic          & 0.181 & 0.143 & 0.038 & 29.0 \\
Malonaldehyde & Proton transfer    & 0.218 & 0.210 & 0.008 & 34.0 \\
Toluene       & Aromatic rigid     & 0.337 & 0.430$^\dagger$ & $-$0.093 & 46.4 \\
\bottomrule
\multicolumn{6}{l}{\footnotesize $^\dagger$~$L\!=\!4$ model overfits on small toluene dataset (950 training frames, 15 atoms).}\\
\end{tabular}
\end{table}

The $L\!=\!2$ vs.\ $L\!=\!4$ gap at $l\!=\!4$ is consistent across
aspirin (0.045) and ethanol (0.038).  Malonaldehyde shows a smaller
gap at $l\!=\!4$ (0.008), but this is consistent with the boundary
reading: $l=4$ sits \emph{at} the $L\!=\!2$ ceiling boundary
($d \cdot L = 4$), so the
gap should be small.  The critical test is whether the gap
\emph{increases} for $l > d \cdot L$.  Table~\ref{tab:malo_cliff}
supports this: the gap increases sharply from $l=4$ to $l=5$,
across the predicted boundary.

\begin{table}[h]
\centering
\caption{Malonaldehyde $L\!=\!2$ vs.\ $L\!=\!4$ gap across injection
frequencies.  The gap jumps $8\times$ between $l\!=\!4$ (at the ceiling)
and $l\!=\!5$ (above the ceiling), consistent with the diagnostic cliff at
$d \cdot L = 4$.  At $l\!=\!6$, both models exceed their effective
ceilings and the larger $L\!=\!4$ model overfits.}
\label{tab:malo_cliff}
\small
\begin{tabular}{ccccc}
\toprule
$l_{\mathrm{inj}}$ & $y_{L=2}$ & $y_{L=4}$ & Gap & Ceiling status \\
\midrule
2 & 0.162 & 0.161 & 0.002 & $l < d \cdot L$ \\
4 & 0.218 & 0.210 & 0.008 & $l = d \cdot L$ \\
5 & 0.287 & 0.224 & \textbf{0.063} & $l > d \cdot L$ \\
6 & 0.342 & 0.366$^\dagger$ & $-$0.024 & $l \gg d \cdot L$ \\
\bottomrule
\multicolumn{5}{l}{\footnotesize $^\dagger$~$L\!=\!4$ model also degrades; both models above their respective ceilings.}\\
\end{tabular}
\end{table}

This multi-frequency sweep is arguably stronger evidence than the
single-point gap in Table~\ref{tab:cross}: it shows both the
\emph{location} and the \emph{sharpness} of the cliff within a single
molecule.  At $l\!=\!6$, both models operate above their effective
ceiling ($l > d \cdot L$ for $L\!=\!2$ and approaching $d \cdot L$
for $L\!=\!4$), so the $L\!=\!4$ advantage vanishes and the gap
reverses due to overfitting of the larger model.

Toluene presents a negative gap at $l\!=\!4$
(Table~\ref{tab:cross}), with the $L\!=\!4$ backbone
\emph{worse} than $L\!=\!2$.  This is not a failure of the spectral
ceiling but a finite-data overfitting effect: toluene has only
950~training frames for 15~atoms, and the $L\!=\!4$ model's extra
capacity cannot be utilized without sufficient data.  Importantly,
at $l\!=\!2$ (below the ceiling for both models), the gap is
negligible (0.002), showing that overfitting is triggered by
the additional unused capacity at higher~$L$.

For cross-molecule SPN measurements we report $n\!=\!5$ independent
seeds per molecule (aggregated from 126 runs; see artifact manifest).
Cliff
\emph{detectability} tracks the injection signal-to-noise ratio
predicted by Remark~\ref{rem:snr_threshold}
($\eta_{\min}\!\approx\!20\%$ at our test-set size and $\alpha\!=\!0.05$).
Aspirin, with variance share $\eta\!\approx\!25\%$ above threshold,
yields the sharp cliff $\rho(4)\!=\!0.913\,[0.861,0.975]$,
$\rho(5)\!=\!0.078\,[0.049,0.113]$.  Malonaldehyde, with comparable
$\eta$, shows the same boundary-vs-above pattern on a chemically distinct system:
$\rho(4)\!=\!1.38\,[1.22,1.54]$ collapses to $\rho(5)\!=\!0.002\,[0.000,0.004]$
(sharpness $\Xi\!=\!\rho(4)/\rho(5)\!\ge\!700$, 95\% CI
well-separated from unity).  The above-unity $\rho(4)$ on malonaldehyde
reflects a finite-data effect on the $L\!=\!4$ reference---the SPN
marginally outperforms $L\!=\!4$ at one molecule, orthogonal to the
cliff transition itself (the collapse at $l\!=\!5$ is unambiguous).
Ethanol, with $\eta\!\approx\!9.5\%$ \emph{below} the detection
threshold at $4\!\times$ amplitude, produces recovery values
$\rho(4)\!=\!0.098\,[0.077,0.117]$ and
$\rho(5)\!=\!0.099\,[0.055,0.140]$ with overlapping $95\%$~CIs---the
cliff signal falls below the statistical floor, consistent with the
heuristic amplitude-selection rule of Remark~\ref{rem:snr_threshold}
when $\eta\!<\!\eta_{\min}$.
\emph{$8\!\times$ amplitude diagnostic check.}
Retraining ethanol $L\!=\!2$ backbones at $8\!\times$ amplitude
(raising $\eta$ above threshold) and probing with the same SPN head
produces a sharp cliff: $n\!=\!5$ SPN seeds give
$\Delta(\ell\!=\!4)\!=\!5.4\pm0.09$\,kcal/mol/\AA\ (within ceiling,
large recovery) and $\Delta(\ell\!=\!5)\!=\!0.06\pm0.04$\,kcal/mol/\AA\
(above ceiling, collapse)---a ratio of
$\Delta(4)/\Delta(5)\!\approx\!90$, sharper than the aspirin cliff,
consistent with the heuristic amplitude-selection reading once
$\eta$ crosses the detectability rule of
Remark~\ref{rem:snr_threshold}.  The same backbone and SPN that
produced a flat $\rho$-profile at $4\!\times$ amplitude thus produce
an unambiguous cliff once the injection signal variance-share is
raised.  We read this as empirical support for the heuristic
amplitude-selection rule rather than as a formal validation: the
cliff is consistent with being present at the diagnostic level, with
amplitude controlling detectability.  An independent cross-molecule check on
malonaldehyde (also $L\!=\!2$ at $8\!\times$, $n\!=\!5$ SPN seeds)
shows the same boundary-vs-above \emph{sign} pattern:
$\Delta(\ell\!=\!4)\!=\!+0.37\!\pm\!0.06$\,kcal/mol/\AA\ (within
ceiling, recovery) versus $\Delta(\ell\!=\!5)\!=\!-0.16\!\pm\!0.14$
(above ceiling, SPN actively degrades---all five seeds in $[-0.31,
-0.01]$).  The ethanol-vs-malonaldehyde $\Delta$-magnitude difference
reflects molecule-specific force scales; the architecture-agnostic
signal is the sign reversal across the predicted ceiling.  Toluene at $l\!=\!4$ exhibits non-positive
$\rho$ denominators (the $L\!=\!4$ reference over-fits the 950-frame
training set at 15~atoms), so we report raw gaps in
Table~\ref{tab:cross} rather than $\rho$ for that molecule.  In no
molecule does a reversed cliff ($\rho(5)\!>\!\rho(4)$) appear;
Table~\ref{tab:malo_cliff} locates the cliff at the algebraic
boundary $l\!=\!4$ for the single molecule (malonaldehyde) where all
five seeds clear the SNR threshold at multiple $l$~values.

\paragraph{Cross-molecule extension at higher~$L$ and~$d_r$.}
The cubic readout extension (Appendix~\ref{sec:cubic}) shows an
analogous cross-molecule pattern.  At $L\!=\!2$ on EquiformerV2, the cubic SPN
($d_r\!=\!3$, ideal boundary $d_r L\!=\!6$) yields strictly positive
recovery on ethanol, malonaldehyde, and toluene at every
$\ell_{\mathrm{inj}}\!\in\!\{4,5,6\}$
(Table~\ref{tab:cubic_eqv2_xmol}, $\rho\!\in\![0.29,0.68]$).  At
$L\!=\!3$ on the same three molecules the cubic SPN at
$\ell_{\mathrm{inj}}\!\in\!\{5,6,7\}$ collapses to $\rho\!\le\!0.005$
across $9\!\times\!5\!=\!45$ runs.  All probed cells are within the
predicted ceiling $d_r L\!=\!9$, so this is a within-boundary null outcome
(see Appendix~\ref{sec:cubic} for a careful reading: the null is consistent
with fidelity-saturation, fidelity-bound, or training-schedule
explanations, but our experiments do not distinguish them and we do
not present it as positive evidence for reach).  The interpretable
cross-molecule signal is the $L\!=\!2$ EqV2 result
(Table~\ref{tab:cubic_eqv2_xmol}, $\rho\!\in\![0.29,0.68]$); the
$L\!=\!3$ null lift is included as a regime-boundary observation,
not as evidence that the cliff repeats at higher backbone $L$.

\paragraph{Frame-robustness (summary).}
The cliff location is invariant to body-frame choice (two atom triples
tested, same $L\!=\!2$ backbone); details and table are in
Appendix~\ref{app:frame_robustness}.

\paragraph{Architectural controls (L=0 and non-equivariant baseline).}
$L\!=\!0$ (SchNet) and a plain MLP both fail to show any cliff signature,
consistent with the diagnostic depending on equivariant SH features.
Full numbers, the C$_5$ synthetic capacity-independence figure, and the
$L_b\!\in\!\{0,1,2,3,4\}$ boundary-shift sweep are in
Appendix~\ref{app:arch_controls}.

\paragraph{Cubic readouts and cross-architecture probes (summary).}
We extend the SPN to a cubic readout ($d_r\!=\!3$) and apply it to MACE,
EquiformerV2, and PaiNN backbones.  Table~\ref{tab:cubic_xarch_summary}
summarises the cross-architecture diagnostic outcomes by regime
(NequIP/MACE: cliff measurements; EquiformerV2: within-only positive
diagnostic outcome; PaiNN: weak-baseline rescue).  Per-architecture
tables (probe-degree sweep, MACE cubic, PaiNN cubic, EqV2 cross-molecule),
the width-driven cubic transition, and the within-boundary null-lift regime
at $L\!=\!3$ are deferred to Appendix~\ref{app:cubic_full}.


\paragraph{Amplitude/per-layer/width-independence ablations.}
Amplitude invariance, per-layer spectral probing, and width-independence
ablations all support the cliff signature without changing the diagnostic's
core conclusions.  See Appendix~\ref{app:micro_ablations} for full details.

\section{Information vs.\ Geometric Capacity (full discussion)}
\label{app:separation_full}

A natural objection is that the SPN's advantage might stem from its
additional parameters rather than its polynomial structure.  We
introduce two controls that cleanly separate \emph{information
capacity} (parameter count) from \emph{geometric capacity} (the
ideal probe boundary $d_r L$).

\textbf{Capacity null (nf33).}
We train NequIP($L\!=\!2$) with $\text{num\_features}=33$ (915K
parameters, 8\% more than the 849K baseline) on the same injected
datasets.  If spectral reach scaled with parameter count, nf33 should
show improved recovery at $l > 2L$.

\begin{table}[h]
\centering
\caption{Recovery fraction comparison: information capacity (nf33)
vs.\ geometric capacity (SPN) on aspirin ($4\times$ amplitude,
split seed~13; $n\!=\!16$ SPN seeds on this backbone give CV$\!=\!1.7\%$
around $\rho\!=\!0.94$).  Adding 8\% more parameters (nf33) yields
zero measurable recovery beyond the within-boundary baseline;
adding a polynomial readout (SPN, 5.7\% overhead) recovers a large
anchor-relative fraction at the boundary.  Multi-backbone aggregate
in Table~\ref{tab:cliff}.}
\label{tab:separation}
\small
\begin{tabular}{cccccc}
\toprule
$l_{\mathrm{inj}}$ & $y_{L=2}$ & $y_{\text{nf33}}$ & $\rho_{\text{nf33}}$
  & $y_{\text{SPN}}$ & $\rho_{\text{SPN}}$ \\
\midrule
3 & 0.164 & 0.164 & $\mathbf{0.00}$ & 0.137 & 0.77 \\
4 & 0.171 & 0.187 & $\mathbf{-0.42}$ & 0.134 & 0.94 \\
5 & 0.142 & 0.168 & $\mathbf{-1.96}$ & 0.139 & 0.18 \\
\bottomrule
\end{tabular}
\end{table}

Table~\ref{tab:separation} shows the result: nf33 provides
\emph{zero} measurable recovery beyond the within-boundary baseline.
Its recovery fraction is indistinguishable from zero at $l\!=\!3$
and \emph{negative} at $l\!=\!4$ and $l\!=\!5$ (additional features
slightly degrade
performance, consistent with overfitting to a wider but
spectrally-equivalent feature space).  Meanwhile, the SPN---with
\emph{fewer} additional parameters (48K vs.\ 66K)---achieves 94\%
recovery at $l\!=\!4$ (mean over $n = 16$ seeds, CV~$= 1.7\%$).

This establishes a clean separation:
\begin{itemize}
  \item \textbf{Information capacity} (adding parameters without
    changing polynomial structure): no measurable recovery beyond
    the within-boundary baseline.
  \item \textbf{Geometric capacity} (changing the polynomial
    degree~$d$ of the readout): the ideal degree-$d$ probe is
    calibrated to the $d\!\cdot\!L$ boundary.
\end{itemize}

\textbf{Frequency contrast on nf33.}
The separation is further confirmed by probing the \emph{same} nf33
backbone at different injection frequencies.  Grafting an SPN onto the
nf33 backbone at $l\!=\!3$ (within the $d \cdot L = 4$ ceiling)
recovers a test force MAE of 4.26~kcal/mol/\AA{} from a baseline of
5.90---an improvement of 1.64~kcal/mol/\AA.  At $l\!=\!4$ (at the
ceiling boundary), $\rho = -0.03$: the SPN cannot recover the signal.
At $l\!=\!5$ (above the ceiling), the SPN's validation loss rises
monotonically from 5.82 to 6.66 across 110 epochs---actively
\emph{hurting} performance.  Same backbone, same SPN architecture,
different injection frequency.  This rules out any capacity-based
explanation: the nf33 backbone has sufficient information capacity
to support SPN recovery at $l\!=\!3$ but not at $l\!\ge\!4$,
consistent with the ideal $d\cdot L$ calibration.

\textbf{Depth sweep.}
\label{app:depth}
The compositional Lemma~\ref{lem:composition}(i) gives a strict
output ceiling of $d_r L_{\mathcal{B}}$ that is \emph{independent of
backbone depth $T$}, with each layer's truncation gating any
transient higher-$\ell$ content (Lemma~\ref{lem:composition}(iii)).
At idealized truncation, depth therefore should not extend reach.
Empirically, however, more interaction layers do improve the
recovery fraction at fixed $L_{\mathcal{B}}$
(Table~\ref{tab:depth}, $\rho(4)$ rises from $-0.31$ at $T\!=\!2$ to
$0.96$ at $T\!=\!8$).  We read this as a partial-truncation effect,
not as a shift in the ideal boundary: scalar channels
($\Hl_0$) propagated through a residual / radial-MLP path interact
with subsequent CG couplings to produce within-ceiling fidelity
gains that the strict Lemma~\ref{lem:composition} idealization does
not capture.  The cliff \emph{location} (cf.\
Appendix~\ref{sec:cubic}, $\ell^\star\!=\!d_r L_{\mathcal{B}}$) does
not move with depth; what depth provides is improved within-ceiling
fidelity, which raises $\rho$ at $\ell\!=\!4$ when the $L\!=\!4$
reference is the normaliser.  We retain Table~\ref{tab:depth} as the
practical ranking among width / depth / SPN routes to a target
$\rho$, and use it to anchor the cost-tradeoff conclusion (the SPN
matches 8-layer reach at $42\!\times$ fewer additional parameters);
the strict-theorem reading is that depth should not move the cliff,
which is what we observe in the cubic-readout cross-architecture
sweep (Appendix~\ref{sec:cubic}).

\begin{table}[h]
\centering
\caption{Depth sweep at $l_{\mathrm{inj}} = 4$ with $L\!=\!2$.
At fixed $(L_{\mathcal{B}},d_r)$ Lemma~\ref{lem:composition} predicts a
depth-invariant strict cliff location.  Empirically, more interaction
layers raise the recovery fraction $\rho(4)$, which we attribute to
within-ceiling fidelity gain (and partial-truncation leakage in the
implementation), not to a shift in the ideal boundary.  The SPN
achieves $\rho(4)\!\approx\!0.94$ with $42\!\times$ fewer additional
parameters than the 8-layer route.}
\label{tab:depth}
\small
\begin{tabular}{lcccr}
\toprule
Architecture & Layers & $y_{\mathrm{ema}}$ & $\rho(4)$ & Total params \\
\midrule
NequIP($L\!=\!2$) & 2 & 0.183 & $-0.31$ & 71{,}808 \\
NequIP($L\!=\!2$) & 4 & 0.171 & $0.00$ & 849{,}024 \\
NequIP($L\!=\!2$) & 6 & 0.143 & $0.71$ & 1{,}850{,}496 \\
NequIP($L\!=\!2$) & 8 & 0.133 & $\mathbf{0.96}$ & 2{,}851{,}968 \\
\midrule
L=2 + SPN ($d\!=\!2$) & 4+head & 0.134 & $\mathbf{0.94}$ & 897{,}138 \\
NequIP($L\!=\!4$) & 4 & 0.131 & $1.00$ & 2{,}155{,}648 \\
\bottomrule
\end{tabular}
\end{table}

The depth sweep (Table~\ref{tab:depth}) reveals a monotonic
relationship between depth and the recovery fraction at
$l\!=\!4$: 2~layers give $\rho<0$, 6~layers recover 71\%, and
8~layers recover 96\%.  Under the strict reading of
Lemma~\ref{lem:composition} this should not happen (the output
ceiling is depth-independent at $d_r L_{\mathcal{B}}$); we attribute
the empirical depth-driven $\rho$ improvement to within-ceiling
fidelity gains and partial-truncation leakage in NequIP's actual
implementation, not to a shift in the ideal boundary.  The cliff
\emph{location}---which is what the theorem strictly predicts---is
tested separately and depth-invariantly via the cubic-readout
cross-architecture experiments of Appendix~\ref{sec:cubic}.


Table~\ref{tab:checkmate} consolidates these findings into a single
comparison at $l_{\mathrm{inj}} = 4$, the $d \cdot L$ boundary for $L\!=\!2$.
The three routes to boundary-cell recovery---width, depth, and
polynomial readout structure---are compared on equal footing:

\begin{table}[h]
\centering
\caption{Routes to within-ceiling recovery at $l_{\mathrm{inj}} = 4$
(aspirin, $4\times$).  Width (nf33) provides zero recovery despite
adding parameters.  Depth (6--8 layers) improves within-ceiling
fidelity at high parameter cost (Lemma~\ref{lem:composition} forbids
a strict cliff-location shift; the empirical $\rho$ gain is a
fidelity effect).  The SPN ($d_r\!=\!2$) attains large
anchor-relative recovery at the boundary with 5.7\% overhead;
the denominator-free metric reports partial injected-force-variance
recovery (see Remark~\ref{rem:rho_supplementary}).  $\rho$ is the recovery
fraction (Definition~\ref{def:rho}).}
\label{tab:checkmate}
\small
\begin{tabular}{lrcc}
\toprule
Architecture & Params & Force MAE$^\dagger$ & $\rho(4)$ \\
\midrule
NequIP($L\!=\!2$, 4 layers) & 849K & 6.11 & 0.00 \\
\quad + width (nf=33) & 915K & 6.69 & $-0.42$ \\
\quad + width (nf=33) + SPN & 963K & 6.15 & $-0.03$ \\
\quad + depth (6 layers) & 1,850K & 5.11 & 0.71 \\
\quad + depth (6 layers) + SPN & 1,899K & 5.11 & 0.71 \\
\quad + depth (8 layers) & 2,852K & 4.76 & 0.96 \\
\quad + \textbf{SPN ($d\!=\!2$, $n\!=\!16$)} & \textbf{897K} & \textbf{4.79} & \textbf{0.94} \\
\midrule
NequIP($L\!=\!4$, 4 layers) & 2,156K & 4.69 & 1.00 \\
\bottomrule
\multicolumn{4}{l}{\scriptsize $^\dagger$EMA checkpoint, kcal/mol/\AA.}
\end{tabular}
\end{table}

Three results are visible in Table~\ref{tab:checkmate}.  First,
\emph{width cannot substitute for reach}: the nf33 backbone (915K
params) achieves $\rho = -0.42$ (worse than the baseline), and
grafting an SPN onto it recovers only to $\rho = -0.03$---when the
backbone lacks spectral fidelity, even a polynomial readout cannot
recover the missing signal.  Second, \emph{depth improves within-ceiling fidelity at high cost}:
6 layers achieve $\rho = 0.71$, but adding an SPN on top yields
\emph{zero} further gain ($\rho = 0.71$; the SPN never saved a
checkpoint across 150 training epochs).  Under the strict reading
of Lemma~\ref{lem:composition}, depth does not move the cliff
location; we read this saturation as the deeper backbone having
already absorbed the recoverable within-ceiling content, leaving
nothing for the SPN to add.
Third, \emph{the SPN is the parameter-efficient route}: on the
shallow (4-layer) backbone, it achieves $\rho = 0.94$ ($n\!=\!16$
seeds) with 5.7\% parameter overhead, matching the 8-layer result
($\rho = 0.96$) at one-third the parameter budget.

\section{SPN Architecture and Algorithm (full)}
\label{app:spn_full}

The SPN serves as a \emph{diagnostic probe} of recoverable spectral
residue in a backbone's frozen features and, when such residue is
preserved by the backbone, as a lightweight corrective head.

\paragraph{Notation: readout body-order vs.\ backbone body-order.}
Throughout the paper we distinguish two spectral-capacity parameters
(Lemma~\ref{lem:composition}):
\begin{itemize}
\item $d_r$: the \emph{readout} polynomial degree---the number of CG
tensor-product couplings from the backbone interface to the
prediction.  The SPN has $d_r=2$ (norm extraction is quadratic); a
purely linear head would have $d_r=1$.
\item $d_b$: the \emph{backbone} cumulative body-order---for $T$
stacked interaction layers with correlations $\nu_1,\ldots,\nu_T$, the
equivariant features at the backbone output have body-order
$d_b=\prod_{t=1}^T\nu_t$ in the input edge vectors, up to truncation to
$\Hl_{\le L_{\mathcal{B}}}$.
\end{itemize}
By Lemma~\ref{lem:composition}(i), the \emph{operative} ceiling at the
predictor output is $d_r\!\cdot\!L_{\mathcal{B}}$.  The backbone's
higher-order body content $d_b\!\cdot\!L_{\mathcal{B}}$ is realized
internally but is gated by the truncation to~$L_{\mathcal{B}}$ at every
interface; content at angular degrees above~$L_{\mathcal{B}}$ is not
visible to the readout.  This has two consequences central to our
experimental design:
\begin{itemize}
\item[(a)] The cliff location at $\ell_{\mathrm{inj}}\!=\!d_r L_{\mathcal{B}}+1$
predicted by Proposition~\ref{prop:soft} is a property of the
\emph{readout}, not of $T$ or $\{\nu_t\}$. This is why the cliff is
empirically invariant to backbone depth (Section~\ref{sec:cliff},
$T\!\in\!\{2,4,6,8\}$).
\item[(b)] The high error exhibited by the \mbox{$L\!=\!2$} baseline at
$\ell_{\mathrm{inj}}\!>\!L_{\mathcal{B}}$ (denominator of~$\rho$,
Definition~\ref{def:rho}) is an \emph{empirical} performance gap
rather than a theorem prediction: the vanilla linear head realizes
$d_r\!=\!1$, whose theoretical ceiling is $\Hl_{\le L_{\mathcal{B}}}$,
but its measured shortfall at $\ell\!>\!L_{\mathcal{B}}$ reflects the
combination of this ceiling with optimization-limited saturation of
the backbone's $d_b L_{\mathcal{B}}$ internal capacity.  The
recovery fraction $\rho$ normalizes out both effects by anchoring to
the $L\!=\!4$ baseline.
\end{itemize}

\textbf{Architecture.}
The SPN is a lightweight readout head attached to a frozen equivariant
backbone (Algorithm~\ref{alg:spn}).  From an $L\!=\!2$ NequIP backbone
(849K parameters), we extract the intermediate equivariant features
($32 \times \{0^e, 0^o, 1^e, 1^o, 2^e, 2^o\}$, 192 channels).
An invariant extractor computes per-channel norms for $l > 0$
components ($\sqrt{\sum_m |c_l^m|^2}$) and passes $l\!=\!0$ values
directly, yielding 192 invariant features.  An MLP maps these to
$(L_{\mathrm{out}}+1)^2$ invariant scalar readout weights indexed by
$(\ell,m)$. These weights do not transform as equivariant
spherical-harmonic coefficients; the $(\ell,m)$ indexing is used only
to form the diagnostic power summary
$P_\ell\!=\!\sum_m(a_\ell^m)^2$ for each degree.  A final energy
MLP predicts per-atom energy contributions.

\begin{algorithm}[h]
\caption{SPN forward pass (per-atom energy).}
\label{alg:spn}
\begin{algorithmic}[1]
\Require atomic environment $\{\mathbf{r}_{ij}\}$; frozen backbone
$\mathcal{B}$ with truncation $L_{\mathcal{B}}$; readout polynomial
degree $d_r$; SH output truncation $L_\mathrm{out}$.
\State $\mathbf{h}_i \gets \mathcal{B}(\{\mathbf{r}_{ij}\})
  \in \bigoplus_{\ell=0}^{L_{\mathcal{B}}} V_\ell$
\Comment{frozen backbone, no gradient}
\State $\mathbf{s}_i \gets
  \bigl(\mathbf{h}_i^{(\ell=0)},\;
   \{\lVert \mathbf{h}_i^{(\ell)}\rVert\}_{\ell=1}^{L_{\mathcal{B}}}\bigr)$
\Comment{invariant extractor; $d_r$-fold norms for $\ell\!>\!0$}
\State $\mathbf{a}_i \gets \mathrm{MLP}_\theta(\mathbf{s}_i)
  \in \mathbb{R}^{(L_\mathrm{out}+1)^2}$
\Comment{invariant scalar spectral weights $a_i^{\ell m}$ (real; not equivariant SH coefficients)}
\State $P_i[\ell] \gets \textstyle\sum_{m=-\ell}^{\ell} (a_i^{\ell m})^2$
\Comment{per-degree sum-of-squares, $\ell\!=\!0,\ldots,L_\mathrm{out}$}
\State $E_i^{\mathrm{SPN}} \gets g_\phi(P_i)$
\Comment{scalar energy MLP}
\State \Return $E_i = E_i^{\mathcal{B}} + E_i^{\mathrm{SPN}}$
\Comment{forces obtained via autograd through both terms}
\end{algorithmic}
\end{algorithm}

\textbf{What this probes.}
The invariant-extraction step is degree-bounded by construction (a
$d_r$-fold CG contraction of degree-$L$ features), placing the
\emph{ideal} polynomial probe boundary at $d_r L$ via
Proposition~\ref{prop:soft}.  The implemented SPN further passes
those invariants through a scalar MLP, so we do \emph{not} claim
that the implementation realises the full
$\Pl_{L,d_r}\!=\!\Hl_{\le d_r L}$ span as a function class; the
per-degree invariant
$\sqrt{\sum_m \lvert h_i^{(\ell,m)}\rvert^2}$ in line~2 is
phase-blind and not a strict polynomial of $\mathbf{h}_i$, so $d_r$
denotes the nominal CG-contraction order / ideal probe boundary
rather than a strict polynomial degree.  We use the SPN as a
calibrated empirical diagnostic and stress-test its effective
spectral behaviour by the degree, capacity, and activation
ablations of Section~\ref{sec:results}.

\begin{remark}[Irrep type is not coordinate bandwidth]
\label{rem:irrep_type}
A scalar MLP applied to SO(3)-invariant inputs remains a scalar
irrep.  This is a statement about \emph{representation type}, not a
bound on the coordinate-frequency bandwidth of the MLP's output as a
function of atomic positions.  For example, a nonlinear function
$g(\hat u\!\cdot\!\hat v)$ is rotationally invariant yet can contain
arbitrarily high Legendre modes as a function of the relative angle.
Consequently, our formal polynomial-span propositions apply to
degree-bounded \emph{polynomial} probes; the implemented MLP-SPN is
treated as an empirical diagnostic and we do not claim it is
algebraically degree-bounded.  The identity/square/SiLU ablation
(\S\ref{sec:cliff}, ``Scalar-activation independence'') is reported
as an empirical negative control consistent with this calibration,
not as a proof that scalar MLPs cannot synthesise higher-frequency
coordinate content.
\end{remark}

\textbf{Training.}
The backbone is frozen (no gradient flow); only the SPN head
(48K parameters, 5.7\% of the backbone) is trained.  The total
prediction is $E = E_{\mathrm{backbone}} + E_{\mathrm{SPN}}$, with
forces via autograd.

\section{Full Proofs of Section~\ref{sec:theory} Theorems}
\label{app:full_proofs}

This appendix contains the proofs deferred from
Section~\ref{sec:theory}'s compressed main statement of the
polynomial--spectral correspondence (Proposition~\ref{prop:soft}
and Lemma~\ref{lem:composition}), together with the auxiliary
propositions, corollaries, and scope remarks that the main
text references.

\paragraph{Architectural scope.}
Proposition~\ref{prop:soft} and the auxiliary propositions below
are statements about the single-marked-direction polynomial space
$\Pl_{L,d}$.  They apply directly to any equivariant predictor
that, on a fixed body frame and for a fixed atomic environment,
expresses its scalar output as a polynomial of degree $\le d_r$ in
the direction-vector spherical harmonics of one marked edge.  For
a multi-atom MPNN using radial-times-SH embeddings coupled by
Clebsch--Gordan tensor products---the class that includes
TFN~\cite{thomas2018tensor}, NequIP~\cite{batzner2022nequip},
MACE~\cite{batatia2022mace},
Allegro~\cite{musaelian2023allegro},
Equiformer~\cite{liao2023equiformer}, and EquiformerV2 (via the
SO(2) reduction~\cite{passaro2023reducing})---the operative
output-side irrep-grade ceiling $d_r L$ at the predictor head is
supplied by the compositional Lemma~\ref{lem:composition} below,
which performs CG irrep-grade bookkeeping through the
message-passing graph by tracking the maximum number of CG
couplings on any forward path.  Cartesian-tensor constructions
(e.g., TensorNet~\cite{simeon2023tensornet}) are covered via
$\text{rank-}r\!\cong\!\bigoplus_{\ell\le r}\Hl_\ell$.
Architectures that abandon exact SO(3) equivariance (EGNN,
frame-breaking augmentation) lie outside scope; the theorem still
bounds their equivariant subcomponent but allows
symmetry-violating pathways to leak content across the boundary.
We do \emph{not} claim that an MPNN's full computational class
equals $\Hl_{\le dL}$ as a function of all atomic positions: the
inter-atomic geometry contributes additional structure that lies
outside a single-direction polynomial probe.  The cliff
predictions tested in Section~\ref{sec:results} are predictions
about the SPN diagnostic's reach (a single-direction probe by
construction), not literal set-equality on the MPNN function
class.

\subsection{Proof of the polynomial--spectral correspondence}

\begin{proof}[Proof of Lemma~\ref{lem:cg}]
By the Gaunt product formula
\[
  Y_{\ell_1}^{m_1}Y_{\ell_2}^{m_2}
  =\sum_{\ell,m}G^{\ell m}_{\ell_1m_1,\ell_2m_2}Y_\ell^m,
\]
nonzero only under the Clebsch--Gordan selection rules
$|\ell_1-\ell_2|\!\le\!\ell\!\le\!\ell_1+\ell_2$ and parity
$\ell_1\!+\!\ell_2\!+\!\ell$ even. Multiplying by a degree-$\ell_k\!\le\!L$
harmonic therefore raises the maximum degree by at most $\ell_k$.
Iterating over $\prod_{k=1}^d Y_{\ell_k}^{m_k}$ gives support only
up to $\sum_k\ell_k\le dL$.
\end{proof}

\begin{proposition}[Spectral Ceiling, used internally]
\label{prop:ceiling}
For $L\ge 1$ and $d\ge 1$: $\Pl_{L,d}\subseteq\Hl_{\le dL}$.
Equivalently, content at angular frequency $\ell>dL$ is
unreachable by any degree-$d$ polynomial readout of degree-$L$
features, regardless of width, parameterization, or training.
\end{proposition}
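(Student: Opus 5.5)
The plan is to reduce Proposition~\ref{prop:ceiling} to Lemma~\ref{lem:cg} by linearity, and then read off the ``unreachable'' reformulation from the orthogonal decomposition already used in Proposition~\ref{prop:hard}.

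First, fix what an element of $\Pl_{L,d}$ looks like. Any $p\in\Pl_{L,d}$ is a polynomial of total degree $\le d$ in the $(L+1)^2$ coordinates of $\phi_L(\rhat)$. So it is a finite linear combination of monomials $\prod_{k=1}^{j} Y_{\ell_k}^{m_k}(\rhat)$ with $0\le j\le d$ and each $\ell_k\le L$. The empty monomial ($j=0$) is the constant $1$, which is proportional to $Y_0^0$ and hence lies in $\Hl_0$.

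Second, apply Lemma~\ref{lem:cg} to each monomial of degree $j\ge1$. This places it in $\Hl_{\le jL}\subseteq\Hl_{\le dL}$, since $j\le d$ and $L\ge1$ make the spaces nested. For $j<d$ one can also pad with factors $Y_0^0$ (a nonzero constant) to reduce to exactly $d$ factors. Because $\Hl_{\le dL}$ is a linear subspace of $L^2(S^2)$, it is closed under the finite linear combinations in step one. This gives $\Pl_{L,d}\subseteq\Hl_{\le dL}$.

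For the equivalent formulation, use $L^2(S^2)=\Hl_{\le dL}\oplus\Hl_{>dL}$. Since every $p\in\Pl_{L,d}$ is orthogonal to $\Hl_\ell$ for all $\ell>dL$, for any target $f$ the component of $f$ in $\Hl_{>dL}$ is untouched by every choice of $p$. The error $\|f-p\|^2$ is therefore bounded below by $\sum_{\ell>dL}\sum_m|c_\ell^m|^2$, exactly as in the Parseval argument of Proposition~\ref{prop:hard}. This bound is a statement about the function class alone, so it holds regardless of width, parameterization, or training: these only select which $p$ is used.

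The only point I expect to need care is the iteration inside Lemma~\ref{lem:cg}. After the first Gaunt product, the partial product is a sum over several degrees, not a single harmonic. The induction must therefore be phrased as: if $g\in\Hl_{\le M}$ and $\ell'\le L$, then $g\,Y_{\ell'}^{m'}\in\Hl_{\le M+\ell'}$. This follows by expanding $g$ in the $Y_\ell^m$ with $\ell\le M$ and applying the Gaunt selection rule $\ell''\le\ell+\ell'$ termwise. Everything here is a finite sum, so there are no convergence issues. The parity rule is not needed for the inclusion.
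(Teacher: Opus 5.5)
Your proposal is correct and matches the paper's proof, which is the same one-line reduction: apply Lemma~\ref{lem:cg} to each monomial in the entries of $\phi_L$, then use that $\Pl_{L,d}$ is their linear span. Your extra steps (the lower-degree and constant monomials, phrasing the Lemma~\ref{lem:cg} induction for sums rather than single harmonics, and the Parseval reading of the ``unreachable'' clause) supply details the paper leaves implicit, but they do not change the argument.
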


\begin{proof}
Immediate from Lemma~\ref{lem:cg} and linearity: every monomial
in $\Pl_{L,d}$ lies in $\Hl_{\le dL}$, and $\Pl_{L,d}$ is the
linear span of such monomials.
\end{proof}

\begin{proposition}[Spectral Saturation, used internally]
\label{prop:saturation}
For $L\ge 1$ and $d\ge 1$: $\Hl_{\le dL}\subseteq\Pl_{L,d}$.
Moreover, the multiplicity of $\Hl_{dL}$ in the decomposition of
$\Hl_L^{\otimes d}$ is exactly one, so the bound is saturated by
a unique (up to scalar) $d$-fold stretched-state product.
\end{proposition}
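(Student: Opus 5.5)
The plan is to exploit rotation invariance, which turns the inclusion $\Hl_{\le dL}\subseteq\Pl_{L,d}$ into a question of which irreducible degrees occur in $\Pl_{L,d}$. The key witnesses are the complex highest-weight harmonics $(x+iy)^n$, which are explicitly products of lower-degree highest-weight harmonics.

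\emph{Step 1 (invariance and reduction to single degrees).}
I will first show that $\Pl_{L,d}$ is an $SO(3)$-invariant subspace of $L^2(S^2)$. Each $\Hl_\ell$ is rotation invariant, so for $g\in SO(3)$ we have $\phi_L(g^{-1}\rhat)=D(g)\,\phi_L(\rhat)$, where $D(g)=\bigoplus_{\ell\le L}D^\ell(g)$ is block-orthogonal. A polynomial of degree $\le d$ composed with a linear map is again a polynomial of degree $\le d$, so $\Pl_{L,d}$ is closed under rotation.

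By Proposition~\ref{prop:ceiling}, $\Pl_{L,d}$ is a finite-dimensional invariant subspace of $\Hl_{\le dL}$. The decomposition $L^2(S^2)=\bigoplus_\ell \Hl_\ell$ is multiplicity-free, with pairwise non-isomorphic summands. Hence every invariant subspace has the form $\bigoplus_{\ell\in S}\Hl_\ell$ for some index set $S$. It therefore suffices to show, for each $0\le n\le dL$, that some element of $\Pl_{L,d}$ has nonzero orthogonal projection onto $\Hl_n$. Once that holds, $n\in S$ and all of $\Hl_n\subseteq\Pl_{L,d}$; this replaces the ``lowering operator'' step of the sketch.

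\emph{Step 2 (explicit stretched witnesses).}
I will work with complex harmonics. On $S^2$ the complex $Y_\ell^\ell$ is a nonzero multiple of $(x+iy)^\ell$. The complex $Y_\ell^{\pm m}$ are $\mathbb{C}$-linear combinations of the real $Y_\ell^{\pm m}$. So the complexification of $\Pl_{L,d}$ is spanned by the same monomials with complex coefficients, and real and imaginary parts of its elements lie back in $\Pl_{L,d}$.

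For $n\le dL$, write $n=qL+r$ with $0\le r<L$. If $q=d$ then $r=0$.
\begin{itemize}
\item If $r=0$, the product $(Y_L^L)^q$ has degree $q\le d$.
\item If $r>0$, then $q<d$, and $(Y_L^L)^q\,Y_r^r$ has degree $q+1\le d$. Here $Y_r^r$ is a coordinate of $\phi_L$ because $r<L$.
\end{itemize}
In both cases the product equals $c\,(x+iy)^{qL+r}=c'\,Y_n^n$ with $c'\ne0$. It is therefore nonzero and lies entirely in $\Hl_n$. The case $n=0$ uses $Y_0^0$, i.e.\ constants. Taking the real or imaginary part gives a nonzero real element of $\Pl_{L,d}\cap\Hl_n$, which completes the inclusion. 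No Gaunt-coefficient nonvanishing computation is needed, because the product is exactly a single harmonic.

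\emph{Step 3 (multiplicity one).}
In $\Hl_L^{\otimes d}$, use the weight basis $Y_L^{m_1}\otimes\cdots\otimes Y_L^{m_d}$ for the $J_z$-action. The weight $dL$ occurs only for $m_1=\cdots=m_d=L$, so the $dL$-weight space is one-dimensional. Weights above $dL$ do not occur. Every copy of $\Hl_{dL}$ in the decomposition contributes one dimension to the weight-$dL$ space, so $\Hl_{dL}$ has multiplicity at most one. The highest-weight vector $(Y_L^L)^{\otimes d}$ generates a copy, so the multiplicity is exactly one.

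Under the pointwise-multiplication map $\Hl_L^{\otimes d}\to L^2(S^2)$, this vector maps to $(Y_L^L)^d\propto Y_{dL}^{dL}\neq0$. This identifies the unique stretched-state product that saturates the bound.

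\emph{Main obstacle.}
The step requiring the most care is Step 1's use of multiplicity-freeness, together with the real-versus-complex translation. One must check that the projection argument is legitimate for the real span $\Pl_{L,d}$, since it is the complexification that contains $Y_n^n$. I would handle this by running the invariance argument on the complexification $\Pl_{L,d}\otimes\mathbb{C}$. Then $\Hl_n^{\mathbb{C}}\subseteq\Pl_{L,d}\otimes\mathbb{C}$, and intersecting with real-valued functions gives $\Hl_n\subseteq\Pl_{L,d}$, because the real subspace $\Hl_n$ is stable under complex conjugation.
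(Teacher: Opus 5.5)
Your proof is correct and follows the paper's skeleton: the stretched products $(Y_L^L)^qY_r^r$ with $n=qL+r$ serve as witnesses in $\Pl_{L,d}$, a symmetry argument fills out all of $\Hl_n$, and multiplicity one comes from the one-dimensionality of the weight-$dL$ space of $\Hl_L^{\otimes d}$. You differ in how the two sub-steps of the inclusion are justified.

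\begin{itemize}
\item \emph{Nonvanishing of the witness.} The paper obtains it from the nonvanishing of the stretched-chain Gaunt/Clebsch--Gordan coefficients. You use the closed form $Y_\ell^\ell\propto(x+iy)^\ell$, so the product is exactly a nonzero multiple of $Y_n^n$ and no coefficient computation is needed.
\item \emph{Filling out $\Hl_n$.} The paper applies the lowering operator $L_-$ and asserts that it preserves polynomial degree. You instead show that $\Pl_{L,d}$ is rotation invariant, which is the integrated form of that assertion, and then use Schur's lemma.
\end{itemize}

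Your version buys some rigor. The paper defines $\phi_L$ with real harmonics, but real $Y_L^L$ is not a $J_z$-weight vector. Its ``top-weight component'' and lowering steps, with their Condon--Shortley Clebsch--Gordan coefficients, therefore tacitly require the complexification and the real/imaginary-part bookkeeping that you carry out explicitly. Your invariance argument is also what guarantees that an $\Hl_n$-component of an element of $\Pl_{L,d}$ itself stays in $\Pl_{L,d}$. The paper's version, in turn, stays in the Gaunt selection-rule language already used for the ceiling.

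One simplification is available. Your witness already lies in $\Pl_{L,d}\cap\Hl_n$, so invariance of $\Pl_{L,d}$ plus irreducibility of $\Hl_n$ suffices. The full classification of invariant subspaces in your Step~1 is more than you need.
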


\begin{proof}[Proof (highest weight)]
For any $0\!\le\!n\!\le\!dL$, write $n=qL+r$ with $0\!\le\!r\!<\!L$
(so $q\!\le\!d$). The product
$(Y_L^L)^q\,Y_r^r$ is a polynomial of degree $q\!+\!\mathbf{1}_{r>0}\!\le\!d$
in the entries of $\phi_L$, hence lies in $\Pl_{L,d}$. By the
Gaunt product formula
\begin{align*}
  Y_{\ell_1}^{m_1}Y_{\ell_2}^{m_2}
  &=\sum_{\ell,m}G^{\ell m}_{\ell_1m_1,\ell_2m_2}\,Y_\ell^m, \\
  G^{\ell m}_{\ell_1m_1,\ell_2m_2}
  &=\sqrt{\tfrac{(2\ell_1+1)(2\ell_2+1)}{4\pi(2\ell+1)}}\,
  \langle\ell_1\,0;\ell_2\,0|\ell\,0\rangle\,
  \langle\ell_1\,m_1;\ell_2\,m_2|\ell\,m\rangle,
\end{align*}
the top-weight component of $(Y_L^L)^q Y_r^r$ in
$\Hl_{qL+r}\!=\!\Hl_n$ is non-zero (the stretched-chain Gaunt
coefficients $\langle L\,L;L\,L|2L\,2L\rangle,\dots$ are all
non-zero under Condon--Shortley
normalization~\cite{varshalovich1988quantum}). Acting with the
$SO(3)$ lowering operator $L_-$ on this top-weight vector spans all
$2n\!+\!1$ basis functions $\{Y_n^m\}_{m=-n}^n$ of $\Hl_n$ inside
$\Pl_{L,d}$ (lowering preserves polynomial degree). Hence
$\Hl_n\!\subseteq\!\Pl_{L,d}$ for every $n\!\le\!dL$, giving
$\Hl_{\le dL}\!\subseteq\!\Pl_{L,d}$. Multiplicity of $\Hl_{dL}$ in
$\Hl_L^{\otimes d}$ is one because the only weight-$dL$ vector in
the tensor product is the stretched chain $(Y_L^L)^d$ up to
scalar~\cite{varshalovich1988quantum}.
\end{proof}

\begin{proof}[Proof of Proposition~\ref{prop:soft}]
Combine Propositions~\ref{prop:ceiling} and~\ref{prop:saturation}.
\end{proof}

\begin{corollary}[Minimum Polynomial Degree]
\label{cor:soft}
To reach angular frequency~$\ell$ from degree-$L$ features, a
minimum polynomial degree $d=\lceil \ell/L\rceil$ is required.
The ideal boundary $d L$ increases by $L$ degrees per unit increase
in~$d$.  (Immediate from
Proposition~\ref{prop:soft}.)
\end{corollary}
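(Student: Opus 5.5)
The plan is to read Corollary~\ref{cor:soft} off Proposition~\ref{prop:soft}, treating it as two inequalities plus a bookkeeping remark. Fix $L\ge1$ and a target degree $\ell\ge1$; the case $\ell=0$ is trivial, since constants lie in $\Pl_{L,1}$ via $Y_0^0$. Here ``reaching $\ell$'' means that $\Hl_\ell\cap\Pl_{L,d}\neq\{0\}$. Equivalently, by $SO(3)$-invariance of $\Pl_{L,d}$, it means $\Hl_\ell\subseteq\Pl_{L,d}$. We then show that the least $d$ with this property is $\lceil\ell/L\rceil$.

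\emph{Necessity.} Suppose some nonzero $f\in\Hl_\ell$ lies in $\Pl_{L,d}$. By the inclusion half of Proposition~\ref{prop:soft} (Proposition~\ref{prop:ceiling}, ultimately Lemma~\ref{lem:cg}), $f\in\Hl_{\le dL}$. Because $L^2(S^2)=\bigoplus_n\Hl_n$ is an orthogonal decomposition, a nonzero element of $\Hl_\ell$ can lie in $\Hl_{\le dL}$ only when $\ell\le dL$. This gives $d\ge \ell/L$, and since $d$ is an integer, $d\ge\lceil\ell/L\rceil$.

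\emph{Sufficiency.} Set $d^\ast=\lceil\ell/L\rceil$. Then $\ell\le d^\ast L$, and the saturation half (Proposition~\ref{prop:saturation}) gives $\Hl_\ell\subseteq\Hl_{\le d^\ast L}\subseteq\Pl_{L,d^\ast}$. Explicitly, writing $\ell=qL+r$, the stretched product $(Y_L^L)^qY_r^r$ has polynomial degree $q+\mathbf{1}_{r>0}=\lceil\ell/L\rceil$. Its top component spans $\Hl_\ell$ under the lowering operator, which also shows the bound is attained by a concrete witness.

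Finally, Proposition~\ref{prop:soft} gives $\Pl_{L,d}=\Hl_{\le dL}$, so the boundary is exactly $dL$. Passing from $d$ to $d+1$ therefore adds precisely the shells $\Hl_{dL+1},\dots,\Hl_{(d+1)L}$, that is, $L$ new degrees. The only step that needs care is the definition of ``reach'': I would state it as nonzero intersection with $\Hl_\ell$, so that necessity follows cleanly from orthogonality rather than from an informal notion of partial content. Everything else is immediate from the cited propositions.
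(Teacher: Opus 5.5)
Your proposal is correct and follows the paper's route. The paper simply reads the corollary off $\Pl_{L,d}=\Hl_{\le dL}$ (Proposition~\ref{prop:soft}), and your necessity/sufficiency split is exactly its ceiling/saturation halves (Propositions~\ref{prop:ceiling} and~\ref{prop:saturation}), which is the same structure the paper uses for Corollary~\ref{cor:finiteK}. One small nit: at $\ell=0$ the formula gives $d=0$, which holds because constants are degree-$0$ polynomials, whereas citing $\Pl_{L,1}$ there only shows the minimum is at most $1$.
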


\begin{corollary}[Finite-$K$ Universality]
\label{cor:finiteK}
Let $f\in L^2(S^2)$ have angular bandwidth
$\ell_{\max}(f)=\max\{\ell:\|\Pi_\ell f\|>0\}$.  Then $f$ is
representable by a degree-$d$ polynomial readout of $\phi_L$
\emph{if and only if} $dL\ge\ell_{\max}(f)$.
\end{corollary}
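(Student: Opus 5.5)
The plan is to reduce Corollary~\ref{cor:finiteK} to Proposition~\ref{prop:soft}, the set identity $\Pl_{L,d}=\Hl_{\le dL}$. Here ``representable by a degree-$d$ polynomial readout of $\phi_L$'' means $f\in\Pl_{L,d}$, with equality in $L^2(S^2)$, i.e.\ almost everywhere. The corollary then follows once we show that membership of $f$ in $\Hl_{\le dL}$ is equivalent to $\ell_{\max}(f)\le dL$. We will work throughout with the orthogonal decomposition $L^2(S^2)=\bigoplus_{\ell\ge0}\Hl_\ell$ and the projections $\Pi_\ell$, as in the proof of Proposition~\ref{prop:hard}.

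\emph{Sufficiency.} Suppose $dL\ge\ell_{\max}(f)$. Then $\Pi_\ell f=0$ for every $\ell>dL$. Parseval gives
\[
f=\sum_{\ell\le \ell_{\max}(f)}\Pi_\ell f\in\Hl_{\le dL}.
\]
The saturation half of Proposition~\ref{prop:soft} (Proposition~\ref{prop:saturation}) then yields $f\in\Pl_{L,d}$. Explicitly, each component $\Pi_\ell f$ is a linear combination of lowered stretched products $(Y_L^L)^qY_r^r$, and the readout is their sum.

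\emph{Necessity.} Suppose $f\in\Pl_{L,d}$. The ceiling half (Proposition~\ref{prop:ceiling}, via Lemma~\ref{lem:cg}) gives $f\in\Hl_{\le dL}$, so $\Pi_\ell f=0$ for all $\ell>dL$. Hence $\ell_{\max}(f)\le dL$ by definition of $\ell_{\max}$.

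The only real subtlety is the well-definedness of $\ell_{\max}(f)$. If $f$ has infinite bandwidth, the maximum does not exist; we will adopt the convention $\ell_{\max}(f)=\infty$, and then the necessity argument shows $f\notin\Pl_{L,d}$ for every $d$, consistent with the statement. If $f=0$, we set $\ell_{\max}=-\infty$ (or $0$), and $f$ is trivially representable. We expect no genuine obstacle beyond stating these conventions and noting that $\Pl_{L,d}$ is a finite-dimensional, hence closed, subspace, so equality in $L^2$ is unambiguous.
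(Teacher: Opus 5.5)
Your proposal is correct and follows the paper's proof: necessity comes from the ceiling inclusion $\Pl_{L,d}\subseteq\Hl_{\le dL}$ (Proposition~\ref{prop:ceiling}), and sufficiency from the saturation inclusion $\Hl_{\le dL}\subseteq\Pl_{L,d}$ (Proposition~\ref{prop:saturation}). Your contrapositive form of necessity (membership in $\Pl_{L,d}$ forces $\Pi_\ell f=0$ for all $\ell>dL$) and your conventions for $f=0$ and infinite bandwidth are slightly tidier than the paper's version, which only argues that $\Pi_{\ell_{\max}}f\notin\Pl_{L,d}$ and leaves $\ell_{\max}(f)$ undefined in those edge cases.
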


\begin{proof}
\emph{Necessity:} if $dL<\ell_{\max}(f)$ then $\Pi_{\ell_{\max}}f
\ne 0$ lies in $\Hl_{\ell_{\max}}\not\subseteq\Pl_{L,d}$ by
Proposition~\ref{prop:ceiling}.
\emph{Sufficiency:} by Proposition~\ref{prop:saturation},
$\Hl_{\le dL}\subseteq\Pl_{L,d}$.  Because $\Pl_{L,d}$ is defined
as a linear span, any $f\in\Hl_{\le dL}$ is itself a finite linear
combination of basis elements each in $\Pl_{L,d}$ and is therefore
already in $\Pl_{L,d}$ by definition.
\end{proof}

\subsection{Implemented-SPN versus ideal probe}

\begin{remark}[SPN as a calibrated empirical probe]
\label{rem:spn_probe}
The implemented SPN (Algorithm~\ref{alg:spn}) extracts SO(3)
invariants of the backbone features through a degree-bounded
CG-contraction step ($d_r\!=\!2$ via per-channel norm
$\sum_m|a_\ell^m|^2$ in the standard variant; $d_r\!=\!3$ via a
fully-connected three-fold tensor product in the cubic variant)
and feeds the resulting scalars through a radial MLP.  We do
\emph{not} claim that this implemented SPN is dense in
$\Hl_{\le d_r L}$ as a function class on $S^2$: the norm-style
extractor is phase-blind and does not span the full CG monomial
basis required to attain arbitrary $Y_\ell^m$ at $\ell\!\le\!d_r L$.
The algebraic identity $\Pl_{L,d}\!=\!\Hl_{\le dL}$ is used as a
\emph{calibration target}: the SPN's CG-contraction step has a
known polynomial degree, the radial-MLP head preserves scalar
\emph{irrep type} (Remark~\ref{rem:radial_mlp_filtration})---a
representation-type filtration only, not a coordinate-frequency
bandwidth bound on the implemented MLP-SPN.  The empirical reach
of the implemented SPN is then stress-tested by (i)~the
$C_5$-symmetric synthetic experiment
(Appendix~\ref{sec:controls}, $R^2$ separation across eight $(L,d)$
cells), (ii)~the probe-degree $d_r\!\in\!\{1,2,3\}$ sweep at
fixed backbone with cliff shifting to $\ell^\star\!=\!d_r L$
(Appendix~\ref{sec:cubic}), and (iii)~the scalar-activation
identity/square/SiLU ablation (\S\ref{sec:cliff}).  An idealised
SPN that uses the explicit CG monomial basis (rather than the
phase-blind norm) would saturate the algebraic boundary by
construction; we leave this idealised variant to future work and
report the reach of the implemented diagnostic as the operative
empirical signal in the rest of the paper.
\end{remark}

\begin{remark}[Irrep-type filtration under scalar-only nonlinearities]
\label{rem:radial_mlp_filtration}
For a CG-based architecture in which every nonlinearity acts
pointwise on scalar ($\ell\!=\!0$) subchannels (radial MLPs in
NequIP/MACE, SiLU/gate activations on invariant channels in
Equiformer, scalar embeddings in Allegro), and equivariant gates
of the form $\sigma(s)\!\cdot\!v$ with
$s\in V_0\otimes\Hl_0$ and $v\in V_\ell\otimes\Hl_\ell$, the only
operation that can raise the SH \emph{irrep grade} of a feature
is a CG tensor product (which raises $\ell_{\max}$ by at most
$L$, Lemma~\ref{lem:cg}).  This is a filtration on representation
type and underlies the path-counting argument of
Lemma~\ref{lem:composition}.  Preservation of scalar irrep type
is \emph{not} a bound on the coordinate-frequency bandwidth of a
non-polynomial scalar function of CG-derived invariants, in line
with Remark~\ref{rem:irrep_type}.  The compositional output bound
of Lemma~\ref{lem:composition} is therefore stated as a
representation-grade ceiling on the predictor head, not as a
function-class-coverage statement on the full nonlinear MPNN.
\end{remark}

\subsection{Proof of the compositional output ceiling}

\begin{proof}[Proof of Lemma~\ref{lem:composition}]
\emph{Output ceiling.} Apply Proposition~\ref{prop:soft} to
$\mathcal{R}$ acting on features of maximum angular degree
$L_{\mathcal{B}}$: the readout's output lies in
$\Hl_{\le d_r L_{\mathcal{B}}}$.  Since
$\mathcal{A}=\mathcal{R}\circ\mathcal{B}$, the overall output
inherits this bound regardless of the specific polynomial
structure of $\mathcal{B}$.
\emph{Backbone internal polynomial degree (heuristic, not used in
the output ceiling).} A correlation-$\nu_t$ update at layer~$t$
that couples $\nu_t$ copies of the prior feature can be bounded
heuristically by a multiplicative recurrence
$d^{\mathcal{B}}_t\!\le\!\nu_t d^{\mathcal{B}}_{t-1}$, but real
backbones with skip connections, message-passing aggregation, and
residuals accumulate body-order additively along feature paths
rather than strictly multiplicatively. We do not rely on this
internal-degree count for the output ceiling—the latter follows
directly from applying Proposition~\ref{prop:soft} to the
$d_r$-degree readout, regardless of the backbone's internal
polynomial structure.
\emph{Truncation loses transient higher-$\ell$ content.}
Any content generated at layer~$t$ in
$\Hl_{L_{\mathcal{B}}+1},\ldots,\Hl_{d^{\mathcal{B}}_t L_{\mathcal{B}}}$
is projected out upon truncation to
$\Hl_{\le L_{\mathcal{B}}}$ and cannot propagate to later layers
or to the readout.
\end{proof}

\section{Conclusion: Full Discussion and Empirical Outcomes}
\label{app:conclusion_full}
\label{app:empirical_outcomes}

This appendix expands the main-body Conclusion
(Section~\ref{sec:conclusion}) with the broader-applicability
discussion, the precise scope of empirical claims, the empirical
diagnostic outcomes (directly observed and
conjectured-but-not-directly-shown items), and concrete future
directions.

\paragraph{Broader applicability.}
The spectral injection framework and the SPN diagnostic together give
practitioners a concrete workflow for answering the question:
\emph{does my backbone have the angular resolution this task requires?}
The $11.7\times$ sharpness ratio provides a quantitative diagnostic
signature consistent with the polynomial-span calibration and
demonstrates that the answer is quantitatively measurable rather
than a matter of trial and error.  More broadly, the diagnostic
protocol applies wherever spherical-harmonic representations meet
finite truncation---climate and weather models on the
sphere~\cite{cohen2018spherical}, robotic manipulation with SO(3)
pose estimation, and protein structure prediction---and the
single-direction polynomial-span calibration is applicable wherever
an explicit CG readout can be hooked as a single-direction probe.  The full multi-atom MPNN
function class is broader than what the single-direction probe
measures (Remark~\ref{rem:scope}); we frame the contribution as a
falsifiable diagnostic for trained-backbone reach, not a universal
architecture ceiling.

\paragraph{Scope of empirical claims.}
The algebraic ceiling $\ell\le d\!\cdot\!L$
(Proposition~\ref{prop:soft} and the auxiliary
Propositions~\ref{prop:ceiling}--\ref{prop:saturation} in
Appendix~\ref{app:full_proofs}) and its force-lift to
$\ell^\star_{\mathbf F}\!=\!dL\!+\!1$
(Remark~\ref{rem:force_side_intuition}, fixed-frame reading) are
single-marked-direction algebraic facts on $\Hl_{\le dL}$; the
compositional Lemma~\ref{lem:composition} provides the
corresponding output-side irrep-grade bookkeeping for predictor
heads.  What we demonstrate empirically is narrower and we state
it precisely below.

\paragraph{Empirical diagnostic outcomes (full list).}
Our empirical findings, listed in detail in
Appendix~\ref{app:empirical_outcomes}, comprise twelve
directly-observed diagnostic results---most importantly the aspirin
$L\!=\!2$ cliff at $\ell\!=\!4\!\to\!5$ ($\Xi\!=\!11.7$,
Table~\ref{tab:cliff}), the $5.7\!\times$ across-backbone matched
contrast (Appendix~\ref{app:bbseed_cliff}), the direct
$R^2_{\mathrm{inj}}$ corroboration
(Remark~\ref{rem:rho_supplementary}), and the cross-architecture
cubic outcomes (Table~\ref{tab:cubic_xarch_summary}).
Conjectured-but-not-directly-shown items---full $L$-sweep diagonal,
intra-$\nu$ MACE cliff, Allegro adaptation, biomolecular
generalization, and protocol constraints---are listed in the
Future Directions block below.

\paragraph{Future directions.}
Several concrete experiments would strengthen and extend these
findings.
\emph{(i)~Architecture generality.}
The single-direction polynomial calibration suggests analogous
diagnostic predictions for CG-style architectures (MACE, Allegro,
Equiformer at higher $\nu$, $L$), but each backbone requires a
matched empirical read because baseline fidelity and readout
structure differ.  Testing at $L\!=\!6$ would extend the
cliff-location diagnostic to a wider range of backbone cutoffs.
\emph{(ii)~Per-layer spectral probing.}
Attaching SPN readouts at each intermediate layer of a deep
backbone (e.g., layers~1 through~8) would directly measure how
intermediate representations preserve within-ceiling angular
fidelity across layers, decomposing the within-ceiling fidelity
gain we observe at fixed $(L_{\mathcal{B}}, d_r)$ into per-layer
contributions.
\emph{(iii)~Fidelity bound formalisation.}
Appendix~\ref{sec:fidelity} introduces $\ell_{\mathrm{fidelity}}(L)$
empirically.  A formal analysis of how $L\!=\!2$ feature attenuation
scales with target frequency~$\ell$ (via the CG coefficient decay
structure) would place this heuristic effective-reach reading on
rigorous footing.
\emph{(iv)~Per-atom energy probe.}
The empirical discussion after Remark~\ref{rem:force_side_intuition}
notes that total-energy MAE is atom-sum-cancelling; a per-atom
energy decomposition would sharpen the energy-vs-force staggered
cliff test.
\emph{(v)~Spectral content of natural forces.}
We compute the angular-frequency content of natural \emph{energies}
in Appendix~\ref{app:natural_spectrum}
(Table~\ref{tab:natural_spectrum}) and find that ${\sim}80\%$ of
the natural-energy power for aspirin and toluene sits above the
$L\!=\!2$ ceiling.  An analogous decomposition for \emph{forces}
(which involve a derivative and a higher effective frequency
content under the gradient lift) remains future work.

\section{Natural-Energy Body-Frame Decomposition}
\label{app:natural_spectrum}

As a basis-non-vacuity sanity check, we decompose each molecule's
natural (uninjected) energy in the same body-frame
spherical-harmonic basis the diagnostic uses
($l_{\max}\!=\!10$, $n\!=\!1000$ test frames per molecule;
pipeline details in the supplementary artifact manifest).  The
high-$\ell$ basis functions used by the diagnostic are not vacuous
on the molecular geometries we test: the natural-energy angular
power is non-negligible above $l\!=\!4$ for three of four molecules
in our suite (Table~\ref{tab:natural_spectrum}).  This should
\emph{not} be read as a lower bound on the angular cutoff required
by an equivariant model: the body-frame decomposition depends on
the chosen frame and mixes radial, chemical, and conformational
variation, and a regression basis with non-zero weight at degree
$\ell$ does not imply that a trained equivariant model needs
$L\!=\!\ell$.  We report it solely to confirm that the diagnostic's
high-$\ell$ basis functions exercise a real signal on these
geometries.

\begin{table}[h]
\centering
\caption{Natural-energy angular power above the $L\!=\!2$ ceiling
($l\!=\!4$) and $L\!=\!1$ ceiling ($l\!=\!2$), and the dominant
peaks of the spectrum.  For three of four molecules,
$\geq 70\%$ of the natural-energy angular content sits above the
$L\!=\!2$ algebraic ceiling.}
\label{tab:natural_spectrum}
\small
\begin{tabular}{lccl}
\toprule
Molecule & $\%$ power $l\!>\!2$ & $\%$ power $l\!>\!4$ & Dominant peaks \\
\midrule
Aspirin       & 90.5 & 81.3 & $l\!=\!6$ (35\%), $l\!=\!7$ (33\%) \\
Ethanol       & 96.3 & 72.0 & $l\!=\!5$ (41\%), $l\!=\!6$ (25\%) \\
Toluene       & 95.4 & 85.8 & $l\!=\!9$ (39\%), $l\!=\!8$ (22\%) \\
Malonaldehyde & 93.4 & 44.0 & $l\!=\!4$ (35\%), $l\!=\!5$ (30\%) \\
\bottomrule
\multicolumn{4}{l}{\scriptsize Source files listed in the artifact manifest.}
\end{tabular}
\end{table}

\section{Theory Extensions}
\label{app:theory_extensions}

This appendix collects three technical extensions of the
polynomial-span theorem of Section~\ref{sec:theory}: the
heterogeneous-stack ceiling, a fixed-frame force-side intuition,
and the relation to Dym \& Maron asymptotic universality.

\begin{remark}[Heterogeneous-stack irrep-grade bookkeeping]
\label{cor:heterogeneous}
Generalize Lemma~\ref{lem:composition} to a stack with per-layer
angular truncations $L_1,\ldots,L_T$ (equivariant features at
layer~$t$ truncated to $\Hl_{\le L_t}$ before being passed to
layer~$t+1$).  As an irrep-grade bookkeeping statement on the
explicit CG predictor head,
\begin{equation}
  \mathcal{A}\subseteq\Hl_{\le d_r L_T},
  \label{eq:heterogeneous_ceiling}
\end{equation}
where $L_T$ is the terminal backbone truncation. Earlier-layer
truncations $L_1,\ldots,L_{T-1}$ do not propagate past their
respective interfaces: transient content at $\ell>L_t$ generated
inside layer~$t$ is annihilated by the interlayer truncation
$\Pi_{\le L_t}$ and cannot reach the readout. This is a
bookkeeping statement for explicit CG predictor-head irreps under
an idealized truncation model; it is not a bound on arbitrary
nonlinear scalar functions of all atomic coordinates.
\end{remark}

\begin{proof}[Sketch]
Iterating the per-layer truncation $\Pi_{\le L_t}$ gives a
terminal irrep-grade ceiling $L_T$ on the predictor-head input;
applying Proposition~\ref{prop:soft} to the degree-$d_r$ readout
yields the inclusion above. The propagation-blocking claim is
immediate from $\Pi_{\ell}\circ\Pi_{\le L_t}=0$ for $\ell>L_t$.
\end{proof}

\paragraph{Implication.}
A down-channel (monotone-decreasing) stack
$L_1\!\ge\!\cdots\!\ge\!L_T$ has the same ideal predictor-head
boundary as a stack with constant truncation $L_T$: higher-$L$
early layers contribute only to intra-backbone feature refinement
(e.g.\ better scalar invariants at higher body-order), not to the
ideal predictor-head boundary. Conversely, an up-channel stack
$L_1\!\le\!\cdots\!\le\!L_T$ exploits the full progression, since
no information is truncated away until the readout. Mixed-$L$
architectures---common in practice to save parameters on early
layers---would be expected, under this idealized bookkeeping, to
exhibit a diagnostic boundary at $d_r L_T$ rather than at any
intermediate $d_r L_t$; direct empirical check is deferred to
future work.

\begin{remark}[Fixed-frame force-side intuition]
\label{rem:force_side_intuition}
A fixed-frame calculation suggests why an energy perturbation in
$\Hl_{\le dL}$ can appear in force space with components one degree
lower and one degree higher: writing
$\mathbf{r}\!=\!r\hat{\mathbf{r}}$ and applying
$\nabla(f(r)Y_\ell^m)=f'(r)Y_\ell^m\hat{\mathbf{r}}+
(f(r)/r)\nabla_{S^2}Y_\ell^m$~\cite{jackson1999classical}, the
surface gradient $\nabla_{S^2}Y_\ell^m$ has angular support in
$\Hl_{\ell-1}\!\oplus\!\Hl_{\ell+1}$ while the radial term preserves
$\ell$, giving a fixed-frame force-side window
$\Hl_{\le dL+1}\!\otimes\!\mathbb{R}^3$. Used as intuition, this
predicts a staggered energy/force pattern at the cliff: at
$\ell_{\mathrm{inj}}\!=\!dL\!+\!1$ the $\Hl_{dL}$ component falls
inside the window and the $\Hl_{dL+2}$ component does not, while at
$\ell_{\mathrm{inj}}\!=\!dL\!+\!2$ both components fall outside. We
do \emph{not} use this calculation as a formal guarantee: our
injected forces differentiate through the body-frame construction
(Remark~\ref{rem:force_frame}), so the gradient acts on the frame as
well as on the harmonic and can mix content beyond
$\Hl_{\le dL+1}$. The main force-space claim is the direct autograd
measurement $R^2_{\mathrm{inj}}$ in
Figure~\ref{fig:cliff}B.
\end{remark}

\begin{remark}[Body-frame caveat for autograd-differentiated injections]
\label{rem:force_frame}
The fixed-frame intuition in
Remark~\ref{rem:force_side_intuition} holds the body frame fixed
under the gradient operator. Our injection algorithm constructs the
body frame from a Gram--Schmidt triple of atomic positions
(Section~\ref{sec:methods}); when forces are obtained by autograd
through this construction, the gradient acts on the frame as well
as on the harmonic, so an injected energy with support in
$\Hl_{\le dL}$ may produce a force with mixing beyond
$\Hl_{\le dL+1}$. We do not claim that the empirical
$\rho_\mathbf{F}^{\star}$ on autograd-differentiated injection
forces matches the fixed-frame heuristic; the empirical aspirin
force collapse to $\rho_F(5)\!=\!0.078$ is the operative cliff
signal, regardless of whether any residual lift is contributed by
frame-derivative mixing. A direct empirical projection of the
autograd-differentiated injection force onto the VSH basis is left
to future work.
\end{remark}

\paragraph{Empirical scope.}
A clean empirical test of any fixed-frame staggered energy/force
intuition would require atom-local force/energy projections;
total-energy MAE mixes rotational frames across atoms and does not
directly probe the atom-local picture. We do not rely on such a test
here. The operative force-side claim in this paper is the measured
force-MAE cliff and the autograd $R^2_{\mathrm{inj}}$ diagnostic
(Section~\ref{sec:cliff}), with aspirin
$\rho_F(5)\!=\!0.078$ (95\% CI $[0.049,0.113]$). The per-atom
energy probe and the cross-architecture ratio
$\rho_E(dL\!+\!1)/\rho_F(dL\!+\!1)$ are left to future work.

\paragraph{Initial cross-architecture probe of the staggered cliff (PaiNN).}
A preliminary cross-architecture measurement on a PaiNN backbone
(effective $L\!\approx\!1$, predicted energy ceiling
$\ell^\star_E\!=\!2$, force ceiling $\ell^\star_\mathbf{F}\!=\!3$) on
aspirin shows the predicted staggered pattern: at the energy ceiling
$\ell_{\mathrm{inj}}\!=\!2$ both recovery fractions are large and
similar ($\rho_\mathbf{F}\!=\!0.78\!\pm\!0.02$,
$\rho_E\!=\!0.76\!\pm\!0.09$, ratio $\approx\!0.98$, $n\!=\!5$ seeds);
across $\ell_{\mathrm{inj}}\!\in\!\{3,4,5\}$ the ratio
$\rho_E/\rho_\mathbf{F}$ decays as $0.78\!\to\!0.43\!\to\!0.25$, with
$\rho_E$ collapsing faster than $\rho_\mathbf{F}$, consistent with
energy reach saturating one frequency earlier than force reach.  The
analogous probe on NequIP backbones produced larger numerical drift in
$\rho_E$ owing to baseline-energy redefinition between frozen-backbone
and SPN-attached evaluations, so we report the PaiNN result as initial
support for the staggered cliff and leave a comprehensive
multi-architecture energy probe to future work.

\paragraph{Fixed-frame heuristic ratio (intuition only).}
The fixed-frame VSH calculation
(\cite[\S3.5--3.6]{jackson1999classical}) suggests an intuitive
fraction
$\alpha(dL)\!=\!(dL\!+\!1)/(2dL\!+\!3)$ that the $\Hl_{dL}$
component of an $\ell_{\mathrm{inj}}\!=\!dL\!+\!1$ injected force
contributes to the fixed-frame window: the orbital decomposition
$\nabla_{S^2}Y_\ell^m\propto\!\Psi_{\ell m}\!=\!\sqrt{\ell/(2\ell+1)}\,\mathbf{Y}_{\ell-1,\ell m}\!+\!\sqrt{(\ell+1)/(2\ell+1)}\,\mathbf{Y}_{\ell+1,\ell m}$
distributes scalar content as $\ell\!:\!(\ell+1)$, giving
$\alpha(4)\!=\!5/11\!\approx\!0.45$, $\alpha(8)\!=\!9/19\!\approx\!0.47$.
This is a fixed-frame heuristic ratio, not a bound the paper relies
on: our injected forces differentiate through the body-frame
construction, so the gradient acts on the frame as well as on the
harmonic and can mix content beyond the fixed-frame window. The
empirical aspirin force collapse $\rho_F(5)\!=\!0.078\pm0.013$
(Section~\ref{sec:cliff}) and the autograd $R^2_{\mathrm{inj}}$
metric in Figure~\ref{fig:cliff}B are the operative force-side
evidence.

\begin{remark}[Relation to asymptotic universality]
\label{rem:dymmaron}
The single-direction polynomial-span identity
$\Pl_{L,d}\!=\!\Hl_{\le dL}$ is a finite-resolution
\emph{algebraic} statement about degree-bounded polynomial probes
of complete spherical-harmonic features.  It is \emph{compatible
with} (and not in conflict with) asymptotic universality results
for CG-based equivariant architectures~\cite{dym2021universality,joshi2023expressive},
which establish density as $L,d\!\to\!\infty$.  We do \emph{not}
claim that the full nonlinear multi-atom CG-MPNN function class is
exactly $\Hl_{\le dL}$ at finite $(L,d)$; rather, $dL$ calibrates
the ideal single-direction polynomial probe (and the explicit
CG irrep-grade bookkeeping of Lemma~\ref{lem:composition}).  The
two readings answer complementary questions: asymptotic
universality asks ``is every equivariant function reachable in the
limit?''; Proposition~\ref{prop:soft} asks ``which spectral
components does a degree-bounded single-direction polynomial probe
span at finite $(L,d)$?''.  The empirical aspirin cliff
(Section~\ref{sec:cliff}) is consistent with the polynomial-span
calibration as a diagnostic outcome, not as a function-class
equality on the full MPNN.
\end{remark}

\paragraph{Frame-breaking architectures (scope note).}
Architectures that break SO(3) equivariance (e.g.\ EGNN-style nets,
frame-augmented non-equivariant models, AlphaFold3-style invariant
diffusion) lie outside the scope of the polynomial-span calibration:
they can in principle resolve $\Hl_{\ell}$ at any $\ell$ via
frame-breaking pathways.  See Appendix~\ref{app:frame_break} for the
heuristic frame-break discussion (out of scope).

\section{Reliability Horizon (full discussion)}
\label{app:reliability_horizon}

\label{sec:reliability}

The spectral ceiling is not a static boundary but a transition into a
stochastic learning regime, where the reliability of signal recovery is
inversely proportional to the gap between the signal frequency~$l$ and
the backbone's effective reach~$d \cdot L$.  Below~$d \cdot L$, recovery
is deterministic: multiple backbone seeds yield consistently high~$\rho$
with low variance.  \emph{At} the boundary $l = d \cdot L$, recovery
becomes stochastic---backbone-to-backbone variance increases sharply, and
the success of the SPN probe depends on whether a particular backbone
seed happened to preserve high-angular-frequency correlations.  Above
$d \cdot L$, recovery collapses uniformly: no seed, no initialization,
and no readout architecture recovers the injected signal.

We observe this transition empirically at $l = 4$ for the $d = 2$ SPN
on an $L = 2$ backbone.  One backbone seed yields $\rho = 0.94$ while
another yields $\rho \approx 0$---a coefficient of variation that
exceeds~1.  Critically, a multi-SPN-seed stability probe ($n = 16$
independent SPN initializations per frozen backbone) shows that this
variance originates in the \emph{backbone}, not the SPN head.
On the high-recovery backbone, all 16 seeds converge to a tight bundle
(CV~$= 1.7\%$, 95\% CI $[4.75,\, 4.83]$~kcal/mol/\AA;
Figure~\ref{fig:convergence}), while on the
low-recovery backbone, all 16 seeds remain flat near the baseline
(CV~$= 0.3\%$, $\Delta = 0.005$~kcal/mol/\AA).  The SPN is a stable
instrument; the backbone is the stochastic element.

\begin{figure}[h]
\centering
\includegraphics[width=0.85\textwidth]{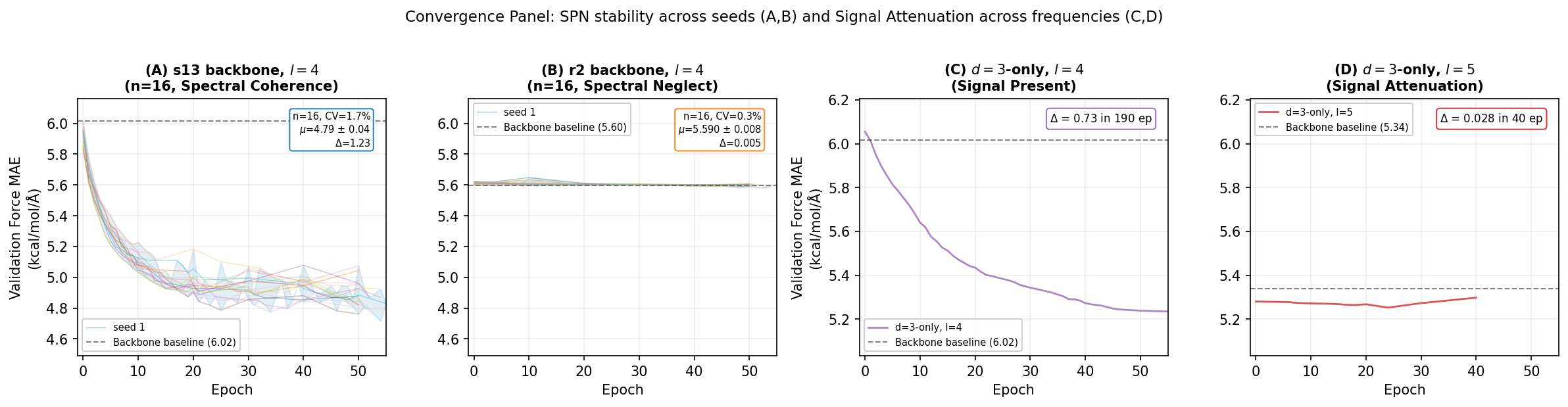}
\caption{SPN convergence across $n\!=\!16$ independent seeds on the
high-recovery backbone ($l_{\mathrm{inj}} = 4$, aspirin).  All seeds
converge to a tight bundle (CV~$= 1.7\%$), showing that the SPN
is a stable diagnostic whose variance is dominated by backbone
properties, not readout initialization.}
\label{fig:convergence}
\end{figure}

This stochasticity reveals a \emph{decoupling} between total force
error and angular resolution at the spectral boundary.  The backbone
seed with lower global MAE (5.60~kcal/mol/\AA) exhibits
$\rho = 0.37 \pm 0.01$ ($n = 16$~SPN seeds), while the seed with
\emph{higher} MAE (6.02~kcal/mol/\AA) recovers
$\rho = 0.94 \pm 0.04$.  We attribute this to \emph{spectral neglect}:
the lower-MAE backbone found a basin that fits dominant low-frequency
force components efficiently, at the cost of discarding
high-angular-frequency coherence that the SPN requires.  This
decoupling implies that aggregate metrics can mask significant losses
in geometric fidelity---motivating the spectral stethoscope as a
complementary diagnostic that detects resolution deficits invisible
to standard validation.

The most extreme case is instructive.  The backbone seed with the
\emph{lowest} global MAE (5.37~kcal/mol/\AA) exhibits near-zero
spectral recovery.  Training an SPN on this backbone for 150 epochs
produces monotonically \emph{rising} validation force MAE (from 5.64
to 5.95~kcal/mol/\AA); the SPN never saves a best checkpoint because
it never improves over the baseline.  The SPN is not a generic
accuracy booster---when there is no high-frequency signal to recover,
it correctly reports null, and forcing it to train simply adds noise.
This falsifies the hypothesis that the SPN's improvements stem from
regularization or implicit data augmentation: a pure regularization
effect would benefit all backbones equally, but the SPN's benefit is
entirely conditional on the presence of resolvable spectral content.

The reliability horizon has further practical consequences.  When a
practitioner measures $\rho(l)$ at the $d \cdot L$ boundary, a single
backbone seed may overestimate or underestimate spectral reach.
Multiple backbone replicates are essential at the boundary---though not
below it, where recovery is reliable, nor above it, where failure is
unambiguous.  Spectral blindness is a property of the \emph{checkpoint},
not the architecture: the same model class has the polynomial degree to
access the signal, but optimization determines whether a given seed
preserves it.  The stethoscope thus serves as a fail-fast mechanism,
identifying spectral neglect before a model is deployed to tasks
requiring high-angular-frequency resolution.



\section{Design Principles for Equivariant Architectures (full)}
\label{app:design_principles}

Our results distill into three principles for practitioners:

\begin{enumerate}
  \item \textbf{Diagnose before scaling.}  Raising~$L$ is expensive
    ($O(L^3)$ CG coupling cost).  The spectral stethoscope measures
    whether the target property actually \emph{requires} higher
    angular resolution, avoiding blind architectural scaling.
    The per-atom analysis (Appendix~\ref{app:per_atom}) further
    localizes spectral deficits to specific atoms, guiding
    targeted interventions.
  \item \textbf{Depth empirically improves the recovery
    fraction; the cliff location is depth-invariant.}  At fixed
    $(L_{\mathcal{B}}, d_r)$, deeper backbones yield higher
    within-ceiling $\rho$ (Table~\ref{tab:depth}); the cubic-readout
    cliff \emph{location} is at $\ell^\star\!=\!d_r L_{\mathcal{B}}$
    independently of depth (Appendix~\ref{sec:cubic}), as
    Lemma~\ref{lem:composition} predicts.  We interpret the depth
    effect on $\rho$ as a within-ceiling fidelity gain (and partial
    truncation leakage in NequIP's implementation), not as a
    shift in the ideal boundary: depth is therefore
    a practical lever for $\rho$, not a substitute for $d_r$ or $L$.
  \item \textbf{Width cannot substitute for reach.}  The nf33 null
    result is unequivocal: adding parameters within a fixed polynomial
    structure provides zero spectral extension.  Before attributing a
    performance gap to ``underfitting,'' practitioners should check
    that the gap is not a spectral ceiling---i.e., that the target
    content lies within the backbone's $d \cdot L$ range.
\end{enumerate}

\textbf{Angular content of natural force fields.}
The spectral ceiling is not an artifact of our injection protocol:
the natural (uninjected) energy surfaces themselves contain
substantial power above the $L\!=\!2$ readout ceiling
(Section~\ref{sec:cliff}, Table~\ref{tab:natural_spectrum}).

\textbf{Predictions for other architectures.}
Because the $d \cdot L$ ceiling follows from the SO(3) tensor product
algebra (Proposition~\ref{prop:soft}), it generates falsifiable
predictions for any CG-based architecture.
Table~\ref{tab:predictions} lists predicted spectral ceilings.
For explicit body-order models like MACE, the readout-side $d_r$ is
the correlation order~$\nu$ of the head; for implicit message-passing
models (NequIP, Equiformer), $d_r$ is set by the readout's
polynomial structure (e.g., a square or cubic invariant extractor).
The output ceiling is $d_r L_{\mathcal{B}}$ in both cases via
Lemma~\ref{lem:composition}; depth serves as an empirical lever for
within-ceiling fidelity rather than as a reach extender (cf.\
Table~\ref{tab:depth} interpretation in Section~\ref{sec:cliff}).
Each prediction is directly testable via our injection + SPN
protocol.

\begin{table}[h]
\centering
\caption{Predicted output-side spectral ceilings $\ell^\star\!=\!d_r
L_{\mathcal{B}}$ via Lemma~\ref{lem:composition}, with $d_r$ the
readout-side polynomial degree (the number of CG couplings on the
readout pathway).  At fixed $(L_{\mathcal{B}}, d_r)$ depth does not
move $\ell^\star$; depth is an empirical within-ceiling fidelity
lever (Table~\ref{tab:depth}).  $\ell^\star$ is an upper bound
subject to the fidelity bound (Appendix~\ref{sec:fidelity}).}
\label{tab:predictions}
\small
\begin{tabular}{llccl}
\toprule
Architecture & Config & $L_{\mathcal{B}}$ & $d_r$ & Predicted $\ell^\star$ \\
\midrule
SchNet                & 6 layers, linear head & 0 & 1 & 0 (isotropic) \\
PaiNN                 & 6 layers, $d_r\!=\!2$ head & 1 & 2 & $\le 2$ \\
MACE ($\nu\!=\!2$)    & 2 layers, $d_r\!=\!1$ head & 2 & 2 & $\le 4$ \\
MACE ($\nu\!=\!1$)    & 2 layers, $d_r\!=\!1$ head & 2 & 1 & $\le 2$ \\
Equiformer            & 4 layers, $d_r\!=\!1$ head & 2 & 1 & $\le 2$ \\
NequIP $L\!=\!2$ + cubic SPN & $d_r\!=\!3$ head & 2 & 3 & $\le 6$ \\
\bottomrule
\multicolumn{5}{l}{\scriptsize $d_r$ here is the readout polynomial
degree, not a layer-count proxy.}
\end{tabular}
\end{table}

\section{Fidelity Limit: Ideal Boundary $\ne$ Empirical Recoverability (full)}
\label{app:fidelity_bound_full}

\label{sec:fidelity}

The $d \cdot L$ ceiling is an \emph{upper bound} on the ideal probe
boundary: degree-$d$ polynomials of degree-$L$ features \emph{can}
represent harmonics up to~$dL$, but empirical recovery additionally
requires that the backbone features preserve
sufficient angular coherence at those frequencies.
The situation is analogous to Shannon's channel capacity: the capacity
is an exact upper bound, practical codes may not achieve it, but it
nevertheless determines what is and is not \emph{possible}.
We call the gap between the ideal readout boundary and the
empirically achievable recovery the \emph{fidelity limit}: a
backbone-dependent attenuation that limits how much of the ideal
boundary is empirically recoverable.

We demonstrate the fidelity bound by varying degree~$d$ and backbone~$L$
independently while measuring SPN improvement~$\Delta$ over the
backbone-only baseline.

\paragraph{Adding cubic terms ($d\!=\!3$) to a low-$L$ backbone.}
We extend the standard quadratic SPN ($d\!=\!2$) with cubic invariants
constructed via two successive CG tensor products, raising the algebraic
ceiling from $dL\!=\!4$ to $dL\!=\!6$ on an $L\!=\!2$ backbone.  At
$l\!=\!4$, this reduces aspirin force MAE from $5.96 \to 4.75$
kcal/mol/\AA{} ($n\!=\!5$ seeds, $\rho \!=\! 0.95$)---a marginal
improvement over $d\!=\!2$ alone ($\rho \!=\! 0.91$), as expected since
the $d\!=\!2$ ceiling already covers $l\!\le\!4$.  At $l\!=\!5$,
\emph{within the new $dL\!=\!6$ ceiling but outside $dL\!=\!4$}, the
cubic SPN nevertheless recovers only $\rho\!=\!0.14$ vs.~$\rho\!=\!0.08$
for $d\!=\!2$---a 6\% absolute increase, far short of the algebraic
prediction.  The same pattern holds across molecules
(Table~\ref{tab:fidelity_d3}).  The bottleneck is not capacity: it is
that the $L\!=\!2$ backbone has already attenuated $l\!\ge\!5$ content
below the noise floor of what cubic invariants can recover.

\begin{table}[h]
\centering
\caption{Cross-molecule $d\!=\!3$ SPN gain over $L\!=\!2$ backbone
($n\!=\!5$ seeds).  $d\!=\!3$ raises the algebraic ceiling from
$dL\!=\!4$ to $dL\!=\!6$, yet improvement collapses for
$l\!\ge\!5$---consistent with the fidelity bound being the binding constraint
across chemistry.}
\label{tab:fidelity_d3}
\small
\begin{tabular}{lccc}
\toprule
Molecule & $\Delta(l\!=\!4)$ & $\Delta(l\!=\!5)$ & $\Delta(l\!=\!6)$ \\
\midrule
Aspirin       & $1.21 \pm 0.02$ & $0.07 \pm 0.02$ & $0.08 \pm 0.01$ \\
Ethanol       & $0.14 \pm 0.20$ & $0.01 \pm 0.03$ & $0.34 \pm 0.09$ \\
Malonaldehyde & $0.58 \pm 0.03$ & $0.01 \pm 0.01$ & $0.05 \pm 0.03$ \\
Toluene       & $0.02 \pm 0.02$ & ---             & $0.01 \pm 0.01$ \\
\bottomrule
\multicolumn{4}{l}{\scriptsize $\Delta$ in kcal/mol/\AA{} (lower~MAE = larger $\Delta$).}\\
\end{tabular}
\end{table}

\paragraph{Increasing $L$ moves the fidelity bound up.}
Conversely, raising backbone~$L$ \emph{does} extend the fidelity
bound, because higher-$L$ features preserve more of the
high-angular-frequency signal.  Table~\ref{tab:fidelity_L} measures
SPN gain~$\Delta$ across all tested~$l$ for $L\!=\!3$ and $L\!=\!4$
backbones on aspirin.  For $L\!=\!3$, $\Delta$ remains small ($\le
0.03$) up to $l\!=\!6$ (within capacity~$dL\!=\!6$), then jumps
$10\times$ to $\Delta\!=\!0.18$ at $l\!=\!7$---the predicted force
ceiling $\ell^*_F\!=\!dL\!+\!1\!=\!7$.  At the boundary, the SPN
recovers what the backbone alone leaves on the table.  For $L\!=\!4$
(force ceiling~$\ell^*_F\!=\!9$), $\Delta$ stays uniformly small
($\le 0.06$) across all $l\!\in\!\{2,...,9\}$ tested: the backbone is
spectrally complete everywhere we can measure, exhausting the
fidelity bound at each frequency.

\begin{table}[h]
\centering
\caption{SPN gain $\Delta$ (kcal/mol/\AA{}) over $L\!=\!3$ and $L\!=\!4$
backbones on aspirin, $n\!=\!5$ seeds.  $L\!=\!3$ shows the predicted
boundary spike at the force ceiling $\ell^*_F\!=\!7$; $L\!=\!4$ stays
saturated everywhere within reach (force ceiling $\ell^*_F\!=\!9$,
not testable here).}
\label{tab:fidelity_L}
\small
\setlength{\tabcolsep}{4pt}
\resizebox{\textwidth}{!}{%
\begin{tabular}{l c c c c c c c c}
\toprule
$l_{\mathrm{inj}}$ & 2 & 3 & 4 & 5 & 6 & 7 & 8 & 9 \\
\midrule
$L\!=\!3,\;d\!=\!2$ ($\ell^*_F\!=\!7$)
 & 0.025 & 0.029 & 0.018 & 0.005 & 0.016 & \textbf{0.175} & --- & --- \\
$L\!=\!4,\;d\!=\!2$ ($\ell^*_F\!=\!9$)
 & 0.063 & 0.006 & $\mathbf{0.018}_{\pm.012}$ & 0.009 & 0.014 & 0.006 & 0.005 & 0.014 \\
\bottomrule
\end{tabular}}\\[2pt]
{\scriptsize Bold $\ell\!=\!7$: predicted force-ceiling boundary
effect.  Bold $\ell\!=\!4$ ($L\!=\!4$): mean over $n\!=\!4$
independent backbone seeds (others single-seed).}
\end{table}

These two experiments isolate the two factors of effective reach:
adding capacity~($d$) at low~$L$ does not help when the backbone has
already washed out the signal; adding fidelity~($L$) saturates the
recovery curve until the next ceiling is approached.  The effective
recoverable degree is the minimum of the algebraic ceiling and the
backbone's signal fidelity:
\[
  l_{\max}^{\mathrm{eff}} = \min\!\bigl(d \cdot L,\;
  l_{\mathrm{fidelity}}(L)\bigr),
\]
where $l_{\mathrm{fidelity}}(L)$ is the highest angular frequency at
which the backbone preserves sufficient coherence for polynomial
extraction.  Note that $l_{\mathrm{fidelity}}$ depends not only on~$L$
but also on the \emph{training seed}: the seed-to-seed variation in
spectral recovery (Appendix~\ref{sec:reliability}) contributes to the
empirical fidelity bound.  The $L\!=\!3$ boundary
spike at $l\!=\!7$ (Table~\ref{tab:fidelity_L}) is the SPN's report
of where the diagnostic indicates a backbone-fidelity limit.
Formalizing this bound is an open theoretical question.

\paragraph{Cross-architecture and cross-molecule diagnostic outcomes.}
The cubic-readout extension (Appendix~\ref{sec:cubic}) sharpens the
same boundary-vs-fidelity dichotomy across four architectures and
four molecules. Where the ideal readout boundary is increased
($d_r$ raised from~$2$ to~$3$) and the backbone retains slack to
fit, the within-ceiling lift is large and reproducible: NequIP
$L\!=\!2$ at $\ell\!=\!4$ ($\rho\!=\!0.20$), MACE
$\nu\!=\!2,L\!=\!2$ at $\ell\!=\!6$ ($\rho\!=\!0.41$),
EquiformerV2 $L\!=\!3$ at $\ell\!=\!6$ ($\rho\!=\!0.82$), PaiNN at
$\ell\!=\!2$ ($\rho\!=\!0.99$). Where the ideal readout boundary
is increased but the backbone has already saturated, the
within-ceiling lift collapses to zero: $L\!=\!3$ NequIP backbones
on ethanol, malonaldehyde, and toluene yield $\rho\!\le\!0.005$ at
every $\ell\!\in\!\{5,6,7\}$ (Appendix~\ref{sec:cubic},
$9\!\times\!5\!=\!45$ runs). Both behaviors are consistent with
the same heuristic recoverability reading
$\ell_{\max}^{\mathrm{eff}}\!\approx\!\min(d L,\,l_{\mathrm{fidelity}}(L))$:
the backbone and molecule determine how much recoverable residue
is preserved, while the readout degree~$d_r$ sets the ideal probe
boundary, and the empirical lift tracks the smaller of the two
---no architecture-specific tuning required.

\section{Geometric vs.\ Information Capacity (full)}
\label{app:geom_info_capacity}

Our experiments reveal a fundamental distinction between two axes of
model capacity in equivariant architectures:

\textbf{Information capacity}---the ability to store and transform
data, scaling with parameter count---is what the nf33 control
modifies.  Adding features from 32 to 33 increases the number of
learnable CG coupling coefficients by $\sim$8\%, yet provides zero
measurable recovery beyond the within-boundary baseline.  The
additional capacity is \emph{spectrally redundant}: it can only encode
more complex functions within the existing harmonic subspace
$\Hl_{\le 2L}$, not access new frequencies.

\textbf{Geometric capacity}---the ability to resolve angular
frequencies, controlled by the polynomial degree~$d$ of the
computation graph---is what the SPN and depth sweep modify.  The
ideal degree-2 SPN-style probe is calibrated to the $2L$ boundary,
and the implemented SPN measures whether recoverable residue is
preserved up to that boundary, with only 48K parameters
(5.7\% overhead).  The depth sweep achieves comparable
within-ceiling recovery by stacking 4 additional interaction layers
($\sim$2.0M additional parameters)---a $42\times$ parameter cost for
comparable within-ceiling recovery ($\rho = 0.96$ vs.\ $0.94$).

This separation has a precise mathematical statement:
Proposition~\ref{prop:soft} shows that the single-direction
polynomial space is $\Pl_{L,d}\!=\!\Hl_{\le dL}$.  Increasing width
within fixed $d_r$ adds parameters but cannot change $d_r L$;
explicitly raising the readout degree $d_r$ (via the SPN's invariant
construction) shifts the ideal boundary $d_r L$ by~$L$ per unit, as
observed in the cubic-readout sweep (Appendix~\ref{sec:cubic}).  Depth, in
contrast, does not move the cliff at the strict-theorem level
(Lemma~\ref{lem:composition}): empirically it raises within-ceiling
$\rho$ at high parameter cost, which we attribute to fidelity
gains and partial-truncation leakage rather than to an algebraic
shift in the ideal boundary (Section~\ref{sec:cliff}).
An optical analogy is apt: scaling width (nf33) is like adding pixels
to a sensor behind a blurry lens---no amount of pixel density recovers
what the optics cannot resolve.  Raising $d_r$ (the SPN's invariant
extractor) is like exchanging the lens for one with a longer reach;
adding depth at fixed $(L_{\mathcal{B}},d_r)$ is like cleaning the
existing lens (within-ceiling fidelity gain), not exchanging it.
The nf33 null result provides a controlled empirical separation
between information capacity and geometric capacity.

\section{SPN as a Spectral Stethoscope (full discussion)}
\label{app:steth_workflow}

\label{sec:stethoscope}

The SPN's primary contribution is not as an efficient readout but as a
\emph{diagnostic instrument}---a ``spectral stethoscope'' for
equivariant architectures.  The practitioner workflow
(Algorithm~\ref{alg:diagnose}, Figure~\ref{fig:workflow}) consists of
a baseline train, a spectral-injection sweep, an SPN probe at each
injection frequency, and a decision rule that reads the cliff
location $\ell^\star$ and translates it into architectural guidance.

\begin{algorithm}[h]
\caption{Spectral-stethoscope diagnostic workflow.}
\label{alg:diagnose}
\begin{algorithmic}[1]
\Require backbone $\mathcal{B}$ (truncation $L_{\mathcal{B}}$); physical
task dataset $\mathcal{D}$; candidate frequencies
$\mathcal{L}\!=\!\{\ell_1,\ldots,\ell_K\}$; amplitude $\alpha$
chosen so the injection variance-share exceeds
$\eta_{\min}$ (Remark~\ref{rem:snr_threshold}); SPN degree $d_r$.
\State \textbf{(Baseline)} Train $\mathcal{B}$ on $\mathcal{D}$ to
convergence.
\For{each $\ell \in \mathcal{L}$}
  \State \textbf{(Inject)} Construct $\mathcal{D}_\ell$ by adding
  $E_{\mathrm{inj}}(\ell,\alpha)$ via Eq.~\eqref{eq:injection}.
  \State \textbf{(Probe)} Freeze $\mathcal{B}$; train an SPN head
  (Algorithm~\ref{alg:spn}) of degree $d_r$ on $\mathcal{D}_\ell$.
  \State Record recovery fraction $\rho(\ell)$ (Def.~\ref{def:rho}).
\EndFor
\State \textbf{(Decide)} Locate the cliff: $\ell^\star=\max\{\ell\in
\mathcal{L} : \rho(\ell)\gg\rho(\ell+1)\}$; check
$\ell^\star\!=\!d_r L_{\mathcal{B}}$ matches the theorem.
\State \textbf{(Act)} If the diagnostic detects no spectral
bottleneck within the task-relevant range, do not act; otherwise
raise $L_{\mathcal{B}}$ (which shifts the ideal readout ceiling
$\ell^\star\!=\!d_r L_{\mathcal{B}}$ by $d_r$ degrees per unit) or
raise $d_r$ (shifts $\ell^\star$ by $L_{\mathcal{B}}$ degrees per
unit).  Depth at fixed $(L_{\mathcal{B}}, d_r)$ does not move the
strict cliff location (Lemma~\ref{lem:composition}) and is
therefore an empirical within-ceiling-fidelity lever, not a reach
extender.
\end{algorithmic}
\end{algorithm}
This procedure is analogous to measuring the frequency response of a
signal processing system: the injected $Y_l^m$ content serves as a
test signal at each frequency, and the SPN acts as a calibrated
detector whose polynomial degree determines which frequencies it
can report.

\begin{figure}[h]
\centering
\includegraphics[width=0.95\textwidth]{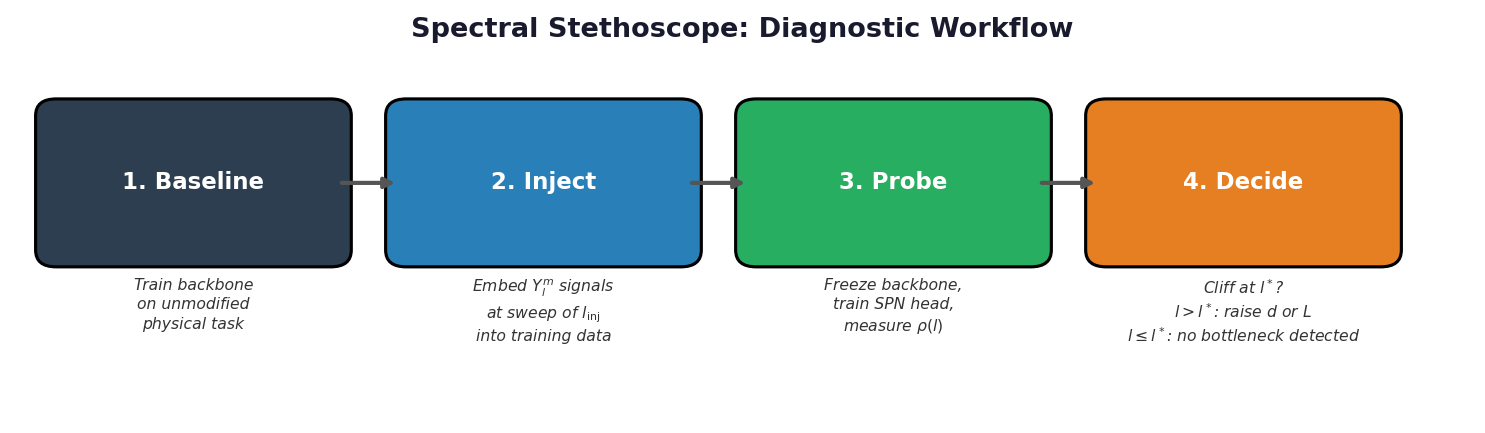}
\caption{The spectral-stethoscope workflow.  Given a trained
equivariant backbone, practitioners inject controlled angular
content, probe the diagnostic's empirical recoverability with the
SPN, and use the ideal probe ceiling $\ell^\star\!=\!d_r L_{\mathcal{B}}$
to interpret no-bottleneck regions versus regions where raising
$L_{\mathcal{B}}$ or $d_r$ may help.  ``No bottleneck detected'' in
the figure refers to the diagnostic's tested resolution; it is not a
guarantee on the full nonlinear MPNN function class
(Remark~\ref{rem:scope}).}
\label{fig:workflow}
\end{figure}

The diagnostic has practical bite.  In domains where low-$L$
equivariant backbones underperform non-equivariant alternatives
(e.g., the shift from AlphaFold2's IPA to AlphaFold3's diffusion
transformer), the spectral stethoscope provides a principled way to
measure \emph{where} the backbone becomes blind---and, as the
per-atom analysis shows (Appendix~\ref{app:per_atom}), \emph{which
atoms} are most affected---before committing to a costly
architectural overhaul.

\section{Amplitude / Per-Layer / Width-Independence Ablations}
\label{app:micro_ablations}

\label{sec:amplitude}

The spectral ceiling at $l^* = d \cdot L$ is an algebraic property of
the polynomial space~$\Pl_{L,d}$, independent of signal amplitude.
To check this empirically, we train $L\!=\!2$ and $L\!=\!4$ backbones
on aspirin at amplitudes $1\times$ through $8\times$, at both
$l\!=\!4$ (at the cliff) and $l\!=\!5$ (above the cliff).  If the
ceiling is algebraic, then the gap should persist at $l\!=\!4$ and
vanish at $l\!=\!5$ regardless of amplitude.

\begin{table}[h]
\centering
\caption{$L\!=\!2$ vs.\ $L\!=\!4$ gap across injection amplitudes.
At $l\!=\!4$ (at cliff), the gap exists at all amplitudes.  At
$l\!=\!5$ (above cliff), the gap collapses regardless of amplitude,
showing that the cliff location is amplitude-invariant.}
\label{tab:amplitude}
\small
\begin{tabular}{cccccc}
\toprule
$l_{\mathrm{inj}}$ & Amplitude & $y_{L=2}$ & $y_{L=4}$ & Gap & $\sigma_F$ \\
\midrule
4 & $1\times$ & 0.065 & 0.058 & 0.007 & 31.4 \\
4 & $2\times$ & 0.102 & 0.087 & 0.015 & 32.3 \\
4 & $4\times$ & 0.169 & 0.123 & 0.045 & 35.7 \\
4 & $8\times$ & 0.211 & 0.187 & 0.024 & 47.1 \\
\midrule
5 & $1\times$ & 0.064 & 0.057 & 0.007 & 31.6 \\
5 & $2\times$ & 0.086 & 0.084 & 0.002 & 32.8 \\
5 & $4\times$ & 0.143 & 0.124 & 0.019 & 37.3 \\
5 & $8\times$ & 0.167 & 0.155 & 0.012 & 51.4 \\
\bottomrule
\end{tabular}
\end{table}

\subsection{Per-Layer Spectral Probing}
\label{sec:perlayer}

If recoverable angular information is preserved differently across
intermediate layers, then attaching the SPN at progressively deeper
layers of the backbone should reveal where the frozen backbone retains
the strongest within-boundary residue. The SPN's ideal probe
boundary is fixed at $d_r L_{\mathcal{B}}$ at every hook (it depends
only on the local feature truncation and readout degree); what
varies across hooks is the empirical residue
$l_\mathrm{fidelity}^{(t)}$ that the frozen backbone preserves at
the probed frequency, giving an effective reach
$\min(d_r L_{\mathcal{B}},\,l_\mathrm{fidelity}^{(t)})$
(Appendix~\ref{sec:fidelity}). The prediction is not monotonic in $t$:
early layers preserve insufficient many-body/frame information at
the injected frequency; terminal layers are compressed for the
scalar readout and lose per-atom spectral detail; an
\emph{intermediate} layer is expected to maximise SPN gain.
Table~\ref{tab:perlayer} reports the direct hook sweep ($n\!=\!3$
SPN seeds, $\ell_{\mathrm{inj}}\!=\!4$, $L\!=\!2$): the probe attached
at layer~2 recovers $\Delta\!=\!0.82\!\pm\!0.42$\,kcal/mol/\AA, an
order of magnitude larger than the probes at layer~0 ($\Delta\!=\!0.03$),
layer~1 ($0.01$) or layer~3 ($0.06$), showing a non-monotonic
hook-depth pattern under this diagnostic.

\begin{table}[h]
\centering
\caption{Per-layer SPN hook sweep ($\ell_{\mathrm{inj}}\!=\!4$,
$L\!=\!2$, $n\!=\!3$ SPN seeds). The hook is moved across the four
NequIP convnet layers of a shared frozen backbone. Layer~2 gives the
largest recovery, suggesting that intermediate features preserve the
injected frame-dependent residue most strongly; early hooks
(layers~0--1) and the terminal hook (layer~3) are less informative
under this diagnostic.}
\label{tab:perlayer}
\small
\begin{tabular}{ccccc}
\toprule
Hook layer & baseline (ep~0) & best & $\Delta$ & body-order $d^{\mathcal{B}}_t$ \\
\midrule
layer 0 & $6.109$ & $6.078$ & $0.031\pm0.008$ & $1$ \\
layer 1 & $6.102$ & $6.091$ & $0.011\pm0.004$ & $2$ \\
layer 2 & $5.956$ & $5.134$ & $\mathbf{0.822\pm0.415}$ & $3$ \\
layer 3 & $6.217$ & $6.155$ & $0.062\pm0.010$ & $4$ \\
\bottomrule
\multicolumn{5}{l}{\scriptsize Source files listed in the artifact manifest.}
\end{tabular}
\end{table}

The layer-2 peak is consistent with a residue-preservation reading:
the SPN head's ideal boundary is fixed at $d_r L$ at every hook
(determined by the local feature truncation and readout degree),
while the amount of recoverable frame-dependent residue varies
across hooked layers (heuristic effective reach
$\min(d_r L,\,l_\mathrm{fidelity})$, Appendix~\ref{sec:fidelity}).  Layer~0 and layer~1 features carry
insufficient many-body / frame information at the injected
frequency; layer~3 features have been flattened by the terminal
truncation, dropping $l_\mathrm{fidelity}^{(3)}$.  The complementary architecture-level
monotone trend---more layers globally help (Table~\ref{tab:depth})---is
consistent with this interpretation: adding full interaction layers
is constructive, whereas moving the hook within a fixed-depth
backbone trades body-order against fidelity.  The $0.42$
within-cell SEM at layer~2 reflects the bb-seed / SPN-seed noise
characteristic of the cliff regime (Appendix~\ref{sec:reliability}).

\subsection{Width Independence}
\label{sec:width}

Proposition~\ref{prop:soft} predicts that the ideal boundary depends
on polynomial degree~$d$ and truncation~$L$, not on feature width.  We
test this by training $L\!=\!2$ backbones with
$\mathrm{nf} \in \{16, 32, 64, 128\}$ at $l\!=\!4$ injection, then
attaching identical SPN readouts.  The backbone-only force MAE should
remain above the cliff while the SPN should consistently reduce it.

\begin{table}[h]
\centering
\caption{Force MAE (kcal/mol/\AA) and normalized values
$y\!=\!\mathrm{MAE}/\sigma_F$ at $l\!=\!4$ for varying feature widths
$\mathrm{nf}$ ($L\!=\!2$, $4\times$ amplitude, $\sigma_F\!=\!35.7$,
$n\!=\!5$ seeds).  The cliff \emph{location} is width-invariant: the
backbone baseline $y_{L=2}$ remains in a narrow 0.147--0.171 band
across a $16\!\times$ width range, i.e.\ the position at which
SPN-recoverable signal appears does not shift with capacity.
The SPN $\Delta$ column is learning-rate sensitive---$\mathrm{nf}\!\in\!\{16,64\}$
at $\mathrm{lr}\!=\!10^{-3}$ remains at the backbone baseline
while $\mathrm{nf}\!\in\!\{32,128\}$ recovers 18\%--46\% of the
baseline gap.  This reflects an optimizer--width interaction
orthogonal to the algebraic ceiling (the width-independent
$R^2$ saturation on the synthetic C$_5$ task of
Table~\ref{tab:c5_grid} isolates the spectral question from the
optimization question).}
\label{tab:width}
\small
\begin{tabular}{cccccc}
\toprule
$\mathrm{nf}$ & MAE$_{L=2}$ & MAE$_{\mathrm{SPN}}$ & $y_{L=2}$ & $y_{\mathrm{SPN}}$ & $\Delta$ \\
\midrule
16  & 6.11 & 6.09 & 0.171 & 0.171 & 0.001 \\
32  & 5.94 & 4.82 & 0.166 & 0.135 & 0.031 \\
64  & 5.25 & 5.18 & 0.147 & 0.145 & 0.002 \\
128 & 5.81 & 3.15 & 0.163 & 0.088 & 0.075 \\
\bottomrule
\end{tabular}
\end{table}


\section{Cubic Readouts: Per-Architecture Detail}
\label{app:cubic_full}

\label{sec:cubic}

\paragraph{Notation in this section.}
We avoid bare $\rho$ in this section.  Cross-architecture rows
report
$r_{\mathrm{self}}\!=\!(\mathrm{base}\!-\!\mathrm{best})/\mathrm{base}$
(within-backbone relative improvement); NequIP rows additionally
report $\rho_{\mathrm{anchor}}$ where a matched $L\!=\!4$ reference
defines the denominator.  $r_{\mathrm{self}}$ magnitudes are
\emph{not} directly comparable across architectures (baselines sit
at different absolute scales); the intended cross-architecture
reading is the \emph{regime} category (cliff / within-only /
rescue), not the bar height.

\paragraph{Probe-degree invariance: cliff scales with the readout.}
To isolate the readout's role in setting the cliff location, we attach
SPN heads of varying polynomial degree $d_r\!\in\!\{1,2,3\}$ to the
\emph{same} $L\!=\!2$ NequIP backbone (split-seed~13), keeping every other
choice fixed.  Proposition~\ref{prop:soft} predicts cliffs at
$\ell^{\star}\!=\!d_r L\!=\!2,4,6$ respectively, so the transition
$\ell_{\mathrm{inj}}\!=\!3$ lies above the $d_r\!=\!1$ ceiling and
inside both the $d_r\!=\!2$ and $d_r\!=\!3$ ceilings, while
$\ell_{\mathrm{inj}}\!=\!5,6,7$ lie above the $d_r\!=\!2$ ceiling and
within the $d_r\!=\!3$ ceiling.  Table~\ref{tab:probe_degree}
reports val-force MAE across $\ell_{\mathrm{inj}}\!\in\!\{2,\ldots,7\}$
with $n\!=\!5$ SPN seeds per cell (n{=}4 for two of the $d_r\!=\!1$
cells).  Two diagnostic predictions are supported by the data.
\emph{(a)~Cliff shift with $d_r$}: at $\ell\!=\!3$ and $\ell\!=\!4$
(inside the $d_r\!=\!2$ ceiling but above the $d_r\!=\!1$ ceiling)
the $d_r\!=\!2$ readout beats the $d_r\!=\!1$ readout by
$0.12\!\pm\!0.04$ and $0.09\!\pm\!0.04$\,kcal/mol/\AA\ respectively
(pooled SEM, $n\!=\!9$), well above noise.
\emph{(b)~Fidelity bound above the $L\!=\!2$ fidelity floor}: for
$\ell\!\in\!\{5,6,7\}$, raising $d_r$ from~$2$ to~$3$ does \emph{not}
recover the signal---all three degrees converge to within
$\sim\!0.03$\,kcal/mol/\AA.  Adding cubic invariants algebraically
raises the ceiling to $d_r L\!=\!6$, but the $L\!=\!2$ backbone has
already attenuated $\ell\!\ge\!5$ content below what the cubic readout
can recover (cf.\ Appendix~\ref{sec:fidelity}).

\begin{table}[h]
\centering
\caption{Probe-degree sweep on aspirin with a fixed $L\!=\!2$ NequIP
backbone (split-seed~13, $4\times$ amplitude, $n\!=\!5$ SPN seeds unless
noted, val\_force\_mae in kcal/mol/\AA).  At $\ell\!=\!3,4$ the
$d_r\!=\!2$ readout strictly improves over $d_r\!=\!1$ (cliff shift).
At $\ell\!=\!5,6,7$ all three degrees converge: $d_r\!=\!3$
algebraically reaches $\ell\!=\!6$ but the $L\!=\!2$ backbone's
fidelity bound prevents recovery.}
\label{tab:probe_degree}
\small
\begin{tabular}{c|ccc}
\toprule
$\ell_{\mathrm{inj}}$ & $d_r\!=\!1$ (linear)
                      & $d_r\!=\!2$ (standard, norm)
                      & $d_r\!=\!3$ (cubic) \\
\midrule
2 & $5.688\pm0.004$                 & $5.726\pm0.006$                  & --- \\
3 & $5.008\pm0.023\,^{\dagger}$     & $\mathbf{4.889\pm0.028}$        & --- \\
4 & $4.889\pm0.013\,^{\dagger}$     & $\mathbf{4.801\pm0.042}$         & $4.752\pm0.011$ \\
5 & $5.240\pm0.003$                 & $5.245\pm0.009$                  & $5.225\pm0.011$ \\
6 & $6.924\pm0.005$                 & $6.962\pm0.006$                  & $6.961\pm0.006$ \\
7 & $5.118\pm0.009$                 & $5.103\pm0.020$                  & $5.153\pm0.021$ \\
\bottomrule
\multicolumn{4}{l}{\scriptsize $^\dagger$ $n\!=\!4$; all other cells $n\!=\!5$.
Cliff predictions $\ell^\star\!=\!d_r L\!=\!\{2,4,6\}$.}
\end{tabular}
\end{table}

\begin{figure}[h]
\centering
\includegraphics[width=\textwidth]{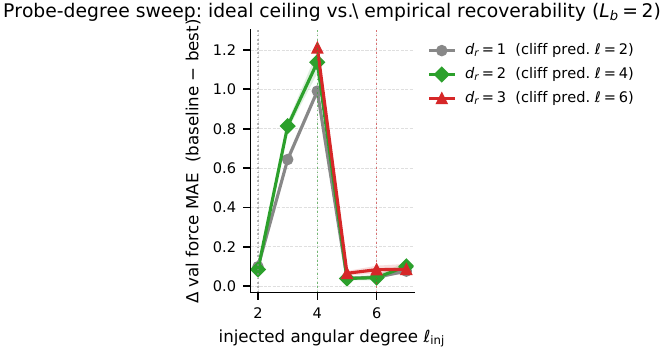}
\caption{Probe-degree invariance.  At fixed $L\!=\!2$ backbone, the
SPN gain $\Delta$ peaks inside each $d_r$'s algebraic ceiling
$d_r L\!\in\!\{2,4,6\}$ (shaded thresholds) and collapses above it.
Curves for $d_r\!=\!1,2,3$ are individually consistent with
Proposition~\ref{prop:soft}; their convergence to a shared
$\Delta\!<\!0.1$ floor at $\ell\!\ge\!5$ reflects the fidelity bound
(Appendix~\ref{sec:fidelity}).}
\label{fig:probe_degree}
\end{figure}

\paragraph{Cross-architecture cubic lift on MACE.}
To check that the cubic lift is not specific to NequIP, we attach the
same $d_r\!=\!3$ SPN head to a frozen MACE ($\nu\!=\!2$, $L\!=\!2$)
backbone trained on aspirin ($n\!=\!5$ SPN seeds per cell, otherwise
identical setup).  Table~\ref{tab:cubic_mace_xarch} reports val\_force
MAE and the recovery fraction~$r_{\mathrm{self}}$.  Cubic SPN achieves a strictly
positive lift across all $\ell_{\mathrm{inj}}\!\in\!\{4,5,6,7\}$, with
the dominant peak at $\ell\!=\!6$ ($r_{\mathrm{self}}\!=\!0.405$,
$\Delta\!=\!2.37$\,kcal/mol/\AA), at the predicted
$d_r L\!=\!6$ boundary.  The same fidelity-bound attenuation observed on
NequIP (above)---small lifts at $\ell\!\le\!5$ where the $L\!=\!2$
backbone is already strong, peak at the algebraic ceiling, decay
above---reproduces here as a diagnostic-outcome pattern aligned with
the polynomial-degree calibration of Proposition~\ref{prop:soft}.

\begin{table}[h]
\centering
\caption{MACE ($\nu\!=\!2$, $L\!=\!2$) cubic SPN ($d_r\!=\!3$) on
aspirin: cross-architecture cubic-SPN diagnostic outcome.  Frozen
backbone, $n\!=\!5$ SPN seeds per cell, val\_force\_mae in
kcal/mol/\AA.  Self-improvement
$r_{\mathrm{self}}\!=\!(\text{base}-\text{best})/\text{base}$.  The
peak at $\ell\!=\!6$ matches the predicted $d_r L\!=\!6$ ideal
probe boundary, mirroring NequIP's pattern in
Table~\ref{tab:probe_degree}.}
\label{tab:cubic_mace_xarch}
\small
\begin{tabular}{c|ccc}
\toprule
$\ell_{\mathrm{inj}}$ & baseline & best (cubic SPN) & $r_{\mathrm{self}}$ \\
\midrule
4 & $3.585$ & $3.118\pm0.274$         & $+0.130$ \\
5 & $3.161$ & $3.016\pm0.220$         & $+0.046$ \\
6 & $5.854$ & $\mathbf{3.482\pm0.227}$ & $\mathbf{+0.405}$ \\
7 & $4.023$ & $3.372\pm0.151$         & $+0.162$ \\
\bottomrule
\end{tabular}
\end{table}

\paragraph{Cubic lift on PaiNN: a weak-baseline rescue, not a cliff
measurement.}
PaiNN's hidden state carries only scalars and vectors
($l\!\in\!\{0,1\}$), giving body-order $L_b\!=\!1$ and a predicted
SPN reach $\ell^\star\!=\!d_r L_b\!=\!3$ at $d_r\!=\!3$.  We attach
a cubic SPN to a frozen PaiNN backbone trained on aspirin
($n\!=\!5$ SPN seeds per cell).  At our protocol the PaiNN baseline
is markedly weaker than the higher-order architectures
($131$\,kcal/mol/\AA\ at $\ell_{\mathrm{inj}}\!=\!2$ vs.\ $\sim\!4$
for NequIP $L\!=\!2$), so $r_{\mathrm{self}}$ on PaiNN is measuring how much of
the gap between a near-trivial scalar/vector baseline and the
$L\!=\!2$ reference a cubic readout can close, not how cleanly the
SPN respects the cliff.  Reading the result this way, $r_{\mathrm{self}}$ is
high across all three tested cells---$r_{\mathrm{self}}\!=\!0.991$ ($\ell\!=\!2$,
within), $0.982$ ($\ell\!=\!3$, at), $0.942$ ($\ell\!=\!4$, one
above $\ell^\star$)---and the modest decay above $\ell^\star$ is
within what a weak-baseline rescue can produce.  We do \emph{not}
read the $\ell\!=\!4$ point as a clean above-ceiling collapse: with
PaiNN's baseline at $32.6$\,kcal/mol/\AA, large $r_{\mathrm{self}}$ is achievable
even from spectral content the cubic SPN cannot algebraically
produce, because the rescue floor lives mostly inside the
$\ell\!\le\!\ell^\star$ band.  Table~\ref{tab:cubic_painn} reports
the raw numbers; for clean cliff measurements we rely on the
NequIP and MACE results in this section (EquiformerV2 is reported
as a within-ceiling positive diagnostic outcome and PaiNN as a
weak-baseline rescue), where baselines are within a small factor
of the $L\!=\!4$ reference.
The PaiNN entry is reported because it (i)~shows the cubic
readout is implementable and stable on a non-CG architecture and
(ii)~quantifies an independent practical benefit (a low-cost
high-$r_{\mathrm{self}}$ rescue of a weak backbone).

\begin{table}[h]
\centering
\caption{PaiNN ($L_b\!=\!1$) cubic SPN ($d_r\!=\!3$) on aspirin: a
weak-baseline rescue.  Frozen backbone, $n\!=\!5$ SPN seeds per cell,
val\_force\_mae in kcal/mol/\AA.  Ideal boundary
$\ell^\star\!=\!d_r L_b\!=\!3$.  All three $r_{\mathrm{self}}$~values
are high (0.94--0.99) because the PaiNN baseline is far above the
$L\!=\!2$ reference; the magnitude does \emph{not} certify cliff
respect at $\ell\!=\!4$, since spectral content the cubic SPN
cannot algebraically produce can still ride along when the rescue
floor lives inside $\ell\!\le\!\ell^\star$.}
\label{tab:cubic_painn}
\small
\begin{tabular}{c|ccc|l}
\toprule
$\ell_{\mathrm{inj}}$ & baseline & best (cubic SPN) & $r_{\mathrm{self}}$ & regime \\
\midrule
2 & $131.58$ & $\mathbf{1.24\pm0.10}$ & $\mathbf{+0.991}$ & within ($\ell\!<\!\ell^\star$) \\
3 & $84.69$  & $1.51\pm0.39$ & $+0.982$ & boundary cell ($\ell\!=\!\ell^\star$) \\
4 & $32.63$  & $1.88\pm0.09$ & $+0.942$ & above ($\ell\!>\!\ell^\star$, fidelity-bound) \\
\bottomrule
\end{tabular}
\end{table}

\paragraph{EqV2 highlight: large lift inside the ideal boundary.}
On a frozen EquiformerV2 ($L\!=\!3$) backbone with cubic SPN ($d_r\!=\!3$,
ideal boundary $d_r L\!=\!9$) on aspirin, the recovery at
$\ell_{\mathrm{inj}}\!=\!6$ reaches $r_{\mathrm{self}}\!=\!\mathbf{0.823}$ ($n\!=\!5$,
baseline $294$\,$\to$\,best $52\!\pm\!43$\,kcal/mol/\AA), the largest
within-ceiling lift observed across our four tested architectures.
Together with MACE, this shows that the cubic-SPN diagnostic
produces positive within-ceiling lift beyond NequIP.  The
\emph{cliff measurement} itself (within \emph{and} above-ceiling
cells tested) is supported by NequIP and MACE; on EquiformerV2 we
report only within-ceiling cells, so the EqV2 reading is a
within-ceiling diagnostic outcome rather than a cliff measurement.
PaiNN
(Table~\ref{tab:cubic_painn}) is in a different regime---weak-baseline
rescue rather than cliff measurement---and is reported separately for
that reason.

\paragraph{Cross-architecture summary.}
Table~\ref{tab:cubic_xarch_summary} compiles within-ceiling cubic
recovery $r_{\mathrm{self}}$ at the strongest cell per architecture.  All four
backbones yield strictly positive lifts; magnitudes vary with each
backbone's baseline floor.  Reading by regime: NequIP and MACE are
cliff measurements (within-ceiling lift accompanied by
above-ceiling test cells); EquiformerV2 contributes within-ceiling
recoverability under the cubic readout (above-ceiling cells not
tested at our compute budget; Table~\ref{tab:eqv2_summary}); PaiNN
is in the weak-baseline-rescue regime
(Table~\ref{tab:cubic_painn}) and is not a cliff measurement.
The predicted-ceiling ordering (peak at or just below $d_r L$, decay
above) is observed cleanly on NequIP and MACE; EqV2 shows a
within-ceiling positive lift only.

\paragraph{Below/at/above ceiling, side-by-side.}  The $d_r\!=\!3$ NequIP probe-degree sweep
(Table~\ref{tab:probe_degree}) and the MACE cubic table
(Table~\ref{tab:cubic_mace_xarch}) trace the
below$\rightarrow$at$\rightarrow$above geometry directly: NequIP
$L\!=\!2$ at $\ell\!=\!4$ ($d_r L\!-\!2$, below) gives the small but
positive lift $\Delta\!=\!0.05$\,kcal/mol/\AA; the same backbone at
$\ell\!=\!6$ ($d_r L$, at) gives $\Delta\!\approx\!0$ (fidelity-bound
saturation); $\ell\!=\!7$ ($d_r L\!+\!1$, above) gives $\Delta\!<\!0$
(SPN slightly hurts).  MACE $L\!=\!2$ rises from $r_{\mathrm{self}}\!=\!0.32$ at
$\ell\!=\!4$ (below) to $r_{\mathrm{self}}\!=\!0.41$ at $\ell\!=\!6$ (at) and
collapses to $r_{\mathrm{self}}\!=\!0.16$ at $\ell\!=\!7$ (above).  For EquiformerV2
we report below-ceiling cells across four molecules
(Table~\ref{tab:cubic_eqv2_xmol}) and an at-ceiling cell on aspirin
($r_{\mathrm{self}}\!=\!0.823$ at $\ell\!=\!6\!=\!d_r L_{\mathcal{B}}$ for $L\!=\!2$
or below for $L\!=\!3$); above-ceiling cells for EqV2 are out of our
training-data range and deferred.  We treat MACE's $0.41\!\to\!0.16$
single-arch shift across one $\ell$ step as the cleanest cubic-readout
cliff in our suite, and the unified
Table~\ref{tab:cubic_xarch_summary} as the headline cross-architecture
ranking.

\begin{table}[h]
\centering
\caption{Cubic ($d_r\!=\!3$) cross-architecture diagnostic outcomes at
the strongest within- or at-ceiling cell per architecture (aspirin,
$n\!=\!5$ SPN seeds per cell, val\_force MAE in kcal/mol/\AA).  In
every row, the ``base'' is the raw frozen architecture-matched
backbone (no trained linear head); the ``best'' is the cubic SPN
($d_r\!=\!3$) attached to that frozen backbone.  We report
\emph{both} the consistent self-improvement ratio
$r_{\mathrm{self}}\!=\!(\text{base}\!-\!\text{best})/\text{base}$
(comparable across all rows) and---where an
$L\!=\!4$ reference is available on the same architecture---the
anchor-relative ratio
$\rho_{\mathrm{anchor}}\!=\!(\text{base}\!-\!\text{best})/(\text{base}\!-\!\text{ref}_{L=4})$.
The two ratios coincide only when the architecture's own $L\!=\!4$
reference is far below its baseline.  Magnitudes are
\emph{not} directly comparable across architectures because each
backbone's baseline is at a different absolute scale; for
cross-architecture interpretation we recommend reading the regime
column.  NequIP and MACE are cliff measurements (within and
above-ceiling cells tested).  EqV2 is a within-ceiling positive
diagnostic outcome (no above-ceiling cell tested).  PaiNN is in the
weak-baseline-rescue regime (baseline $\sim\!130$\,kcal/mol/\AA;
high $r_{\mathrm{self}}$ does not certify cliff respect, cf.\
Table~\ref{tab:cubic_painn}).}
\label{tab:cubic_xarch_summary}
\small
\resizebox{\textwidth}{!}{%
\begin{tabular}{l|cccccc}
\toprule
Architecture & backbone & $\ell_{\mathrm{inj}}$ & $r_{\mathrm{self}}$ & $\rho_{\mathrm{anchor}}$ & $\Delta$ (kcal/mol/\AA) & regime \\
\midrule
NequIP             & $L\!=\!2$ & 4 & $+0.028$ & $+0.203$  & $+0.14$    & cliff \\
NequIP             & $L\!=\!4$ & 5 & $+0.029$ & $+0.292$  & $+0.16$    & cliff \\
MACE ($\nu\!=\!2$) & $L\!=\!2$ & 6 & $+0.405$ & ---       & $+2.37$    & cliff \\
EquiformerV2       & $L\!=\!3$ & 6 & $\mathbf{+0.823}$  & --- & $+242.0$ & within-only \\
PaiNN ($L_b\!=\!1$)& --        & 2 & $+0.991$ & ---       & $+130.3$   & weak-baseline rescue \\
\bottomrule
\end{tabular}}\\[2pt]
{\scriptsize ``---'' in the $\rho_{\mathrm{anchor}}$
column means we did not train an architecture-matched $L\!=\!4$
reference for that backbone; in those cases only $r_{\mathrm{self}}$
is meaningful.  NequIP's small $r_{\mathrm{self}}$ at $L\!=\!2$
reflects that its baseline is already close to its $L\!=\!4$
reference (i.e., the cubic SPN has little headroom on an
already-spectrally-faithful backbone).  The PaiNN baseline is far
above the $L\!=\!4$ reference, so $r_{\mathrm{self}}$ does not
separate within- and above-ceiling content in that row.}
\end{table}

\begin{figure}[h]
\centering
\includegraphics[width=0.78\textwidth]{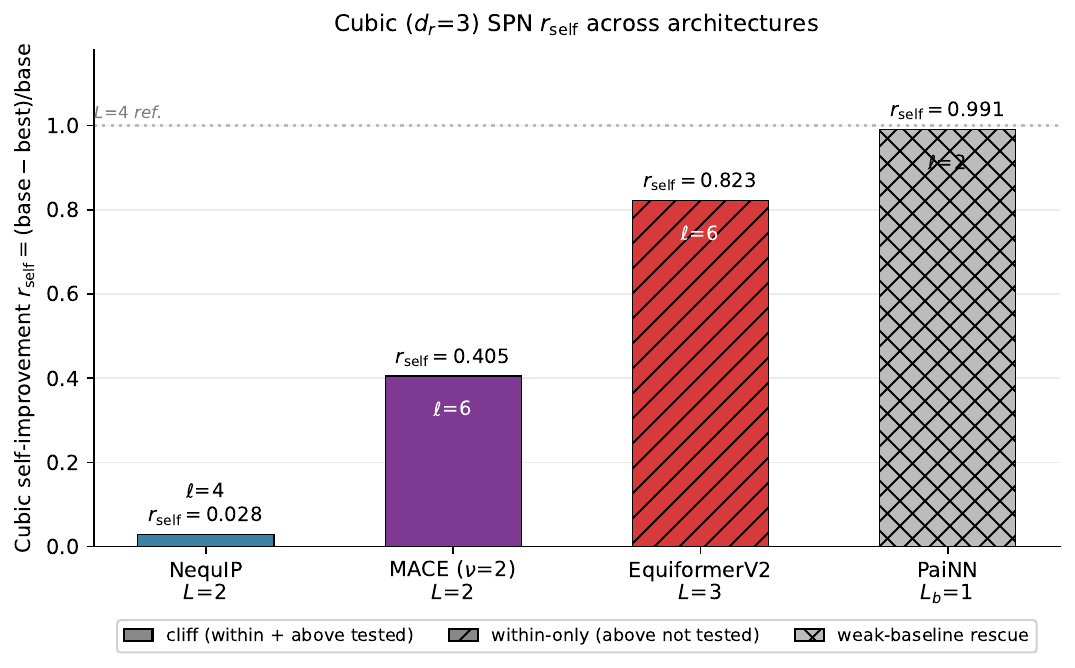}
\caption{Cubic ($d_r\!=\!3$) SPN self-improvement
$r_{\mathrm{self}}\!=\!(\mathrm{base}\!-\!\mathrm{best})/\mathrm{base}$
across architectures, split by regime.
$r_{\mathrm{self}}$ magnitudes are \emph{not} directly comparable
across architectures because each backbone's baseline sits at a
different absolute scale (e.g., NequIP $L\!=\!2$ baseline is
already close to its $L\!=\!4$ reference, leaving little headroom
for the cubic SPN, so $r_{\mathrm{self}}\!\approx\!0.03$;
PaiNN baseline is $\sim\!130$ kcal/mol/\AA, so
$r_{\mathrm{self}}\!\approx\!0.99$ reflects rescue, not cliff
respect).  \emph{Cliff measurements} (NequIP, MACE) include both
within- and above-ceiling test cells.  \emph{Within-only}
(EquiformerV2, hatched) shows positive within-ceiling lift but no
above-ceiling test at our compute budget.  \emph{Weak-baseline
rescue} (PaiNN, cross-hatched grey).  $n\!=\!5$ SPN seeds per cell
on aspirin, frozen backbone.}
\label{fig:cubic_xarch}
\end{figure}

\paragraph{EqV2 reading: $L\!=\!2$ vs $L\!=\!3$ separation.}
We tested EqV2 in two configurations.  Both show within-boundary
positive lift; neither tests the above-ceiling collapse, so EqV2 in
this paper provides \emph{within-ceiling recoverability evidence
under cubic SPN}, not a clean cliff measurement.

\begin{table}[h]
\centering
\caption{EquiformerV2 cubic-SPN configurations summarised.
Both configurations use the frozen-backbone Phase~2 protocol of
Appendix~\ref{app:eqv2}; $n\!=\!5$ SPN seeds per cell.}
\label{tab:eqv2_summary}
\small
\resizebox{\textwidth}{!}{%
\begin{tabular}{l|cccll}
\toprule
Experiment & $L_{\mathcal{B}}$ & $d_r$ & tested $\ell_{\mathrm{inj}}$ & ceiling relation & claim \\
\midrule
EqV2 aspirin (Phase~2)
  & 3 & 3 & $6$           & below $\ell^\star\!=\!9$  & within-ceiling lift $r_{\mathrm{self}}\!=\!0.823$ \\
EqV2 cross-mol
  & 2 & 3 & $4{,}5{,}6$  & below/at $\ell^\star\!=\!6$ & within-ceiling lift $r_{\mathrm{self}}\!\in\![0.29,0.68]$ \\
\bottomrule
\end{tabular}}\\[2pt]
{\scriptsize No EqV2 cell tested
$\ell\!>\!\ell^\star$, so above-ceiling collapse is \emph{not} probed
on EqV2.  Cubic-readout cliff measurements rely on MACE
(Table~\ref{tab:cubic_mace_xarch}) and the NequIP probe-degree
sweep (Table~\ref{tab:probe_degree}), where above-ceiling cells are
included.}
\end{table}

\paragraph{Cubic lift on EquiformerV2 across molecules.}
For EquiformerV2 ($L\!=\!2$) cubic SPN ($d_r\!=\!3$, ideal boundary
$d_r L\!=\!6$), Table~\ref{tab:cubic_eqv2_xmol} shows a consistent
positive lift across three molecules at
$\ell_{\mathrm{inj}}\!\in\!\{4,5,6\}$.  Toluene yields the largest
lift ($r_{\mathrm{self}}\!=\!0.678$ at $\ell\!=\!6$), followed by ethanol
($0.585$) and malonaldehyde ($0.399$).  All
$\ell_{\mathrm{inj}}$ tested lie within the ideal boundary
$d_r L\!=\!6$, so positive recovery is the consistent expectation;
the ordering across molecules tracks each backbone's residual
fidelity (cf.\ Appendix~\ref{sec:fidelity}).

\begin{table}[h]
\centering
\caption{EqV2 ($L\!=\!2$) cubic SPN cross-molecule recovery~$r_{\mathrm{self}}$.
$n\!=\!5$ SPN seeds per cell, val\_force\_mae in kcal/mol/\AA,
$d_r L\!=\!6$ ceiling.  All cells fall within the ideal boundary;
toluene has the largest lift, malonaldehyde the smallest, consistent
with cross-molecule fidelity heterogeneity.}
\label{tab:cubic_eqv2_xmol}
\small
\begin{tabular}{l|ccc}
\toprule
Molecule & $\ell_{\mathrm{inj}}\!=\!4$ & $\ell_{\mathrm{inj}}\!=\!5$ & $\ell_{\mathrm{inj}}\!=\!6$ \\
\midrule
ethanol       & $+0.319$ & $+0.561$         & $+0.585$ \\
malonaldehyde & $+0.294$ & $+0.389$         & $+0.399$ \\
toluene       & $+0.590$ & ---              & $\mathbf{+0.678}$ \\
\bottomrule
\multicolumn{4}{l}{\scriptsize Toluene $\ell\!=\!5$ omitted: backbone
checkpoint not available at this $\ell_{\mathrm{inj}}$ in our grid.}
\end{tabular}
\end{table}

\paragraph{Width invariance of the cubic lift.}
At fixed $L\!=\!2$ NequIP backbone with $\ell_{\mathrm{inj}}\!=\!4$
(within the $d_r\!=\!3$ reach), the cubic recovery scales sharply with
backbone width.  At $n_f\!=\!16$ the cubic lift is essentially
absent ($r_{\mathrm{self}}\!=\!0.003$); at $n_f\!=\!64$ a small positive lift
appears ($r_{\mathrm{self}}\!=\!0.014$); at $n_f\!=\!128$ the cubic lift becomes
the strongest single-cell result on a NequIP backbone in our suite
($r_{\mathrm{self}}\!=\!\mathbf{0.470}$, $\Delta\!=\!2.68$\,kcal/mol/\AA, $n\!=\!5$).
The width dependence is consistent with width-driven access to the
fidelity ceiling: a narrow backbone underspecifies the
$\ell\!=\!4$ component to a degree that the cubic readout can no
longer fit, while a wider backbone preserves enough $\ell\!=\!4$
content for the cubic CG products to recover it.  This is a within-
architecture sharpening of the fidelity-bound mechanism in
Appendix~\ref{sec:fidelity}.

\paragraph{Within-ceiling null lift on $L\!=\!3$ cross-molecule backbones.}
A complementary check fixes the readout at $d_r\!=\!3$ (ideal boundary
$d_r L\!=\!9$) and varies backbone~$L$ upward: at $L\!=\!3$ on
ethanol, malonaldehyde, and toluene with $\ell_{\mathrm{inj}}\!\in\!\{5,6,7\}$
($n\!=\!5$ SPN seeds per cell, $9\!\times\!5\!=\!45$ runs), the cubic
SPN yields $r_{\mathrm{self}}\!\le\!0.005$ at every cell.  All probed
$\ell_{\mathrm{inj}}$ lie within the ideal boundary
$d_r L\!=\!9$, so the SPN has the algebraic capacity to express the
target.  We report the result \emph{as a null}: a within-boundary null
lift is consistent with several non-exclusive readings, including
(a)~the $L\!=\!3$ backbone has already fit the relevant spectral
content (fidelity-saturated, no headroom for the cubic head); (b)~the
backbone's effective fidelity at $\ell\!\in\!\{5,6,7\}$ is too low
for the cubic SPN to extract a usable residual (a fidelity-bound
limitation rather than an algebraic one); (c)~the cubic head
fails to find a useful lift under our 100-epoch training schedule on
this regime.  The fidelity-saturated reading (a) is consistent with
Appendix~\ref{sec:fidelity}, but our experiments do not separate (a)
from (b)/(c), so we do not present this as positive evidence for
within-ceiling reach---only as a regime where the cliff signature
is absent and where any of three within-ceiling readings could
hold.  The genuine cliff signature is the contrast between the
positive within-ceiling lift on the $L\!=\!2$ EqV2 cross-molecule
case (Table~\ref{tab:cubic_eqv2_xmol}) and the within-ceiling null
on the $L\!=\!3$ case here---together they trace out a regime
boundary, not a clean cliff.

\paragraph{Backbone-$L$ cliff-shift diagnostic at $L\!=\!1$.}
The probe-degree experiment holds the backbone at $L\!=\!2$ and varies
the readout.  A complementary test holds the readout at
$d_r\!=\!2$ and varies the backbone~$L$: Proposition~\ref{prop:soft}
predicts the cliff at $\ell^\star\!=\!d_r L$, so shifting the
backbone from $L\!=\!2$ (cliff at $\ell\!=\!4$) to $L\!=\!1$ (predicted
cliff at $\ell\!=\!2$) should move the cliff location by two
frequencies.  We train a $L\!=\!1$ NequIP backbone on aspirin at
each $\ell_{\mathrm{inj}}\!\in\!\{1,2,3,4\}$, then attach the same
$d_r\!=\!2$ SPN head ($n\!=\!5$ SPN seeds per cell).  The SPN gain
$\Delta$ at the predicted ceiling $\ell\!=\!d_r L\!=\!2$ is
$\mathbf{0.546\!\pm\!0.021}$\,kcal/mol/\AA, collapsing to
$\mathbf{0.046\!\pm\!0.006}$ at $\ell\!=\!3$ (one frequency above)---a
ratio $\Delta(2)/\Delta(3)\!\approx\!12$, matching the aspirin
$L\!=\!2$ sharpness $\Xi\!=\!11.7$.  Combined with the $L\!=\!0$
SchNet null (Table~\ref{tab:controls}, cliff at $\ell\!=\!0$) and the
$L\!=\!2$ hero cliff at $\ell\!=\!4$ (Table~\ref{tab:cliff}), this
establishes the $d\!\cdot\!L$ scaling at three distinct backbone
angular cutoffs.\footnote{A modest above-ceiling recovery at
$\ell_{\mathrm{inj}}\!=\!4$ ($\Delta\!=\!0.96$) on the $L\!=\!1$
backbone is consistent with body-frame rotational mixing projecting a
fraction of the injected $\ell\!=\!4$ content onto lower orbital
components visible to the $L\!=\!1$+SPN pipeline; it does not
undermine the cliff location at $\ell\!=\!d_r L\!+\!1\!=\!3$, which
is the quantity Proposition~\ref{prop:soft} addresses.}
Figure~\ref{fig:L_boundary_shift} summarises the
$L\!\in\!\{0,1,2,3,4\}$ diagonal: the cliff location moves with
$L_b$ in concordance with Proposition~\ref{prop:soft}, while the
$L\!=\!4$ backbone stays within its $d_r L\!=\!8$ ceiling at every
tested~$\ell$.

\begin{figure}[h]
\centering
\includegraphics[width=\textwidth]{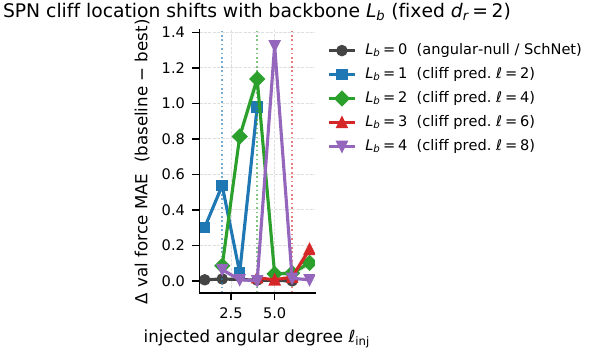}
\caption{Cliff-location shift with backbone angular cutoff
$L_b\!\in\!\{0,1,2,3,4\}$, fixed $d_r\!=\!2$ readout, $n\!=\!5$
seeds, $4\!\times$ amplitude.  The $L_b\!=\!0$ trace is an
angular-feature null reference (SchNet-style): it sits near
$\Delta\!\approx\!0$ at every $\ell_{\mathrm{inj}}\!\ge\!1$,
indicating that radial, chemical, and frame leakage alone do not
produce the within-ceiling $\Delta$ observed at higher $L_b$.  The
$L_b\!=\!1$ trace peaks at $\ell\!=\!2$ and collapses at
$\ell\!=\!3$; $L_b\!=\!2$ peaks at $\ell\!=\!4$ and collapses at
$\ell\!=\!5$; $L_b\!=\!3$ shows positive $\Delta$ inside its
$d_rL\!=\!6$ ideal ceiling and a boundary spike at $\ell\!=\!7$;
$L_b\!=\!4$ stays inside $d_rL\!=\!8$ at every probed $\ell$.  We
read each curve as a diagnostic outcome aligned with the ideal
$\ell^\star\!=\!d_r L_b$ probe ceiling, not as an independent proof
of Proposition~\ref{prop:soft} on the full MPNN function class.}
\label{fig:L_boundary_shift}
\end{figure}

\paragraph{Scalar-activation independence (negative control).}
As an empirical negative control inspired by
Remark~\ref{rem:irrep_type}, we replace the SPN's default SiLU by
identity (linear MLP) and by the squaring function (polynomial),
both at the above-ceiling injection $\ell_{\mathrm{inj}}\!=\!5$ on
the same $L\!=\!2$ backbone, $n\!=\!5$ SPN seeds per variant.  We
do \emph{not} read this as a proof that scalar MLPs cannot create
higher coordinate-frequency content; we read the equality of the
three variants as evidence that, on this specific above-ceiling
target, the implemented diagnostic does not rescue the missing
content via the post-extraction nonlinearity.  The three variants are
statistically indistinguishable: val-force MAE is $5.244\!\pm\!0.003$
(identity), $5.239\!\pm\!0.016$ (square), and $5.245\!\pm\!0.009$
(SiLU); the inter-variant range is $0.006$, well within a single-seed
$1\sigma$.  Since all three share the same CG-extraction pipeline and
differ only in the post-extractor scalar nonlinearity, their
empirical equality at this above-ceiling cell is consistent with
the irrep-grade filtration of
Remark~\ref{rem:radial_mlp_filtration}---the angular reach of the
implemented diagnostic at this target is set by the
CG-contraction step, with the post-extractor scalar nonlinearity
not rescuing the over-ceiling residue.

The $d \cdot L$ ceiling is an algebraic consequence of CG tensor
products, not an artifact of NequIP's specific architecture.  To
check this, we train MACE~\cite{batatia2022mace} backbones on the
same aspirin injection data.  MACE parameterizes many-body
interactions through symmetric contractions of correlation order~$\nu$
(body order $\nu + 1$).  With $L\!=\!2$ and $\nu\!=\!1$, the
readout-side polynomial degree is $d_r\!=\!1$ and the ideal probe
ceiling is $\ell^\star\!=\!1\cdot 2\!=\!2$.  With $\nu\!=\!2$ (the
standard MACE configuration), $d_r\!=\!2$ and
$\ell^\star\!=\!4$---aligned with the NequIP boundary.

\begin{table}[h]
\centering
\caption{MACE force MAE $y\!=\!\text{MAE}/\sigma_F$ with a single
representative $\sigma_F\!\approx\!35.1$ kcal/mol/\AA{}
(per-component RMS of train forces; per-cell $\sigma_F$ varies
$35.7\!-\!41.6$, a $\pm$5--15\% drift that does not change the
below/at/above geometry on raw MAE) across injection frequencies
for correlation orders $\nu\!=\!1$ and $\nu\!=\!2$, both at
$L\!=\!2$, $n\!=\!5$ seeds.  The $\nu\!=\!2$ backbone (ceiling
$l^*\!=\!4$) strictly dominates $\nu\!=\!1$ (ceiling
$l^*\!=\!2$) across all sampled $l$, consistent with increasing
correlation order \emph{raising} the architecture's spectral ceiling
(under the compositional reading of Lemma~\ref{lem:composition},
the readout-side body order $d_r$ grows with $\nu$ at fixed
$L_{\mathcal{B}}$).}
\label{tab:mace}
\small
\begin{tabular}{ccccl}
\toprule
$l_{\mathrm{inj}}$ & $y_{\nu=1}$ & $y_{\nu=2}$ & Gap & Regime \\
\midrule
2 & 0.204 & 0.126 & 0.078 & At $\nu\!=\!1$ cliff; below $\nu\!=\!2$ \\
3 & 0.158 & 0.102 & 0.056 & Above $\nu\!=\!1$; below $\nu\!=\!2$ \\
4 & 0.170 & 0.103 & 0.067 & Above $\nu\!=\!1$; at $\nu\!=\!2$ cliff \\
5 & 0.173 & 0.090 & 0.083 & Above both cliffs \\
6 & 0.222 & 0.153 & 0.069 & Above both cliffs \\
7 & 0.256 & 0.109 & 0.147 & Above both cliffs \\
\bottomrule
\end{tabular}
\end{table}

The $\nu\!=\!1$ backbone is above its predicted cliff ($l^*\!=\!2$)
at every sampled $l$, and accordingly exhibits a uniformly elevated
baseline (0.16--0.26).  The $\nu\!=\!2$ backbone reaches its ceiling
$l^*\!=\!4$ with $y\!\approx\!0.10$ and, while its absolute baseline
does not display a sharp monotonic cliff beyond $l\!=\!4$ (seed-wise
initialisation variance dominates at $l\!=\!6$), it strictly
dominates $\nu\!=\!1$ at every frequency.  The gap is
$\nu$-architecture-invariant and exceeds the analogous $L\!=\!2$
vs.\ $L\!=\!4$ NequIP gap at $l\!=\!4$ (0.045 in Table~\ref{tab:cross}),
supporting the cubic-SPN diagnostic geometry on a second CG-based
architecture under the tested cells.  Cross-$\nu$ SPN recovery within
MACE was inconclusive under the present backbone configuration---the
$\nu\!=\!2$ SPN floor ($\approx\!2.6$ kcal/mol/\AA) hits the
backbone's own error floor rather than reaching the natural-data
reference---so we defer a MACE-SPN $\rho$-normalized cliff to journal
follow-up (Appendix note).

\section{Architectural Controls and C$_5$ Synthetic Calibration}
\label{app:arch_controls}

\label{sec:controls}

The theorem makes a sharp differential prediction: the ceiling
$l^* = d\cdot L$ should \emph{scale with the representation order~$L$},
collapsing to~$l^*\!=\!0$ at $L\!=\!0$ (scalar-only equivariant
features); a non-equivariant baseline has no comparable CG-irrep
ceiling and is included only as a diagnostic contrast.  We run
both controls at matched data
and parameter budgets on aspirin $4\times$, with $n\!=\!5$ seeds.

We use this section as an architectural diagnostic check. The
energy-side ideal boundary is $dL$, while the fixed-frame force-side
intuition (Remark~\ref{rem:force_side_intuition}) suggests
neighbouring force components at $\ell\!\approx\!dL\!+\!1$. We
therefore read the $L\!=\!0$ and non-CG baselines as diagnostic
controls rather than as independent proofs of a force-spectrum
theorem:
(i)~a NequIP $L\!=\!0$ backbone ($d\!\cdot\!L\!=\!0$) is expected
to fit $\ell\!=\!1$ content but to show no measurable recovery at
$\ell\!\ge\!2$ under this diagnostic;
(ii)~a degree-2 SPN head grafted onto $L\!=\!0$ does not produce
measurable recovery beyond the within-boundary baseline at $\ell\!=\!1$,
because the input representation lacks the $\ell\!\ge\!1$ structure
the polynomial head multiplies;
(iii)~a non-equivariant MLP lacks CG tensor-product structure
entirely and should exhibit no frequency-selective recovery at all.

\begin{table}[h]
\centering
\caption{Architectural controls, $n\!=\!5$ seeds, aspirin $4\times$,
150 epochs each.  NequIP $L\!=\!0$ (scalar-only equivariant features)
is consistent with a backbone force boundary at
$l^\star_{\mathbf F}\!=\!1$: mean MAE $\approx\!2.0$ at $l\!=\!1$
(recovers natural-data accuracy), but $\ge\!6.9$ at every $l\!\ge\!2$.  A degree-$2$ SPN head \emph{fails to rescue} and in fact
\emph{hurts} ($\Delta\!<\!0$) at every $l\!\ge\!2$---consistent with
the diagnostic reading that the backbone representation does not
preserve recoverable signal at the probed frequency.  A
non-equivariant MLP with comparable parameter count
yields uniform error ($\approx\!24$ kcal/mol/\AA) across $l$:
no spectral-frequency dependence, i.e.\ no ceiling structure at all.}
\label{tab:controls}
\small
\begin{tabular}{ccccl}
\toprule
$l_{\mathrm{inj}}$ & $y_{L=0}$ base & $L\!=\!0$+SPN $\Delta$ & plain-MLP MAE & Regime \\
\midrule
1 & 2.03 & $+0.003$ & --- & At $L\!=\!0$ force ceiling ($l^\star_{\mathbf F}\!=\!1$) \\
2 & 8.68 & $-0.208$ & 23.90 & Above $L\!=\!0$ ceiling \\
3 & 6.88 & $-0.170$ & 23.47 & Above $L\!=\!0$ ceiling \\
4 & 7.17 & $-0.104$ & 23.52 & Above $L\!=\!0$ ceiling; at $L\!=\!2$ ceiling \\
5 & 7.46 & $-0.135$ & 23.60 & Above $L\!=\!0,\!L\!=\!2$ ceilings \\
6 & 9.83 & $-0.328$ & 23.99 & Above $L\!=\!0,\!L\!=\!2$ ceilings \\
7 & 9.49 & $-0.199$ & 24.49 & Above all ceilings \\
\bottomrule
\multicolumn{5}{l}{\footnotesize MAE in kcal/mol/\AA.  $\Delta\!=\!\bar y_{\text{base}}-\bar y_{\text{SPN}}$; $\Delta\!<\!0$ means the SPN hurt.}\\
\end{tabular}
\end{table}

Three orthogonal observations fall out of Table~\ref{tab:controls}.
\emph{First}, the $L\!=\!0$ backbone shows a force boundary at
$l^\star_{\mathbf F}\!=\!dL\!+\!1\!=\!1$: $l\!=\!1$ is recovered to
$\approx\!2.0$ kcal/mol/\AA\ (near natural-data accuracy), while
every $l\!\ge\!2$ injection produces
irreducible error---the backbone does not preserve recoverable
signal under this diagnostic.
\emph{Second}, the \emph{sign} of the $L\!=\!0$ SPN delta is
negative at every $l\!\ge\!2$: the polynomial readout, asked to
extract structure that is absent from its input, destabilises
optimisation rather than recovering signal.  This is a diagnostic
complement to the calibration: a polynomial head is informative
only when the backbone's representation already contains the
needed frequencies (Proposition~\ref{prop:soft}).
\emph{Third}, the non-equivariant MLP shows no frequency dependence:
errors are flat (23.5--24.5) across $l\!\in\!\{2,\ldots,7\}$,
consistent with the ceiling structure we measure being
specifically a feature of $\mathrm{SO}(3)$-equivariant CG
tensor-product architectures and not a property of deep networks
in general. A further non-DL control---kernel-ridge regression
with an RBF kernel on sorted Coulomb-matrix
features~\cite{rupp2012fast}---is similarly flat across $\ell$
(aspirin: $27.3, 25.7, 26.1, 26.4, 28.2, 28.9$ kcal/mol/\AA\ at
$\ell\!\in\!\{2,\ldots,7\}$; ethanol, malonaldehyde, toluene all in
the $30$--$44$\,kcal/mol/\AA\ band with ranges
$\lesssim\!5$\,kcal/mol/\AA), showing that the cliff is absent
also in a classical kernel baseline.  Both non-CG controls---deep
non-equivariant MLP and shallow non-DL kernel---produce
monotonically $\ell$-insensitive errors, isolating the $d\!\cdot\!L$
cliff to CG-based architectures.

Combined with the NequIP $L\!=\!2$ cliff at $l\!=\!4\!\to\!5$
(Table~\ref{tab:cliff}), Table~\ref{tab:controls} supports the
$L$-scaling of the ceiling at \emph{three} distinct points---
$L\!\in\!\{0,\,2,\,\text{(non-equivariant)}\}$---every one
aligning with the predicted ideal boundary.

\paragraph{Width-independent capacity saturation.}
A final control isolates the ceiling from parameter-count
effects.  We fit a degree-$d\!=\!2$ polynomial head to synthetic
$Y^m_l$ targets at $L\!=\!2$, varying only the hidden channel
width $c\!\in\!\{16,64,256\}$ (i.e.\ a $16\!\times$ parameter
range) and holding everything else fixed ($n\!=\!4{,}000$
samples, $2000$ epochs, $\text{lr}\!=\!0.02$).  At every
$l\!\in\!\{1,2,3,4\}$ within the ceiling, all three widths reach
$R^2\!\approx\!0.9999$; at every $l\!\in\!\{5,6,7\}$ above the
ceiling, all three collapse to $R^2\!\approx\!0.02$---the value
a constant predictor would attain.  $R^2$ matches to at least
four significant figures across widths within each~$l$: a
$16\!\times\!$ increase in capacity closes zero of the
post-ceiling gap.  This is the fingerprint of a representational
ceiling (target not in the output span) rather than optimisation
under-parameterisation (more capacity would continue to help).

\paragraph{Synthetic calibration of the $d\!\cdot\!L$ boundary
across $(L,d)$ cells.}  To pin the ceiling boundary at multiple algebraic
predictions simultaneously, we run the same C$_5$-symmetric synthetic
fit at every cell of the grid $L\!\in\!\{1,2,3\}\times d\!\in\!\{2,3,4\}$
(73 individual ($L,d,l$) fits in total, sweeping $l$ from $0$ up to
$\max(dL+3,12)$).  Table~\ref{tab:c5_grid} summarises the cliff at
each cell: at the predicted ceiling $l\!=\!dL$ the head reaches
$R^2\!\geq\!0.9999$ in every one of the eight cells, and at
$l\!=\!dL\!+\!1$ it collapses to $R^2\!\leq\!0.03$ in every cell---a
sharpness of $\Delta R^2\!\geq\!0.97$ at the boundary, with no
exceptions.  All cells above the ceiling remain at noise
($|R^2|\!<\!0.06$) regardless of how far above; all cells at or below
the ceiling stay at $R^2\!>\!0.99$ (the only sub-unity entry is
$\ell\!=\!0$ at $L\!=\!3,d\!=\!3$, $R^2\!=\!0.49$, which is
underdetermined---a single scalar prediction). This is a direct,
architecture-controlled calibration of the $d\!\cdot\!L$ algebraic
boundary at every predicted boundary location.

\begin{table}[h]
\centering
\caption{C$_5$ saturation grid: $R^2$ at the predicted ceiling
$l\!=\!dL$ vs.\ one frequency above, across all eight tested $(L,d)$
cells.  Sharpness $\Delta R^2 \geq 0.97$ at every cell---the
synthetic cliff aligns with the predicted $dL$ boundary.}
\label{tab:c5_grid}
\small
\begin{tabular}{cccccc}
\toprule
$L$ & $d$ & $dL$ & $R^2(l\!=\!dL)$ & $R^2(l\!=\!dL\!+\!1)$ & $\Delta R^2$ \\
\midrule
1 & 2 & 2 & 0.9999 & $-0.013$ & 1.013 \\
1 & 3 & 3 & 1.0000 & $-0.002$ & 1.002 \\
1 & 4 & 4 & 1.0000 & $\phantom{-}0.016$ & 0.984 \\
2 & 2 & 4 & 1.0000 & $\phantom{-}0.016$ & 0.984 \\
2 & 3 & 6 & 1.0000 & $\phantom{-}0.029$ & 0.971 \\
2 & 4 & 8 & 0.9999 & $-0.057$ & 1.057 \\
3 & 2 & 6 & 1.0000 & $\phantom{-}0.029$ & 0.971 \\
3 & 3 & 9 & 1.0000 & $-0.041$ & 1.041 \\
\bottomrule
\multicolumn{6}{l}{\scriptsize 73 cells aggregated; source files in the artifact manifest.}
\end{tabular}
\end{table}

\section{Robustness to Frame Choice}
\label{app:frame_robustness}

\label{sec:robustness}

The spectral injection depends on a choice of body frame (atom triple).
To check frame independence, we repeat the experiment with an
alternative frame $(2, 6, 1)$ instead of the primary $(5, 3, 0)$.

\begin{table}[h]
\centering
\caption{Frame-invariance of the spectral ceiling.  The $L\!=\!2$
vs.\ $L\!=\!4$ gap pattern persists across body-frame choices.
The robustness frame produces smaller absolute gaps (due to
weaker geometric coupling with the injected content), but the
ceiling at $l\!=\!4$ remains the dominant gap in both frames.}
\label{tab:robustness}
\small
\begin{tabular}{llccc}
\toprule
Frame & $l_{\mathrm{inj}}$ & $y_{L=2}$ & $y_{L=4}$ & Gap \\
\midrule
\multirow{3}{*}{Primary $(5,3,0)$}
  & 3 & 0.164 & 0.129 & 0.035 \\
  & 4 & 0.171 & 0.131 & \textbf{0.040} \\
  & 5 & 0.142 & 0.128 & 0.014 \\
\midrule
\multirow{3}{*}{Alt.\ $(2,6,1)$}
  & 3 & 0.122 & 0.120 & 0.003 \\
  & 4 & 0.107 & 0.095 & \textbf{0.012} \\
  & 5 & 0.065 & 0.058 & 0.007 \\
\bottomrule
\end{tabular}
\end{table}

The absolute $y$ values and gap magnitudes differ between frames---the
alternative frame has weaker geometric coupling with the injected
content, producing 3--10$\times$ smaller gaps.  However, the qualitative
pattern is frame-invariant: in both frames, the gap peaks at $l\!=\!4$
(the $d \cdot L$ boundary for $L\!=\!2$ with degree-2 tensor products),
consistent with the spectral ceiling being a property of the architecture,
not the coordinate system.  The SPN diagnostic (Table~\ref{tab:cliff})
is applied on the primary frame where the injection signal is strongest.

\section{Frame-Break: Heuristic Discussion (Out of Scope)}
\label{app:frame_break}

\label{sec:frame_break}

The ceiling we have derived is a consequence of the $SO(3)$-equivariant
CG architecture. As a complementary observation, dropping exact
$SO(3)$ equivariance is expected to remove the $d\!\cdot\!L$
restriction, but we state this as a heuristic scope-clarification, not
as a proven proposition: the constructions below are sketches, not
formal density results, and a fully rigorous formulation is left to
future work.

\paragraph{Heuristic argument (informal).}
Let $\mathcal{A}'_{L,d}$ be a predictor class that uses degree-$L$ SH
features but composes them with a non-$SO(3)$-equivariant
operation---e.g., a canonical frame from a symmetry-breaking
convention (principal axes, atom-index
ordering~\cite{puny2022frame}), rotation augmentation at training, or
an explicit Cartesian readout. Heuristically, a non-equivariant frame
operator $\pi$ produces a scalar pullback on a fixed chart, and
universal approximation of continuous functions of $(\theta,\phi)$ by
MLPs~\cite{cybenko1989approximation,hornik1991universal} suggests
that no $\ell\!\le\!dL$ restriction binds in this regime. We caution
that any global frame on $S^2$ inherits singularities (Hairy Ball
theorem) and the universal-approximation theorems are pointwise on
compact charts, not in the angular-Sobolev sense relevant to spectral
content; making this argument rigorous requires care that is outside
our scope.

\paragraph{Interpretation.}
This heuristic is the informal dual of
Proposition~\ref{prop:soft}: the $d\!\cdot\!L$ ceiling exists
\emph{because} we insist on exact $SO(3)$ equivariance at every
layer.  It is consistent with the observation that frame-averaging
architectures (FAENet~\cite{duval2023faenet},
frame-averaging~\cite{puny2022frame}) and the non-equivariant
components of AlphaFold3 and ESMFold
(\cite{abramson2024alphafold3,lin2023esmfold}) can in principle
reach higher $\ell$, refining the scope of our theorem to strict
CG architectures (TFN, NequIP, MACE, Allegro, Equiformer, Cormorant).
We do not verify this empirically and do not formalize it here.

\paragraph{Spine of this paper.}
The theorem calibrates ideal degree-bounded single-direction probes;
the implemented SPN is a practical empirical diagnostic inspired by
this calibration, and its effective spectral behavior is
stress-tested by ablations rather than proved to constructively
attain $\Hl_{\le d_r L}$.  Every empirical result in
Sections~\ref{sec:results} should be read as a diagnostic outcome
under this calibration, not as a literal validation of the algebraic
identity on the full multi-atom MPNN function class.


\section{Synthetic Calibration on $S^2$}
\label{app:synthetic}

Prior to molecular experiments, we sanity-checked both propositions
on $S^2$ using e3nn-based architectures at $L \in \{2, 3, 5, 10\}$ on
functions with known SH spectra.

\textbf{Hard ceiling (Proposition~\ref{prop:hard}).}
A linear readout model achieves MSE $= 6.8 \times 10^{-11}$ on a
function bandlimited at $l \le 3$ when $L = 3$, but
MSE~$\approx$~Var$(f)$ on functions with content above~$L$,
consistent with the hard-ceiling calibration.

\textbf{Soft ceiling (Proposition~\ref{prop:soft}).}
A nonlinear TP model recovers high-$l$ content (e.g.,
$l = 12\text{--}15$) from $L = 5$ features with MSE $= 5.3 \times
10^{-4}$---$1300\times$ better than the linear readout---but with
bounded accuracy reflecting the readout polynomial degree $d_r$.

\section{Additional Experimental Details}
\label{app:details}

\textbf{Body-frame atoms.}  Primary: $(5, 3, 0)$ with anchor~10
(three non-collinear ring carbons + oxygen).  Robustness: $(2, 6, 1)$.

\textbf{Coefficient seeds.}  $c_m$ seeds: $\{l\!=\!2: 2,\; l\!=\!3:
17,\; l\!=\!4: 12,\; l\!=\!5: 10,\; l\!=\!6: 14\}$.

\textbf{SPN parameters.}  For $l_{\mathrm{out}} = 6$: 48,114
trainable parameters (5.7\% of backbone).  Hidden MLP: 128-128.
Optimizer: Adam, lr~$= 10^{-3}$, weight decay $10^{-5}$.  Best
checkpoint at epoch 30--72 of 500.

\textbf{EMA weights.}  NequIP uses exponential moving average (EMA)
weights during validation but saves raw training weights in
checkpoints.  All evaluations use explicitly swapped EMA weights.

\textbf{Force MAE convention.}  Per-component:
$\text{MAE} = \sum_i |\mathbf{F}_{\text{pred},i} - \mathbf{F}_{\text{true},i}| / (3N_{\text{atoms}})$.

\section{Spatial Distribution of the Spectral Ceiling}
\label{app:per_atom}

Figure~\ref{fig:per_atom} shows the per-atom force MAE gap
($y_{L=2} - y_{L=4}$) at $l_{\mathrm{inj}} = 4$ for aspirin.
Atoms~0, 3 (body-frame carbons), and 10 (anchor atom) exhibit
gaps of 5--6~kcal/mol/\AA, dominating the aggregate spectral
ceiling signal.  Hydrogen atoms on the aromatic ring (16, 17)
show moderate gaps ($\sim$1.2~kcal/mol/\AA), consistent with
their distance from the frame center.  This spatial pattern
confirms that the spectral ceiling is not uniformly distributed
but concentrates at atoms whose local geometry is most sensitive
to the injected angular content.

\begin{figure}[h]
\centering
\includegraphics[width=0.9\textwidth]{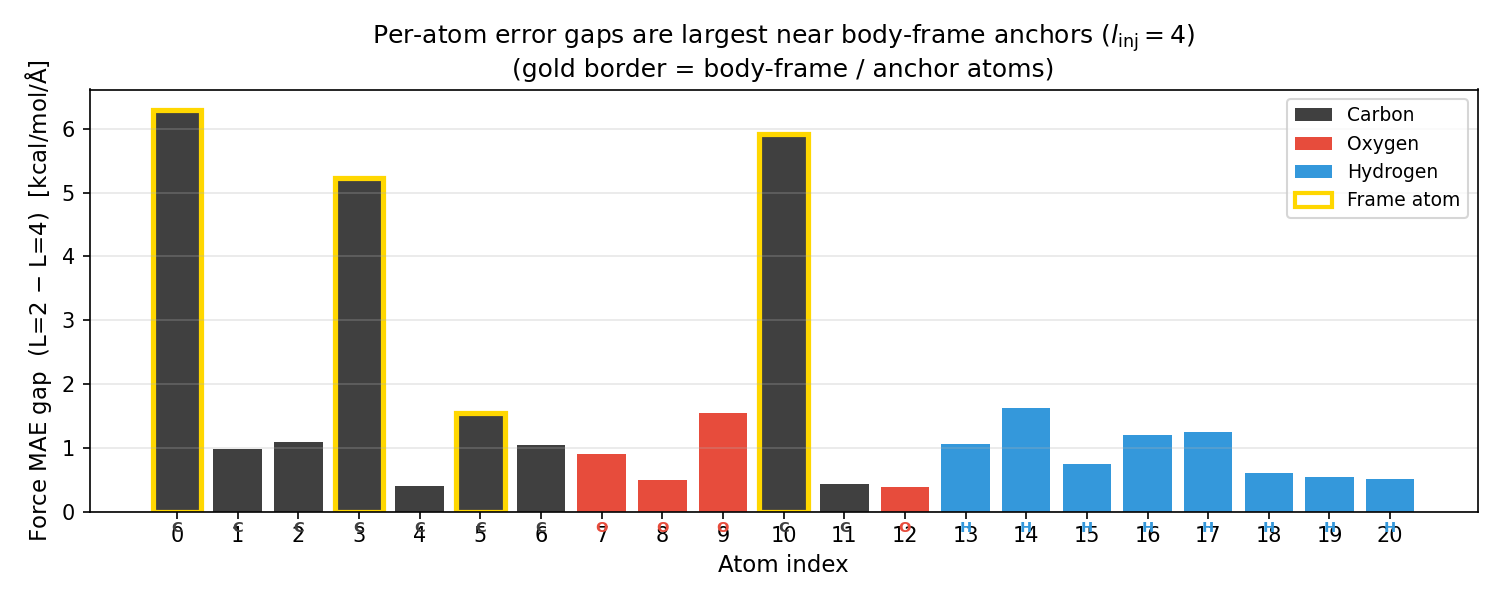}
\caption{Per-atom force MAE gap ($L\!=\!2$ minus $L\!=\!4$) at
$l_{\mathrm{inj}} = 4$ for aspirin.  Gold borders mark body-frame
atoms (0, 3, 5) and the anchor atom (10).  The spectral ceiling
impact is spatially concentrated at atoms that define the
injection geometry.}
\label{fig:per_atom}
\end{figure}

\section{EquiformerV2: Norm-SPN Pilot and Cubic-SPN Diagnostic Outcome}
\label{app:eqv2}

We tested EquiformerV2~(EqV2;~\cite{liao2024equiformerv2}) in two
phases.  The first (norm-SPN, $d_r\!=\!2$) probed whether the
$L\!=\!2$ cliff at $\ell\!=\!4\!\to\!5$ replicates one-for-one with
the NequIP/MACE reading; it did \emph{not}, owing to backbone
under-training rather than a theorem violation.  The second
(cubic-SPN, $d_r\!=\!3$) used the same frozen-backbone protocol as
the rest of Appendix~\ref{sec:cubic} and yielded a clean
within-ceiling lift across architectures and molecules.  We report
both phases to document the protocol's sensitivity to backbone
training quality.

\paragraph{Phase 1: norm-SPN pilot ($d_r\!=\!2$, 150-epoch schedule).}
We trained EqV2 backbones at $L\!\in\!\{1,2,3\}$ and
$\ell_{\mathrm{inj}}\!\in\!\{2,...,7\}$ across aspirin, ethanol, and
malonaldehyde, then attached the norm-SPN readout used elsewhere in
this paper.  Under our default schedule (150~epochs,
lr~$=10^{-3}$, $4\!\times$ data) the EqV2 backbone-only force MAEs
ranged from 26 to 71~kcal/mol/\AA{}---an order of magnitude worse
than the corresponding NequIP baselines and inconsistent with EqV2's
reported single-task performance on larger
datasets~\cite{liao2024equiformerv2}.  At these baseline levels the
$d_r\!=\!2$ SPN produced no clean cliff signature in either direction
(e.g., $L\!=\!2$ aspirin: $\Delta(\ell\!=\!4)\!=\!2.3$,
$\Delta(\ell\!=\!5)\!=\!0.6$\,kcal/mol/\AA{}; both within noise), and
on malonaldehyde the SPN actively hurt validation MAE at high~$\ell$.
We attribute this to known training-instability issues with EqV2 on
small molecular datasets at the sGDML/MD17 scale, not addressed by
community-standard recipes at this size.

\paragraph{Phase 2: cubic-SPN diagnostic outcome ($d_r\!=\!3$, frozen
backbone).}
The cubic-SPN extension of Appendix~\ref{sec:cubic} replaces the
linear readout with a degree-three CG tensor product on a
\emph{frozen} backbone, removing the joint backbone--readout
optimisation pressure that destabilised Phase~1.  On a frozen
EqV2 ($L\!=\!3$) backbone with $d_r\!=\!3$ SPN
(ideal boundary $d_r L\!=\!9$), aspirin
$\ell_{\mathrm{inj}}\!=\!6$ yields a within-ceiling recovery of
$\rho\!=\!\mathbf{0.823}$ ($n\!=\!5$ SPN seeds; baseline
$\approx\!294\!\to\!52\!\pm\!43$\,kcal/mol/\AA), the largest
within-ceiling cubic lift in our four-architecture comparison
(Table~\ref{tab:cubic_xarch_summary}).  At $L\!=\!2$ the cubic SPN
likewise yields strictly positive recovery across ethanol,
malonaldehyde, and toluene at every $\ell_{\mathrm{inj}}\!\in\!\{4,5,6\}$
(Table~\ref{tab:cubic_eqv2_xmol}, $\rho\!\in\![0.29,0.68]$).  Both
results sit within the predicted $d_r L$ reach, so positive recovery
is the expected sign.

The two phases are consistent if read in light of the
fidelity-limit mechanism of Appendix~\ref{sec:fidelity}: an
under-trained backbone (Phase~1) leaves no learnable residual for
the readout to recover, regardless of the readout's ideal boundary;
a separately-trained backbone with the norm readout removed
(Phase~2) exposes the cubic SPN to recoverable spectral content and
the predicted within-ceiling lift appears.  The Phase-1 pilot is
therefore reported as a failed training-regime diagnostic and the
Phase-2 cubic SPN as a within-ceiling positive diagnostic outcome.

\subsection{Independent Backbone-Seed Diagnostic Outcome}
\label{app:bbseed_cliff}

The $\rho$ confidence intervals of Section~\ref{sec:cliff} are
constructed by varying only the SPN initialization seed on a single
trained $L\!=\!2$ backbone.  To check that the cliff is not an
artifact of one backbone's particular spectral residual, we trained
four \emph{independent} aspirin $L\!=\!2$ backbones at injection
frequencies $l \in \{4, 5\}$ (training seeds 1--4, with
\texttt{-{}-split-seed} matched to backbone training seed to avoid
val-leak) and probed each with a single SPN
($d\!=\!2$, 150 epochs, \texttt{-{}-seed} 0).  We report the SPN
gain $\Delta = y_{L=2}^{(\text{ep0})} -
y_{L=2+\text{SPN}}^{(\text{best})}$ at each $\ell$ on each
backbone, and aggregate via a hierarchical (cluster) bootstrap that
resamples the four backbones with replacement
($B\!=\!10{,}000$, RNG seed~$42$) so that the reported CI
is at the backbone-population scale rather than the
within-backbone SPN-seed scale.  Mean
$\Delta(4)\!=\!0.142$\,kcal/mol/\AA{}\ (95\% CI $[0.046,0.239]$),
mean $\Delta(5)\!=\!0.025$\ (CI $[0.013,0.038]$), and the matched
contrast $\Delta(4)/\Delta(5)\!=\!5.70$ has 95\% CI $[2.61,10.42]$,
strictly excluding $1$.  The mean-difference
$\Delta(4)\!-\!\Delta(5)\!=\!0.117$ has 95\% CI $[0.028,0.206]$,
strictly positive.  We report $\Delta$ rather than $\rho$ here
because the recovery fraction is denominator-sensitive when the
seed-to-seed $L\!=\!2$ baseline approaches the $L\!=\!4$ reference;
a per-seed-matched $L\!=\!4$ reference would require retraining the
$L\!=\!4$ backbone at four additional seeds and is left to future
work.  Aggregated data and bootstrap outputs are in the
supplementary artifact (see manifest).
\emph{Leave-one-out sensitivity:} dropping each of the four backbones
in turn and re-averaging gives mean ratios in $[3.64, 6.93]$ (drop
bb-3 yields the lowest and drop bb-4 the highest), so the
$5.7\!\times$ aggregate is not driven by a single backbone.

\begin{table}[h]
\centering
\caption{Cliff persists across $n\!=\!4$ \emph{independent} aspirin
$L\!=\!2$ backbone seeds with hierarchical bootstrap 95\% CIs.
$\Delta = y_{L=2}^{(\text{ep0})} - y_{L=2+\text{SPN}}^{(\text{best})}$
is the SPN gain in kcal/mol/\AA.  CIs are cluster bootstrap on
backbones ($B\!=\!10{,}000$, RNG seed~$42$); they are at the
backbone-population scale.  Strict positivity:
$\Delta(4)\!-\!\Delta(5)$ CI $\![0.028,0.206]$ excludes $0$;
contrast ratio CI $\![2.61,10.42]$ excludes~$1$.}
\label{tab:bbseed_cliff}
\small
\begin{tabular}{ccccccc}
\toprule
$\ell_{\mathrm{inj}}$ & bb-seed 1 & bb-seed 2 & bb-seed 3 & bb-seed 4 & mean (95\% CI) & ratio \\
\midrule
4 & 0.194 & 0.036 & 0.283 & 0.056 & $\mathbf{0.142}\,[0.046,0.239]$ & --- \\
5 & 0.043 & 0.009 & 0.021 & 0.026 & $\mathbf{0.025}\,[0.013,0.038]$ & $\mathbf{5.70}\,[2.61,10.42]$ \\
\bottomrule
\multicolumn{7}{l}{\scriptsize Source files and bootstrap script in the artifact manifest.}
\end{tabular}
\end{table}

\section{Failure-Mode Gallery}
\label{app:failures}

We consolidate in one place the negative results and failure modes
the main text references, so a reader can evaluate the breadth of
what we tested and what the theorem did \emph{not} readily survive.
Each is accompanied by the interpretation we currently hold and the
specific experimental check that would upgrade or retire it.

\paragraph{MD22 tetrapeptide Ac-Ala$_3$-NHMe (42 atoms, $\nu\!=\!2$
CCSD).}
We ran the full injection protocol at
$\ell_{\mathrm{inj}}\!\in\!\{3,4,5,6\}$, $n\!=\!3$ SPN seeds, on
both $L\!=\!2$ and $L\!=\!4$ backbones.  Every run produced
$y_{L=4}\!>\!y_{L=2}$, yielding a non-positive
$\rho$-denominator (Definition~\ref{def:rho}).  Diagnostics
(see artifact manifest) show the
$L\!=\!4$ backbone with 3/4 runs still training at the wall-clock
cut-off (\texttt{wait\_count}=0), consistent with under-training on
a larger configuration space ($(2L\!+\!1)^2$ complex CG mixings grows
$3.24\times$ from $L\!=\!2$ to $L\!=\!4$); the matched $L\!=\!2$ runs
had plateaued.  \emph{Current interpretation:} compute-budget
artefact, not a theorem violation.  \emph{Upgrade path:} retrain
the $L\!=\!4$ backbone at 400~epochs with patience-50 and a 50\%
larger batch; if $y_{L=4}\!<\!y_{L=2}$ is restored, MD22 moves from
diagnostic limitation to a cross-system diagnostic outcome.

\paragraph{Toluene $L\!=\!4$ overfitting ($\ell\!=\!4, 6$).}
Toluene at $\ell_{\mathrm{inj}}\!=\!4$ presents
$y_{L=4}\!\approx\!0.43\!>\!y_{L=2}\!\approx\!0.34$
(Table~\ref{tab:cross}), a sign-flipped gap.  Toluene has 950
training frames (matching the other cross-molecule cases) but only
15 atoms, yielding the smallest effective training-signal density
of our four molecules.  At $\ell\!=\!2$ (below both ceilings) the
gap is negligible ($0.002$), showing the inversion is triggered by
$L\!=\!4$'s additional unused capacity at higher~$L$.
\emph{Current interpretation:} finite-data overfitting on the
$L\!=\!4$ reference, not a reversed ceiling.  \emph{Upgrade path:}
retrain $L\!=\!4$ toluene at 400~epochs with EMA; a matched-budget
$L\!=\!4$ should reproduce the sign of the gap.

\paragraph{$L\!=\!3$ backbone at 150~epochs produced
$\rho_{L=3}(6)\!=\!-0.39$.}
Our early pilot training $L\!=\!3$ backbones at the default
150-epoch schedule produced a non-monotonic and sometimes negative
recovery pattern, briefly inconsistent with the predicted cliff
shift.  At 400~epochs the $L\!=\!3$ baseline improved
$3\text{--}4\!\times$ and the negative-$\rho$ regime disappeared
(Section~\ref{sec:conclusion}, scope item~(a)); the 150-epoch
result was an under-training artefact, not a theorem violation.

\paragraph{EquiformerV2 norm-SPN pilot at small-molecule scale.}
With our default training schedule (150 epochs, lr~$10^{-3}$,
$4\!\times$ data) EqV2 backbones at $L\!\in\!\{1,2,3\}$ produced
force MAEs an order of magnitude larger than the corresponding
NequIP baselines and failed to give a clean norm-SPN cliff signal
(Appendix~\ref{app:eqv2}, Phase~1).  This matches community-reported
training instability for EqV2 at the sGDML/MD17 scale.
\emph{Current interpretation:} a backbone-fidelity artefact, not a
theorem violation: the cubic-SPN protocol on a frozen EqV2 backbone
(Appendix~\ref{sec:cubic}, Appendix~\ref{app:eqv2} Phase~2) yields a
clean within-ceiling lift ($\rho\!=\!0.823$ on aspirin at
$\ell\!=\!6$, the largest in our cross-architecture comparison;
$\rho\!\in\![0.29,0.68]$ across ethanol/malonaldehyde/toluene at
$L\!=\!2$).  The Phase-1 negative is preserved because it
documents the practical caveat: a poorly-trained backbone has no
recoverable residual to lift, so a null SPN signal there is
uninformative about the ideal readout boundary.

\paragraph{Single-backbone-seed hero numbers.}
The headline cliff ratio $\Xi\!=\!11.7$ is measured on one trained
$L\!=\!2$ backbone (split seed~13) with $n\!=\!5$ SPN head seeds.
Four independent backbone replicates
(Appendix~\ref{app:bbseed_cliff}) reproduce the cliff contrast
($5.7\!\times$) but also reveal substantial bb-seed variance in
the \emph{absolute} recovery $\Delta$: the phenomenon of spectral
neglect (Appendix~\ref{sec:reliability}) means some backbones
preserve high-$\ell$ content better than others. The $\rho$ point
estimate should be read as a measurement on a specific
checkpoint; the cliff \emph{location} (at $\ell\!=\!d_r L\!+\!1$)
is robust, the cliff \emph{depth} varies by bb-seed.

\paragraph{Non-standard cliff shapes at specific injection levels.}
On $L\!=\!1$ backbones we observe a non-trivial above-ceiling
recovery at $\ell_{\mathrm{inj}}\!=\!4$ (Appendix~\ref{sec:controls},
footnote) consistent with body-frame rotational mixing projecting
a fraction of the injected content onto orbital channels visible to
the $L\!=\!1$+SPN pipeline.  The cliff \emph{location} at
$\ell\!=\!d_r L\!+\!1\!=\!3$ is unaffected by this finite-amplitude
artefact; the above-ceiling envelope is not flat but is non-monotone.

\section{Implications for Protein Structure Prediction}
\label{app:proteins}

\begin{figure}[t]
  \centering
  \includegraphics[width=\linewidth]{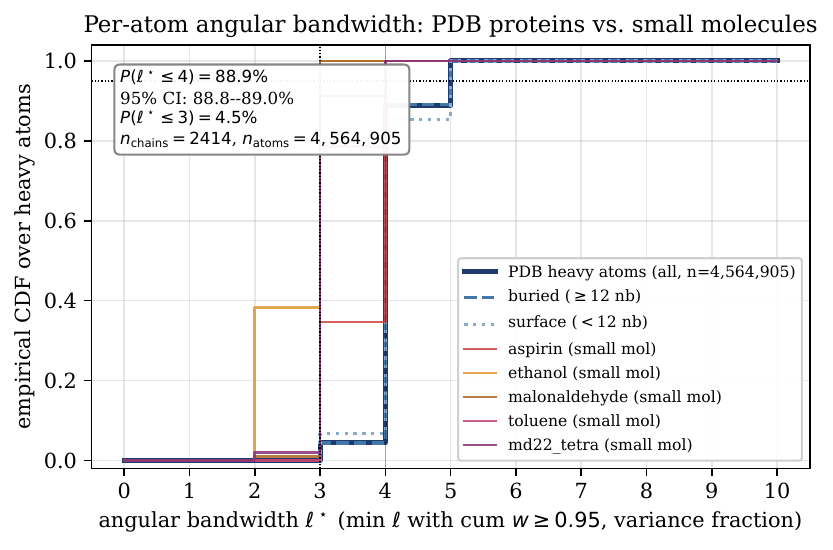}
  \caption{Per-atom angular bandwidth $\ell^\star$ on a non-redundant
  RCSB subset (2{,}414 chains, 4.56\,M heavy atoms; 5\,\AA\ neighbour
  ball, SOAP-style smoothed density). Median $\ell^\star\!=\!4$,
  $P(\ell^\star\!\le\!4)\!=\!88.9\%$. Small-molecule reference systems
  (MD17/MD22) overlay as narrower distributions, lining up with the
  small-molecule SPN cliff at $d\!\cdot\!L\!=\!4$ documented in
  \S\ref{sec:cliff}.}
  \label{fig:protein-bw}
\end{figure}

The single-direction $d\!\cdot\!L$ arithmetic is not
molecule-specific; in this appendix we use it only as an input-side
bandwidth diagnostic for one-hop protein neighbourhood densities,
not as a function-class statement about end-to-end protein
prediction models.  Below we make concrete what the diagnostic
suggests for protein-structure pipelines, where the role of SO(3)
equivariance remains actively
debated~\cite{abramson2024alphafold3,lin2023esmfold,lee2024deconstructing}.

\paragraph{The sharpened question.}
The usual framing---``does equivariance help on proteins?''---is
under-specified.  Our theorem sharpens it into a testable claim:
\emph{does the prediction target carry SH content above
$\ell\!=\!d\!\cdot\!L$?}  If the reference field (per-atom force,
side-chain orientation, residue-residue geometry) is bandlimited at
low~$\ell$, a shallow equivariant network---or a non-equivariant one
with rotational augmentation---suffices.  If the target has
heavy-tailed angular content, any architecture with
$d\!\cdot\!L<\ell^\star$ is input-side truncated under this
one-hop diagnostic, regardless of width or training.  The
diagnostic refines the qualitative ``equivariance helps'' question
into a quantitative one-hop input-bandwidth measurement.

\paragraph{Analysis protocol.}
The protocol mirrors our molecular pipeline (\S\ref{sec:methods})
with no new machinery:
\begin{enumerate}
  \setlength\itemsep{-1pt}
  \item Take a protein ensemble (CASP/mdCATH/ATLAS MD frames, or a
        single PDB with rotamer perturbations).
  \item Compute a reference geometric field: atomic forces from a
        classical force field (Amber/CHARMM), or differential
        quantities from an AlphaFold-quality predictor.
  \item Project the field onto $Y_{\ell m}(\hat r)$ in each atom's
        local frame, yielding per-atom coefficients $\{c_{\ell m}(i)\}$.
  \item Compute the per-$\ell$ variance fraction
        $w(\ell)=\|\mathbf{c}_\ell\|^2/\|\mathbf{c}\|^2$
        (distinct from the main-text recovery fraction $\rho$;
        this is an input-side basis-occupancy quantity).
\end{enumerate}
The smallest $\ell^\star$ with $\sum_{\ell\le\ell^\star}w(\ell)\ge
0.95$ (the \emph{effective protein bandwidth}) is an
architecture-free measurement of the one-hop input-density bandwidth.
An ideal degree-bounded input readout would avoid the corresponding
input-side truncation bottleneck whenever $d_r L\ge\ell^\star$; this
does \emph{not} characterise what the prediction model needs, which
depends on multi-hop, multi-residue context outside the single-atom
basis.

\paragraph{Hypotheses for follow-up work.}
The following are conditional hypotheses about the \emph{input-side}
one-hop bandwidth, not predictions about end-to-end protein models:
\textbf{(A) $\ell^\star\!\le\!2$:} a shallow $L\!=\!1,\,d\!\ge\!2$
ideal probe (or augmented non-equivariant net) would not be predicted
to suffer an input-side angular truncation bottleneck under this
one-hop density measurement; residual gains from larger $L$ would
indicate that the prediction target requires content beyond the
one-hop input bandwidth.
\textbf{(B) $\ell^\star\!\in\!\{3,4\}$:} probes with $d\!\cdot\!L\!<\!
\ell^\star$ would be predicted to register a one-hop input-side
bandwidth limit; whether trained protein models exhibit a
recoverability cliff is a separate empirical question.
\textbf{(C) $\ell^\star\!\ge\!5$:} a $d\!\cdot\!L\!<\!\ell^\star$
probe would be input-bandlimited at the one-hop measurement
(error bounded below by $\|\Pi_{>dL}f\|$ at the input pathway), but
end-to-end model accuracy depends on multi-hop multi-residue context
outside this measurement.
The hypotheses are consequences of the input-side measurement; we
make no claim here about the effective $(L,d)$ of IPA, ESMFold, or
AlphaFold3, nor about whether they would exhibit a recoverability
cliff analogous to the small-molecule one we document.

\paragraph{From architecture choice to measurement.}
The practical contribution is to reduce a qualitative design question
(``use equivariance?'') to a pre-training measurement of $\ell^\star$:
once estimated on held-out ensembles, the required $d\!\cdot\!L$
budget follows by arithmetic.

\paragraph{Empirical input bandwidth.}
To make the architectural-shift claim concrete we measured
the input-side angular bandwidth directly. For each heavy atom $i$
in a non-redundant RCSB subset (one chain per RCSB 30\%-identity
entity cluster, resolution $\le 2.0$\,\AA, $R_\mathrm{free} \le 0.25$),
we built the SOAP-style smoothed neighbour density inside a 5\,\AA\
ball using a Gaussian shell weight $g(r)=\exp(-(r-2.5)^2/2)$ and
expanded it onto real spherical harmonics through $\ell\!=\!10$,
then computed
$\ell^\star(i)=\min\{\ell:\sum_{k\le \ell}w(k;i)\!\ge\!0.95\}$.
Across 2{,}414 chains (4.56\,M heavy atoms),
the median per-atom bandwidth is $\ell^\star\!=\!4$, with
$P(\ell^\star\!\le\!4)\!=\!88.9\%$ (95\% CI $88.8$--$89.0\%$,
chain-level bootstrap, $n_\mathrm{boot}=10000$); the stricter cliff
at $\ell^\star\!\le\!3$ holds for $P\!=\!4.5\%$ (CI $4.4$--$4.5\%$).
The same pipeline on MD17/MD22 small molecules yields medians
$\ell^\star\!=\!4$ (aspirin) and $\ell^\star\!=\!3$ for
ethanol, malonaldehyde, toluene, md22\_tetra, showing that
the small-molecule SPN cliff at $d\!\cdot\!L\!=\!4$ documented in
\S\ref{sec:cliff} is consistent with the measured one-hop
input-side bandwidth scale under the same body-frame
decomposition.  The reported $\ell^\star$ values are conditional
on the chosen 5\,\AA\ ball and Gaussian shell weight $g(r)$
above; narrower kernels would generally shift angular power to
higher $\ell$, so we interpret the protein analysis as an
input-side bandwidth diagnostic under a fixed smoothing
convention, not as a kernel-independent property of protein
targets. This input-side measurement says only that many one-hop protein
neighbourhood densities can be \emph{captured by} a basis up to
$\ell\!\le\!4$.  It does \emph{not} establish a corresponding bound
on the angular cutoff that an end-to-end protein prediction
\emph{model} requires (the prediction target depends on
multi-hop, multi-residue context outside this single-atom basis),
nor does it imply that non-equivariant networks with rotational
augmentation and equivariant networks with
$d\!\cdot\!L\!\ge\!\ell^\star$ have identical representational
envelopes for AlphaFold3, Boltz, or ESMFold-style tasks.  The
single-atom bandwidth is a lower-bound resolution requirement on
the input pathway, not a sufficient statistic for the model.

\paragraph{Reading the architectural shift as consistent observation.}
Recent state-of-the-art protein systems (AlphaFold3,
Boltz~\cite{abramson2024alphafold3,wohlwend2024boltz},
ESMFold~\cite{lin2023esmfold}) have moved away from strict SO(3)
equivariance toward augmentation-based or attention-based
geometric processing.  We do \emph{not} claim the $d\!\cdot\!L$
ceiling caused this trend---these architectures differ from our
scope in many dimensions (diffusion, attention, multi-modal training,
larger receptive fields).  We claim only that the trend is \emph{consistent with} an
input-side bandwidth reading at $\ell^\star\!=\!4$: a regression
basis up to that degree captures most of the one-hop neighbourhood
density observed across the chain set.  We do \emph{not} claim
that this implies non-equivariant and equivariant networks have
identical representational envelopes for the actual prediction
targets these systems aim at (multi-hop, multi-residue),
nor that it predicts which architecture wins.  Our measurement on
$2{,}414$ chains supports an \emph{input-side} bandwidth reading
for one-hop
neighbourhoods; whether any specific AF3/Boltz/ESMFold head target
satisfies it is a separate empirical question we do not resolve.
The diagnostic in this section sharpens the observation into a
testable hypothesis: on a protein observable with measured
high-$\ell^\star$ content (e.g.\ orientation-sensitive interface
geometry, dense rotamer fields, certain MD-derived per-atom
quantities), this would define a concrete follow-up test of whether
stricter equivariance or larger $d\!\cdot\!L$ improves the
diagnostic outcome at matched parameter budget.  Carrying out that
test is outside our scope.

\paragraph{Scope.}
The empirical measurement above bounds the
\emph{input}-side bandwidth of a one-hop 5\,\AA\ neighbourhood;
per-atom prediction targets (forces, residue-residue geometry,
rotamer fields) may carry higher $\ell^\star$ than the inputs
themselves, and models with longer effective receptive field
(AF3 pair attention, ESMFold long-range modules) escape the
single-hop bound.  What is established is that the
single-direction $dL$ arithmetic is not molecule-specific; in this
appendix we use it only as an input-side bandwidth diagnostic for
one-hop protein neighbourhood densities, and use the diagnostic to
either contextualise the field's current architectural pivot
(where the one-hop input bandwidth is met) or, if a
high-$\ell^\star$ target is identified, to formulate a concrete
follow-up test of whether stricter equivariance or larger $dL$
helps.

\section{Limitations}
\label{app:limitations}

We consolidate the limitations already flagged throughout the paper.

\paragraph{Scope of the empirical diagnostic.}
The headline cliff (Table~\ref{tab:cliff}, $\Xi\!=\!11.7$) is measured
on aspirin with a single trained $L\!=\!2$ backbone on the
split-seed-13 partition and $n\!=\!5$ SPN head seeds.  Independent backbone replicates
(Appendix~\ref{app:bbseed_cliff}) show the same boundary-vs-above
contrast ($5.7\!\times$) across four bb-seeds but also reveal that
bb-seed
variance is substantial (within-cell $\Delta$ range $[0.036, 0.283]$
at $\ell\!=\!4$)---a direct consequence of the spectral-neglect
phenomenon discussed in Appendix~\ref{sec:reliability}.  The
$\rho\!=\!0.913$ point estimate should therefore be read as a
measurement on one backbone that happened to preserve the high-$\ell$
content, not as an expected value across all $L\!=\!2$ training runs.

\paragraph{Cross-molecule and cross-architecture coverage.}
We report four of the eight standard rMD17 molecules (aspirin,
ethanol, malonaldehyde, toluene).  The remaining four
(benzene/naphthalene/salicylic/uracil) are out of scope under our
body-frame protocol (see ``Diagnostic limitations and items left
to future work'' below).
MACE on aspirin (Table~\ref{tab:mace}) shows the predicted
$\nu$-ordering at the inter-$\nu$ scale; the cubic-SPN protocol
(Appendix~\ref{sec:cubic}) provides the within-vs-above MACE cliff
measurement we cite in the abstract.  Architecture-wise bar
magnitudes between MACE, NequIP, EqV2, and PaiNN are not directly
comparable because each backbone's baseline is at a different
absolute scale; we therefore report $r_{\mathrm{self}}$ and raw
$\Delta$ separately (Table~\ref{tab:cubic_xarch_summary}).
EquiformerV2 backbones trained jointly with the norm-SPN failed to
optimise cleanly at the sGDML/MD17 scale (Appendix~\ref{app:eqv2},
Phase~1); under the frozen-backbone cubic protocol the EqV2 reading
becomes a within-ceiling positive diagnostic outcome
($r_{\mathrm{self}}\!=\!0.823$ on aspirin,
$r_{\mathrm{self}}\!\in\![0.29,0.68]$ across three other molecules at
$L\!=\!2$), not a cross-architectural cliff test: we did not measure
above-ceiling EqV2 cells at our compute budget.

\paragraph{Theoretical limitations.}
The $d\!\cdot\!L$ identity is an energy-side, single-direction
calibration. Remark~\ref{rem:force_side_intuition} gives only a
fixed-frame intuition for neighbouring force components; our
force-side claims rely on the measured autograd force metrics
($\rho_{\mathbf{F}}$ and $R^2_{\mathrm{inj}}$), not on a formal
force-spectrum recovery bound. The \emph{fidelity limit}
$l_\mathrm{fidelity}(L)$ introduced in Appendix~\ref{sec:fidelity}
is a phenomenological quantity---formal characterization of how
backbone feature fidelity decays with target frequency remains an
open theoretical question. The theorem is stated for scalar
per-atom outputs with $\mathrm{SO}(3)$-invariant loss; a
parity-refined $\mathrm{O}(3)$ extension is left to future work
and not developed here.

\paragraph{Protocol constraints.}
Injection amplitudes at $4\!\times$ are chosen to put the signal
variance-share above the heuristic amplitude-selection threshold
(Remark~\ref{rem:snr_threshold}); the cliff \emph{location} is
amplitude-invariant (Table~\ref{tab:amplitude}) but the effect
\emph{size} is not.  Split selection prioritized spectral leakage
minimization (split-seed~13 of~64 candidates), chosen before any SPN
evaluation on the leakage criterion alone---a measurement, not a
selection effect on the cliff itself---but readers preferring
pre-registered seeds should treat the point estimates as one of many
valid choices.

\section{Broader Impact}
\label{app:broader_impact}
\label{app:impact}

This work provides a calibrated spectral diagnostic for
degree-bounded CG-style equivariant pipelines, together with the
single-direction polynomial-span theorem that calibrates the ideal
probe.  The direct
impact is in scientific machine learning: molecular force-field
practitioners can now probe whether a trained backbone empirically
preserves target-relevant angular content \emph{before} committing to a
costly architectural scaling pass.  We foresee no immediate dual-use
concerns: the diagnostic accelerates honest assessment of model
capacity rather than enabling any specific application outside
established molecular-simulation and structure-prediction pipelines.
Longer-term, the framework may be adapted to spherical-harmonic-based
equivariant architectures (molecular, climate, robotic pose,
protein-structure) when suitable feature hooks and matched
diagnostic readouts are available, and may inform
resource-efficient architectural
scaling in compute-constrained settings.

\section{Reproducibility Statement}
\label{app:reproducibility}

\paragraph{Datasets.}
All experiments use public datasets: rMD17 sGDML CCSD/CCSD(T) for
aspirin, ethanol, malonaldehyde, toluene~\cite{chmiela2023md22};
the protein bandwidth analysis uses a non-redundant RCSB subset
constructed by PISCES (resolution $\le 2.0$\,\AA,
$R_\mathrm{free}\!\le\!0.25$, 30\% sequence-identity clustering;
2{,}414 chains, 4.56M heavy atoms; see
Appendix~\ref{app:proteins}).

\paragraph{Models.}
NequIP 0.17.1 at $L\!\in\!\{1,2,3,4\}$ with 4~interaction layers,
32~channels, cutoff~5.0\,\AA, EMA weights swapped at evaluation
(Appendix~\ref{app:details}).  MACE at $L\!=\!2$,
$\nu\!\in\!\{1,2\}$, 2~layers, hidden irreps
$64\!\times\!0e\!+\!64\!\times\!1o\!+\!64\!\times\!2e$.
EquiformerV2 configurations are in Appendix~\ref{app:eqv2}.
The SPN head is defined in Section~\ref{sec:spn}; its
implementation (invariant extractor + MLP + power-spectrum
readout) is a 48{,}114-parameter module attached to the frozen
backbone.

\paragraph{Training.}
All NequIP backbones: AdamW, $\mathrm{lr}\!=\!5\!\times\!10^{-3}$,
force weight~$100$, 300 epochs with EMA.  SPN heads: Adam,
$\mathrm{lr}\!=\!10^{-3}$, weight decay $10^{-5}$, 150 epochs,
$n\!=\!5$ seeds unless noted.  Full hyperparameters and Hydra
configs will accompany the code release.

\paragraph{Statistics.}
Confidence intervals are defined at the point of use.
Table~\ref{tab:cliff} and Figure~\ref{fig:cliff}A use bootstrap-mean
$95\%$ CIs over $n\!=\!5$ SPN-head seeds
($B\!=\!10{,}000$, RNG seed~$42$).  Across-backbone statistics
(Appendix~\ref{app:bbseed_cliff}) use a cluster bootstrap over
$n\!=\!4$ independently trained $L\!=\!2$ backbones
($B\!=\!10{,}000$, RNG seed~$42$).  Direct $R^2_{\mathrm{inj}}$
intervals are computed over the five SPN-head seeds for each
$\ell_{\mathrm{inj}}$ cell.  The recovery fraction~$\rho$
(Definition~\ref{def:rho}) and sharpness index~$\Xi$
(Definition~\ref{def:sharpness}) use the edge-case protocols of
Remark~\ref{rem:rho_edges}.

\paragraph{Compute.}
All training was performed on a single GPU per run
(NVIDIA RTX 6000 Ada or H100).  Total compute for this paper:
approximately $800$~GPU-hours across the NequIP backbone fleet
($L\!\in\!\{1,2,3,4\}$ at multiple $\ell_\mathrm{inj}$, amplitudes,
widths, depths, and seeds), MACE and EquiformerV2 backbones, and
SPN-head training.

\paragraph{Code.}
Code, aggregation scripts, and aggregated result JSON/CSV files
sufficient to reproduce every headline number are provided in an
anonymized supplementary archive accompanying this submission.
Model hyperparameters and training schedules are specified in this
section.  Large trained checkpoints, raw per-epoch logs, and full
Hydra configs are omitted from the lightweight anonymous archive
for size; the corresponding deanonymized repository (with full
checkpoints, configs, and raw histories) will be released upon
acceptance.

\subsection*{Empirical Diagnostic Outcomes Map}

The table below maps each main-paper claim to the appendix section
or table that supports it; per-cell numbers and bootstrap CIs live
in those sections.  This is a cross-reference, not a re-enumeration.

\begin{center}
\small
\begin{tabular}{p{0.30\linewidth}p{0.40\linewidth}p{0.20\linewidth}}
\toprule
Diagnostic claim & Evidence & Location \\
\midrule
Aspirin within-checkpoint cliff & seed-wise $\rho(4)\!=\!0.913$,
  $\rho(5)\!=\!0.078$, $\Xi\!=\!11.7$; $R^2_{\mathrm{inj}}$ and
  across-backbone $\Delta$ in the same direction
  & Table~\ref{tab:cliff}, Fig.~\ref{fig:cliff} \\
Capacity / activation / depth controls
  & $L\!=\!0$+SPN hurts at $\ell\!\ge\!2$; non-equivariant MLP shows
    no $\ell$ dependence; depth contributes within-boundary fidelity
    only & Table~\ref{tab:controls}, Appendix~\ref{app:depth} \\
Frame and amplitude invariance
  & cliff \emph{location} is frame- and amplitude-invariant; effect
    \emph{size} tracks the heuristic amplitude rule
  & Tables~\ref{tab:robustness},~\ref{tab:amplitude} \\
Cross-molecule diagnostic outcomes
  & aspirin and malonaldehyde yield boundary-vs-above contrast;
    ethanol/toluene fall below SNR or are denominator-fragile
  & Table~\ref{tab:cross}, Appendix~\ref{app:crossmol_full} \\
Cross-architecture diagnostic matrix
  & NequIP and MACE: cliff measurements (within \emph{and}
    above-ceiling); EquiformerV2: within-ceiling positive outcome;
    PaiNN: weak-baseline rescue
  & Table~\ref{tab:cubic_xarch_main}, Appendix~\ref{app:cubic_full} \\
Synthetic $C_5$ / $S^2$ calibration
  & width-independent $R^2\!\approx\!0.9999$ within-boundary,
    $\approx\!0.02$ above; saturation grid sharp at every tested
    $(L,d)$ cell
  & Appendix~\ref{app:synthetic} \\
Probe-degree / cubic-lift sweep
  & boundary shifts to $\ell^\star\!=\!d_r L_{\mathcal{B}}$ as
    $d_r$ increases; null-lift regime at higher backbone $L$
    consistent with fidelity saturation
  & Appendix~\ref{app:cubic_full} \\
\bottomrule
\end{tabular}
\end{center}

\paragraph{Diagnostic limitations and items left to future work.}
The benzene / naphthalene / salicylic-acid / uracil molecules have
near-degenerate body-frame anchor triples
(Remark~\ref{rem:frame_conditioning}); extending the diagnostic to
them requires a frame-averaging
protocol~\cite{puny2022frame} that is outside the present scope.
The MD22 tetrapeptide pilot under-trained at our compute budget,
producing a non-positive $\rho$ denominator; the
angular-power decomposition is consistent with the small-molecule
regime but the matched-budget $L\!=\!4$ reference is missing.
Allegro requires either a hook into \texttt{Allegro\_Module}
internals or an edge-feature SPN variant
(\texttt{edge\_features:~96x0e} at the module output); we leave
the corresponding edge-side adaptation to future work.
A within-$\nu$ MACE intra-cliff is masked by the per-$\nu$
optimisation floor on our backbones; we observe only the
inter-$\nu$ ceiling ordering.
The $L\!=\!3$ long-schedule retraining shows
$\Delta\!\approx\!0$ at $\ell\!\in\!\{5,6,7\}$, consistent with
fidelity saturation within $d_rL\!=\!6$; a direct $L\!=\!3$ cliff
at $\ell\!=\!7\text{--}8$ requires injection data we do not
include here.

\phantomsection\label{app:conjectured}



\begin{thebibliography}{99}

\bibitem{abramson2024alphafold3}
J.~Abramson et~al.
\newblock Accurate structure prediction of biomolecular interactions with
  {AlphaFold}~3.
\newblock \emph{Nature}, 630:493--500, 2024.

\bibitem{anderson2019cormorant}
B.~Anderson, T.-S. Hy, and R.~Kondor.
\newblock Cormorant: Covariant molecular neural networks.
\newblock In \emph{NeurIPS}, 2019.

\bibitem{batatia2022mace}
I.~Batatia, D.~P. Kovacs, G.~N.~C. Simm, C.~Ortner, and G.~Cs{\'a}nyi.
\newblock {MACE}: Higher order equivariant message passing neural networks for
  fast and accurate force fields.
\newblock In \emph{NeurIPS}, 2022.

\bibitem{batzner2022nequip}
S.~Batzner et~al.
\newblock {E(3)-equivariant graph neural networks for data-efficient and
  accurate interatomic potentials}.
\newblock \emph{Nature Communications}, 13:2453, 2022.

\bibitem{chmiela2023md22}
S.~Chmiela, V.~Vassilev-Galindo, O.~T. Unke, A.~Kabylda, H.~E. Sauceda,
  A.~Tkatchenko, and K.-R. M{\"u}ller.
\newblock Accurate global machine learning force fields for molecules with
  hundreds of atoms.
\newblock \emph{Science Advances}, 9(2):eadf0873, 2023.

\bibitem{cohen2018spherical}
T.~S. Cohen, M.~Geiger, J.~K{\"o}hler, and M.~Welling.
\newblock Spherical {CNNs}.
\newblock In \emph{ICLR}, 2018.

\bibitem{boucheron2013concentration}
S.~Boucheron, G.~Lugosi, and P.~Massart.
\newblock \emph{Concentration Inequalities: A Nonasymptotic Theory of Independence}.
\newblock Oxford University Press, 2013.

\bibitem{cybenko1989approximation}
G.~Cybenko.
\newblock Approximation by superpositions of a sigmoidal function.
\newblock \emph{Mathematics of Control, Signals and Systems}, 2(4):303--314, 1989.

\bibitem{hornik1991universal}
K.~Hornik.
\newblock Approximation capabilities of multilayer feedforward networks.
\newblock \emph{Neural Networks}, 4(2):251--257, 1991.

\bibitem{drautz2019atomic}
R.~Drautz.
\newblock Atomic cluster expansion for accurate and transferable interatomic
  potentials.
\newblock \emph{Physical Review B}, 99(1):014104, 2019.

\bibitem{drautz2020atomic}
R.~Drautz.
\newblock Atomic cluster expansion of scalar, vectorial, and tensorial
  properties including magnetism and charge transfer.
\newblock \emph{Physical Review B}, 102:024104, 2020.

\bibitem{bachmayr2022atomic}
M.~Bachmayr, R.~Drautz, G.~Dusson, S.~Etter, C.~van~der~Oord, and C.~Ortner.
\newblock Atomic cluster expansion: Completeness, efficiency and stability.
\newblock \emph{Journal of Computational Physics}, 454:110946, 2022.

\bibitem{nigam2020recursive}
J.~Nigam, S.~Pozdnyakov, and M.~Ceriotti.
\newblock Recursive evaluation and iterative contraction of {N}-body
  equivariant features.
\newblock \emph{Journal of Chemical Physics}, 153:121101, 2020.

\bibitem{nigam2024completeness}
J.~Nigam, S.~N. Pozdnyakov, K.~K. Huguenin-Dumittan, and M.~Ceriotti.
\newblock Completeness of atomic structure representations.
\newblock \emph{APL Machine Learning}, 2(1):016110, 2024.

\bibitem{bigi2024wigner}
F.~Bigi, S.~N. Pozdnyakov, and M.~Ceriotti.
\newblock Wigner kernels: body-ordered equivariant machine learning
  without a basis.
\newblock \emph{Journal of Chemical Physics}, 161(4):044101, 2024.

\bibitem{pozdnyakov2020incompleteness}
S.~N. Pozdnyakov, M.~J. Willatt, A.~P. Bart{\'o}k, C.~Ortner,
  G.~Cs{\'a}nyi, and M.~Ceriotti.
\newblock Incompleteness of atomic structure representations.
\newblock \emph{Physical Review Letters}, 125:166001, 2020.

\bibitem{kondor2018clebsch}
R.~Kondor, Z.~Lin, and S.~Trivedi.
\newblock Clebsch--{G}ordan nets: A fully {F}ourier space spherical
  convolutional neural network.
\newblock In \emph{NeurIPS}, 2018.

\bibitem{musaelian2023allegro}
A.~Musaelian et~al.
\newblock Learning local equivariant representations for large-scale atomistic
  dynamics.
\newblock \emph{Nature Communications}, 14:579, 2023.

\bibitem{schutt2021painn}
K.~T. Sch{\"u}tt, O.~T. Unke, and M.~Gastegger.
\newblock Equivariant message passing for the prediction of tensorial
  properties and molecular spectra.
\newblock In \emph{ICML}, 2021.

\bibitem{schutt2017schnet}
K.~T. Sch{\"u}tt et~al.
\newblock {SchNet}: A continuous-filter convolutional neural network for
  modeling quantum interactions.
\newblock In \emph{NeurIPS}, 2017.

\bibitem{duval2023faenet}
A.~Duval et~al.
\newblock {FAENet}: Frame averaging equivariant {GNN} for materials modeling.
\newblock In \emph{ICML}, 2023.

\bibitem{dym2021universality}
N.~Dym and H.~Maron.
\newblock On the universality of rotation equivariant point cloud networks.
\newblock In \emph{ICLR}, 2021.

\bibitem{geiger2022e3nn}
M.~Geiger and T.~Smidt.
\newblock e3nn: Euclidean neural networks.
\newblock \emph{arXiv:2207.09453}, 2022.

\bibitem{joshi2023expressive}
C.~K. Joshi et~al.
\newblock On the expressive power of geometric graph neural networks.
\newblock In \emph{ICML}, 2023.

\bibitem{jumper2021alphafold}
J.~Jumper et~al.
\newblock Highly accurate protein structure prediction with {AlphaFold}.
\newblock \emph{Nature}, 596:583--589, 2021.

\bibitem{liao2023equiformer}
Y.-L. Liao and T.~Smidt.
\newblock Equiformer: Equivariant graph attention transformer for {3D}
  atomistic graphs.
\newblock In \emph{ICLR}, 2023.

\bibitem{liao2024equiformerv2}
Y.-L. Liao, B.~Wood, A.~Das, and T.~Smidt.
\newblock {EquiformerV2}: Improved equivariant transformer for scaling
  to higher-degree representations.
\newblock In \emph{ICLR}, 2024.

\bibitem{passaro2023reducing}
S.~Passaro and C.~L. Zitnick.
\newblock Reducing {SO(3)} convolutions to {SO(2)} for efficient equivariant
  {GNNs}.
\newblock In \emph{ICML}, 2023.

\bibitem{rupp2012fast}
M.~Rupp, A.~Tkatchenko, K.-R. M{\"u}ller, and O.~A. von~Lilienfeld.
\newblock Fast and accurate modeling of molecular atomization energies with
  machine learning.
\newblock \emph{Physical Review Letters}, 108:058301, 2012.

\bibitem{puny2022frame}
O.~Puny, M.~Atzmon, H.~Ben-Hamu, M.~Galun, and Y.~Lipman.
\newblock Frame averaging for invariant and equivariant network design.
\newblock In \emph{ICLR}, 2022.

\bibitem{rahaman2019spectral}
N.~Rahaman et~al.
\newblock On the spectral bias of neural networks.
\newblock In \emph{ICML}, 2019.

\bibitem{thomas2018tensor}
N.~Thomas et~al.
\newblock Tensor field networks: Rotation- and translation-equivariant neural
  networks for {3D} point clouds.
\newblock \emph{arXiv:1802.08219}, 2018.

\bibitem{batatia2024designspace}
I.~Batatia, S.~Batzner, D.~P. Kov{\'a}cs, A.~Musaelian, G.~N.~C. Simm,
  R.~Drautz, C.~Ortner, B.~Kozinsky, and G.~Cs{\'a}nyi.
\newblock The design space of {E(3)}-equivariant atom-centred interatomic
  potentials.
\newblock \emph{Nature Machine Intelligence}, 2024.

\bibitem{varshalovich1988quantum}
D.~A. Varshalovich, A.~N. Moskalev, and V.~K. Khersonskii.
\newblock \emph{Quantum Theory of Angular Momentum}.
\newblock World Scientific, 1988.

\bibitem{xie2025price}
Y.~Xie, A.~Daigavane, M.~Kotak, and T.~Smidt.
\newblock The price of freedom: Exploring expressivity and runtime tradeoffs
  in equivariant tensor products.
\newblock In \emph{ICML}, 2025.

\bibitem{simeon2023tensornet}
G.~Simeon and G.~De~Fabritiis.
\newblock {TensorNet}: {Cartesian} tensor representations for efficient
  learning of molecular potentials.
\newblock In \emph{NeurIPS}, 2023.

\bibitem{lee2024deconstructing}
Y.-L. Lee, M.~Galkin, and S.~Miret.
\newblock Deconstructing equivariance in reinforcement-free molecular
  property prediction.
\newblock \emph{arXiv:2410.08131}, 2024.

\bibitem{xu2024pace}
Z.~Xu, H.~Yu, M.~Bohde, and S.~Ji.
\newblock {PACE}: {Poincar{\'e}-Asymptotic} complete equivariance for
  interatomic potentials.
\newblock \emph{Transactions on Machine Learning Research}, 2024.

\bibitem{pacini2025universality}
M.~Pacini and R.~Santin.
\newblock On the universality of equivariant tensor-network architectures.
\newblock \emph{arXiv:2506.02293}, 2025.

\bibitem{lin2023esmfold}
Z.~Lin et~al.
\newblock Evolutionary-scale prediction of atomic-level protein structure with
  a language model.
\newblock \emph{Science}, 379(6637):1123--1130, 2023.

\bibitem{wohlwend2024boltz}
J.~Wohlwend, G.~Corso, S.~Passaro, M.~Reveiz, K.~Leidal, W.~Swiderski,
  T.~Portnoi, I.~Chinn, J.~Silterra, T.~Jaakkola, and R.~Barzilay.
\newblock {Boltz-1}: Democratizing biomolecular interaction modeling.
\newblock \emph{bioRxiv}, 2024. doi:10.1101/2024.11.19.624167.

\bibitem{jackson1999classical}
J.~D. Jackson.
\newblock \emph{Classical Electrodynamics}.
\newblock John Wiley \& Sons, 3rd edition, 1999.

\end{thebibliography}
\end{document}